\documentclass[10pt,journal,compsoc]{IEEEtran}
\usepackage{amsfonts}
\usepackage{bm}

%

%
\ifCLASSOPTIONcompsoc
  \usepackage[nocompress]{cite}
\else
  \usepackage{cite}
\fi
%

%
\ifCLASSINFOpdf
\else
\fi
%
%

%
\usepackage{amsmath, amssymb}
\usepackage{amsthm}
\usepackage{xcolor}
%

%
\usepackage{algorithm}
\usepackage{algorithmicx}

\usepackage{titletoc}
\usepackage{tikz}
\usepackage[framemethod=tikz]{mdframed}
\usepackage{thm-restate}
\usepackage{booktabs}
\usepackage{hyperref}
\usepackage{subfigure} 
\usepackage{multirow}
\usepackage{ragged2e}
\usepackage{soul}
\hypersetup{
	colorlinks=true,
	linkcolor=blue,
	urlcolor=orange,
	citecolor=orange,
}

\newtheorem{lemma}{Lemma}
\newtheorem{definition}{Definition}
\newtheorem{proposition}{Proposition}
\newtheorem{remark}{Remark}
\newtheorem{corollary}{Corollary}

\newtheorem{assumption}{Assumption}

\newtheorem*{sketch}{Proof Sketch}

\newcommand{\atopk} {\mathsf{AUTKC}}

\newcommand{\topk} {\mathsf{TOP}\text{-}\mathsf{k}}

\newcommand{\size}[1]{\left\lvert #1 \right\rvert}
\newcommand{\I}[1] {\mathbf{1} \left[ #1 \right]}
\newcommand{\pp}[1] {\mathbb{P} \left[ #1 \right]}
\newcommand{\E}[2] {\mathop{\mathbb{E}}\limits_{#1} \left[ #2 \right]}

\definecolor{Top2}{RGB}{102, 171, 221}
\definecolor{Top1}{RGB}{245, 137, 112}

\definecolor{gt}{RGB}{136, 197, 75}
\definecolor{am}{RGB}{117, 170, 255}

\begin{document}
  %
\title{Optimizing Partial Area Under the Top-k Curve: \\ Theory and Practice}

\author{Zitai~Wang,
        Qianqian~Xu*,~\IEEEmembership{Senior ~Member,~IEEE,}
        Zhiyong~Yang,
        Yuan He, \\
        Xiaochun~Cao,~\IEEEmembership{Senior ~Member,~IEEE,}
        and~Qingming~Huang*,~\IEEEmembership{Fellow,~IEEE}
        \IEEEcompsocitemizethanks{
            \IEEEcompsocthanksitem Zitai Wang is with State Key Laboratory of Information Security (SKLOIS), Institute of Information Engineering, Chinese Academy of Sciences, Beijing 100093, China, and also with School of Cyber Security, University of Chinese Academy of Sciences, Beijing 100049, China (email: \texttt{wangzitai@iie.ac.cn}).
            \IEEEcompsocthanksitem Qianqian Xu is with the Key Laboratory of Intelligent Information Processing, Institute of Computing Technology, Chinese Academy of Sciences, Beijing 100190, China (email: \texttt{xuqianqian@ict.ac.cn}). 
            \IEEEcompsocthanksitem Zhiyong Yang is with School of Computer Science and Technology, University of Chinese Academy of Sciences, Beijing 100049, China (email: \texttt{ yangzhiyong21@ucas.ac.cn}).
            \IEEEcompsocthanksitem Yuan He is with the Security Department of Alibaba Group, Hangzhou 311121, China (e-mail: \texttt{heyuan.hy@alibaba-inc.com}).
            \IEEEcompsocthanksitem Xiaochun Cao is with School of Cyber Science and Technology, Shenzhen Campus of Sun Yat-sen University, Shenzhen 518107, China (email:\texttt{caoxiaochun@mail.sysu.edu.cn}).
            \IEEEcompsocthanksitem Qingming Huang is with the School of Computer Science and Technology, University of Chinese Academy of Sciences, Beijing 101408, China, also with the Key Laboratory of Big Data Mining and Knowledge Management (BDKM), University of Chinese Academy of Sciences, Beijing 101408, China, also with the Key Laboratory of Intelligent Information Processing, Institute of Computing Technology, Chinese Academy of Sciences, Beijing 100190, China, and also with Peng Cheng Laboratory, Shenzhen 518055, China (e-mail: \texttt{qmhuang@ucas.ac.cn}).
            \IEEEcompsocthanksitem *Corresponding authors.
        }
}

\markboth{TO APPEAR IN IEEE TRANSACTIONS ON PATTERN ANALYSIS AND MACHINE INTELLIGENCE}%
{Shell \MakeLowercase{\textit{et al.}}: Bare Demo of IEEEtran.cls for Computer Society Journals}
%

\IEEEtitleabstractindextext{%
    \begin{abstract}
    \justifying
        Top-$k$ error has become a popular metric for large-scale classification benchmarks due to the inevitable semantic ambiguity among classes. Existing literature on top-$k$ optimization generally focuses on the optimization method of the top-$k$ objective, while ignoring the limitations of the metric itself. In this paper, we point out that the top-$k$ objective lacks enough discrimination such that the induced predictions may give a totally irrelevant label a top rank. To fix this issue, we develop a novel metric named partial Area Under the top-$k$ Curve (AUTKC). Theoretical analysis shows that AUTKC has a better discrimination ability, and its Bayes optimal score function could give a correct top-$K$ ranking with respect to the conditional probability. This shows that AUTKC does not allow irrelevant labels to appear in the top list. Furthermore, we present an empirical surrogate risk minimization framework to optimize the proposed metric. Theoretically, we present (1) a sufficient condition for Fisher consistency of the Bayes optimal score function; (2) a generalization upper bound which is insensitive to the number of classes under a simple hyperparameter setting. Finally, the experimental results on four benchmark datasets validate the effectiveness of our proposed framework.
    \end{abstract}
    
\begin{IEEEkeywords}
    Machine Learning, Label Ambiguity, Top-k Error, AUTKC Optimization.
\end{IEEEkeywords}
}
  \maketitle
  \IEEEdisplaynontitleabstractindextext
  \IEEEpeerreviewmaketitle

  \IEEEraisesectionheading{\section{Introduction}\label{sec:introduction}}
\begin{figure}[!t]
    \centering
    \includegraphics[width=0.9\linewidth]{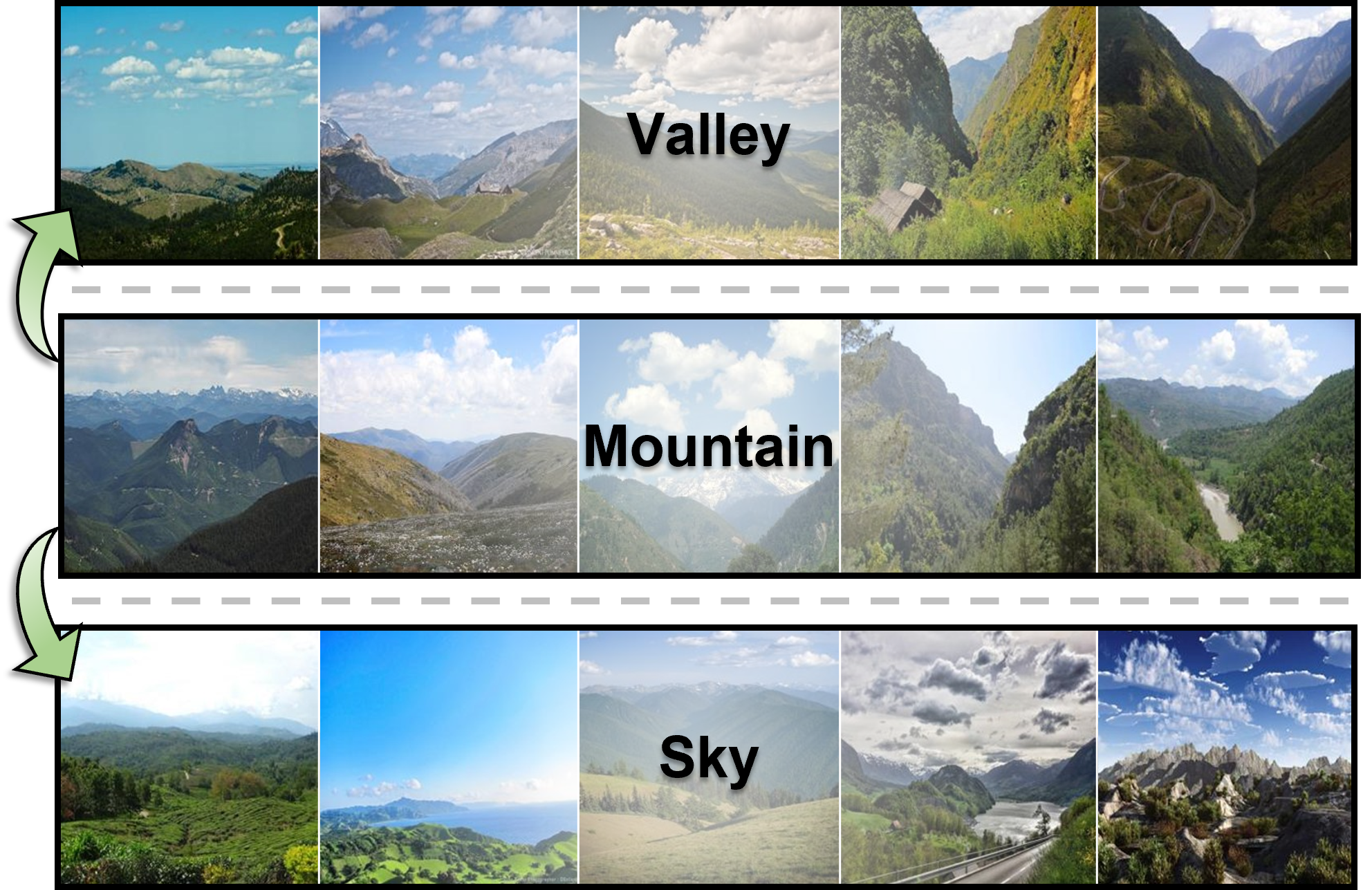}
    \caption{Label ambiguity on the Places-365 dataset \cite{zhou2017places}. On one hand, the semantic similarity between \texttt{Mountain} and \texttt{Valley} makes it easy to make wrong predictions even for humans. On the other hand, many instances are inherently relevant with multiple classes such as  \texttt{Mountain} and \texttt{Sky}.}
    \label{fig:label_ambiguity}
\end{figure}

    \IEEEPARstart{T}{op-1} error, which checks whether the annotated label has the highest score, is a ubiquitous measure in classification tasks. Implicitly, this metric assumes that \textbf{(a)}  the annotated label is exactly the ground-truth, \textit{i.e.}, \textbf{(no label noise \cite{DBLP:journals/tnn/FrenayV14})}, and \textbf{(b)} the ground-truth label is easy to discriminate from the others, \textit{i.e.},  \textbf{(no label ambiguity \cite{DBLP:conf/nips/JinG02,DBLP:conf/cvpr/Lapin0S16})}. To ensure \textbf{(a)}, we can either improve the data quality to clean the possible noises, or  improve the quality of the training model to resist the adverse effect of noises.  Different from the noises, label ambiguity is an intrinsic property of the label semantic. Consequently, it cannot be eliminated by simply improving the quality of data once the label space is fixed. In fact, \textbf{(b)} does not necessarily hold as the number of classes in benchmark datasets grows rapidly \cite{DBLP:conf/cvpr/DengDSLL009,zhou2017places}. As shown in Fig.\ref{fig:label_ambiguity}, when facing numerous classes, there are two major issues that make the ground-truth label ambiguous. One is that the semantic similarity among classes inevitably emerges as the size of label space grows larger, leading to high error rates of the top-1 predicted labels even for humans \cite{DBLP:conf/cvpr/Lapin0S16}. The other is that, when the number of classes is larger, each instance becomes more likely to be relevant with multiple classes, making each relevant class a reasonable prediction. Dealing with label ambiguity is thus a harder problem than label noises. In this sense, our goal is to seek a better solution against label ambiguity in this paper.

    Historically, there exist two solutions against label ambiguity. One solution is to perform multi-label learning \cite{DBLP:conf/nips/JinG02,DBLP:journals/tkde/ZhangZ14}, where each instance is annotated with multiple labels to cover ambiguous labels as much as possible. The other one is to relax the top-1 error and perform top-$k$ optimization \cite{DBLP:conf/nips/LapinHS15,DBLP:conf/cvpr/Lapin0S16,DBLP:journals/pami/LapinHS18,DBLP:conf/icml/YangK20}, where ambiguous labels are allowed to be ranked higher than the ground-truth. In this way, one can approximate the multi-label setting with a single-label dataset. Although the first solution is natural and straightforward, multi-label annotation is generally intractable for large-scale datasets. In practice, single-label datasets \cite{DBLP:conf/cvpr/DengDSLL009,zhou2017places} enjoy much higher scalability than multi-label datasets \cite{DBLP:journals/ijcv/EveringhamGWWZ10,DBLP:conf/eccv/LinMBHPRDZ14} in terms of both the number of classes (hundreds \textit{v.s.} dozens) and the number of instances (million \textit{v.s.} thousands). Considering this fact, this paper focuses on the second solution, with a special interest in directly optimizing the top-$k$ measure. 

    In this direction, existing literature has proposed a series of methods to optimize the top-$k$ objective \cite{DBLP:conf/nips/LapinHS15,DBLP:conf/cvpr/Lapin0S16,DBLP:journals/pami/LapinHS18,DBLP:conf/icml/YangK20}. However, the limitations of the metric itself have not been well understood. In this paper, we argue that \textbf{the top-$k$ error lacks enough discrimination, leaving some fatal errors free from penalty}. Specifically, when the top-$k$ error allows ambiguous labels to be ranked higher, it also relaxes the punishment on  irrelevant labels. As shown in Fig.\ref{fig:label_ambiguity}, even if an irrelevant label \texttt{Dog} is ranked higher than the ground-truth label \texttt{Mountain} and the ambiguous labels \texttt{Valley} and \texttt{Sky}, the top-$k$ error can still be zero, as long as \texttt{Mountain} appears in the predicted top-$k$ list. To fix this issue, this paper takes a further step toward a more discriminating metric. 

    First of all, in Sec.\ref{sec:motivation}, the analysis suggests that \textbf{ looking at the performance at a single fixed $k$ is insufficient to ensure the true prediction quality}. In view of this, a natural solution is to evaluate the performance from a comprehensive perspective. To this end, we design a new measure named partial Area Under the Top-$k$ Curve (AUTKC) in Sec.\ref{sec:metric_autkc}, as an analogy with AUC \cite{DBLP:conf/ijcai/LingHZ03}. The comparison in Sec.\ref{sec:autkc_topk} reveals that AUTKC is strictly consistent and more discriminating than the top-$k$ error, which helps it overcome the limitations of the top-$k$ measure. 

    Considering its advantages, it is appealing to design learning algorithms that optimize AUTKC efficiently. To this end, we have the following abstract formulation of the optimization problem, whose details are shown in Eq.(\ref{eq:op1}).
    \begin{equation}
        \label{eq:abstract}
        \min_{f} \E{ \boldsymbol{x}}{ \sum_{k} \ell_{k} \left( f; \boldsymbol{x}\right)},
    \end{equation}
    where the loss $\ell_{k}$ takes the model prediction $f(\boldsymbol{x})$ as the input. As presented in Sec.\ref{sec:roadmap}, the main challenges are two-fold: \textbf{(C1)} $\ell_{k}$ is discrete and non-differentiable, making the objective difficult to optimize; \textbf{(C2)} the data distribution is generally unavailable, making it impossible to calculate the expectation.

    Targeting \textbf{(C1)}, one common strategy is to replace $\ell_{k}$ with a differentiable surrogate loss. But, what kind of surrogate loss should we select? The basic requirement is Fisher consistency, that is, we should  recover the Bayes optimal solution of Eq.(\ref{eq:abstract}). In view of this, in Sec.\ref{sec:bayes_optimality}, we present a necessary and sufficient property for the Bayes-optimal function induced by Eq.(\ref{eq:abstract}). It suggests that the Bayes optimal score function must \textbf{preserve the ranking of the underlying top-$K$ labels \textit{w.r.t.} the conditional probability $\mathbb{P}(y=i|\bm{x})$}. Since the conditional probability for the irrelevant labels cannot surpass the relevant ones, the optimal solution naturally removes the irrelevant classes from the top list. On top of this, we develop a sufficient condition for the Fisher consistency for AUTKC in Sec.\ref{sec:consist_loss}. Based on this condition, we find that common surrogate losses are consistent with AUTKC as long as the model outputs are bounded, including the square loss, the exponential loss and the logit loss. However, the hinge loss, which is the standard surrogate loss of the top-$k$ optimization methods, is inconsistent.

    In Sec.\ref{sec:generalization}, we further construct an empirical optimization framework for AUTKC to relieve \textbf{(C2)}. Specifically, we turn to minimize the unbiased empirical estimation of the surrogate loss over the training set, thus avoiding the direct calculation of the expectation. A basic question is whether the performance on the training set can generalize well to the unseen data. To answer this question in the context of label ambiguity, we construct the upper bound of the generalization error in terms of the number of classes $C$ and deep neural networks. The result shows that we can obtain a generalization bound that is insensitive to $C$.

Finally, we conduct the experiments on four benchmark datasets in Sec.\ref{sec:experiment}, where the number of classes ranges from 10 to 365. 

In summary, the contribution of this paper is three-fold:
\begin{itemize}
    \item \textbf{New measure}: We provide a novel performance measure named AUTKC to handle the label ambiguity in large-scale classification. 
    \item \textbf{Theoretical guarantee}: The proposed framework for AUTKC optimization is supported by consistent surrogate losses and a generalization bound insensitive to the number of classes.
    \item \textbf{Empirical validation}: The empirical results not only show the superiority of the proposed framework, but also validate the theoretical results.
\end{itemize}

\section{Related Work}
\subsection{TOP-k Optimization}
The early literature on top-$k$ optimization focuses on the binary scenarios where only \ul{the top-ranked  instances} are of interest, such as retrieval (relevant/irrelevant) \cite{DBLP:journals/ftir/Liu09} and recommend (like/dislike) \cite{DBLP:conf/kdd/RendleMNS09}. These methods minimize a pairwise ranking loss that punishes more on the errors related to the top-ranked instances \cite{DBLP:conf/sdm/Agarwal11,DBLP:journals/jmlr/Rudin09,DBLP:conf/icml/UsunierBG09}.

As the scale of classes grows rapidly in modern benchmark datasets \cite{DBLP:conf/cvpr/DengDSLL009,zhou2017places}, the semantic overlap between classes inevitably emerges. In this light, the top-$k$ error has been extended to multiclass classification \cite{DBLP:conf/cvpr/PerronninAHS12,DBLP:journals/ijcv/RussakovskyDSKS15,DBLP:conf/cvpr/HeZRS16,zhou2017places}. Similar to the binary case, this metric evaluates the model performance on the top-ranked items. The difference lies in it measuring \ul{the ranking of the ground-truth label}, rather than that of instance. The early work in this direction adopts an alternative approach to minimize the top-$k$ error. For example, \cite{DBLP:journals/ijcv/McAuleyRC13} models the classes missing from the annotation by the structured learning technique with latent variables. \cite{DBLP:conf/icml/RossZYDB13} analyzes the problem from the perspective of submodular reward functions. \cite{DBLP:conf/nips/SwerskyTAZF12} proposes a probabilistic method explicitly modeling the total number of objects in an image. Besides, \cite{DBLP:conf/iccv/GuillauminMVS09,DBLP:journals/pami/MensinkVPC13} use metric learning to find potential labels, especially when the number of classes is dynamic.

Recently, more efforts focus on directly minimizing the top-$k$ error. To this end, \cite{DBLP:conf/nips/LapinHS15} relaxes the multiclass hinge loss to penalize the top-$k$ highest prediction. Although this approach is intuitive, the induced loss functions are not top-$k$ consistent \cite{DBLP:conf/cvpr/Lapin0S16}. In other words, minimizing these losses does not guarantee an ideal top-$k$ performance. To this end, \cite{DBLP:conf/cvpr/Lapin0S16} analyzes the calibration property of some top-$k$ loss functions truncated from the multiclass hinge loss and the cross-entropy loss. The theoretical results are further extended to multi-label classification \cite{DBLP:journals/pami/LapinHS18}. Most recently, \cite{DBLP:conf/icml/YangK20} points out that the condition for top-$k$ calibration proposed by \cite{DBLP:conf/cvpr/Lapin0S16} is invalid when ties exist in the prediction. To fix this issue, \cite{DBLP:conf/icml/YangK20} develops a necessary and sufficient property for top-$k$ optimality named top-$k$ preserving. Based on this property, a top-$k$ consistent hinge-like loss is finally established.

    Compared with existing efforts on the optimization method of the top-$k$ objective, this paper focuses on the limitations of the metric itself, whose details are shown in Sec.\ref{sec:motivation}. To solve this problem, we propose a novel metric named AUTKC. Moreover, an efficient framework for AUTKC optimization is also established.

\subsection{Label Ranking}
\label{sec:related_label_ranking}
    The setting we discuss is naturally associated with the label ranking problem, where the predictions are also evaluated by the ranking of labels. Specifically, given the label space $\mathcal{Y}$, label ranking aims to assign each instance $\boldsymbol{x}$ with \ul{the correct ranking of all the labels}, that is, a complete/partial, transitive, and asymmetric relation $\succ_{\boldsymbol{x}}$ defined on $\mathcal{Y}$, where $i \succ_{\boldsymbol{x}} j$ means that the label $i$ precedes the label $j$ in the ranking associated with $\boldsymbol{x}$. According to the taxonomy established by \cite{DBLP:journals/jcp/ZhouLYHL14}, label ranking methods can be divided into four categories: the ones decomposing the original problem to multiple simple objectives such as pointwise function \cite{DBLP:conf/nips/Har-PeledRZ02,DBLP:conf/nips/DekelMS03} and pairwise ranking loss \cite{DBLP:journals/ai/HullermeierFCB08,DBLP:books/daglib/p/FurnkranzH10a}; probabilistic methods including tree-based model \cite{DBLP:conf/icml/ChengHH09,DBLP:books/daglib/p/YuWL10,DBLP:journals/inffus/AledoGM17,DBLP:journals/es/SaSKC17,DBLP:conf/alt/ClemenconKS18}, Gaussian mixture model \cite{grbovic2012learning} and structured learning \cite{DBLP:conf/nips/KorbaGd18}; the ones based on similarity \cite{DBLP:conf/dis/AiguzhinovSS10,DBLP:conf/pakdd/SaSJAC11,DBLP:conf/icann/RibeiroDSK12}; and the rule-based ones \cite{DBLP:conf/ipmu/GurrieriSFGS12,DBLP:journals/inffus/SaASJK18}. Please refer to the surveys for more details \cite{DBLP:books/daglib/p/VembuG10,DBLP:journals/jcp/ZhouLYHL14}.

    Compared with label ranking, top-$k$ optimization follows the setting of classification, where each instance $\boldsymbol{x}$ is assigned with a single label $y \in \mathcal{Y}$, \textit{i.e.}, the ground-truth label. As a result, the common measures used in label ranking become invalid, such as Spearman’s rank \cite{spearman1904proof} and Kendall’s tau \cite{kendall1948rank}. And top-$k$ optimization only considers \ul{the ranking associated with the ground-truth label}, \textit{i.e.}, ensuring $y \succ_{\boldsymbol{x}} j, j \in \mathcal{Y}$. Concretely, top-$k$ optimization requires that $| \{j \in \mathcal{Y}: j \succ_{\boldsymbol{x}} y\} | < k$, where $\size{\cdot}$ represents the cardinality of a set. Besides, our analysis in Sec.\ref{sec:bayes_optimality} shows that optimizing AUTKC encourages the prediction to preserve \ul{the ranking of the underlying top-$K$ labels}. In this sense, AUTKC optimization essentially aims to recover the ranking of partial labels under a restricted condition where only the ground-truth label is available.

\subsection{AUC Optimization}
\label{sec:related_auc}
Our proposed framework is also related to AUC optimization, where the optimization objective is also a pairwise ranking loss. Specifically, AUC, the Area Under the receiver operating characteristic (ROC) Curve, measures the probability that the positive instances are ranked higher than the negative ones, with the assumption that the possibility to observe ties in the comparisons equals zero \cite{DBLP:conf/ijcai/LingHZ03}. Being insensitive to the label distribution, AUC has become a popular metric in the class-imbalanced applications such as disease prediction \cite{DBLP:conf/isbi/ZhouGCGFTYZ020} and rare event detection \cite{DBLP:conf/cvpr/LiuLLG18,DBLP:conf/kdd/LiuZASLFHT20,DBLP:conf/aaai/WuH0B0C20}. 

Traditional research on AUC optimization targets the whole area under the ROC curve. At the early stage, the major studies in this direction focus on the off-line setting \cite{DBLP:conf/icml/HerschtalR04,DBLP:conf/pkdd/CaldersJ07,DBLP:journals/jmlr/FreundISS03,DBLP:conf/kdd/Joachims06,DBLP:journals/jmlr/ZhangSV12}. As the scale of datasets increases rapidly, it becomes infeasible to optimize the pairwise loss in a full-batch manner. For this reason, more studies explore the extension of AUC optimization methods in the online setting \cite{DBLP:conf/icml/ZhaoHJY11,DBLP:conf/icml/GaoJZZ13,DBLP:conf/nips/YingWL16,DBLP:conf/icml/NatoleYL18,DBLP:journals/fams/NatoleYL19,DBLP:conf/aaai/GuHH19,yang2020stochastic,9200544}. Besides, the theoretical guarantee of the learning framework is another appealing topic, including generalization bound \cite{DBLP:journals/jmlr/AgarwalGHHR05,usunier2005data,DBLP:conf/nips/UsunierAG05,DBLP:journals/jmlr/RalaivolaSS10,DBLP:journals/jmlr/WangKPJ12} and consistency property \cite{DBLP:journals/jmlr/Agarwal14,DBLP:conf/ijcai/GaoZ15}. Recently, more literature focuses on the partial area under the ROC curve since only the order of the top-ranked instances are of interest in many practical applications \cite{DBLP:conf/kdd/NarasimhanA13,DBLP:conf/icml/NarasimhanA13,DBLP:journals/neco/Narasimhan017,DBLP:conf/icml/0001XBHCH21}. In this sense, partial AUC optimization is related to top-$k$ optimization in binary scenarios.

Compared with partial AUC optimization, our proposed framework focuses on the ranking of multiple labels. Moreover, we assume that ties exist in the scores. These two differences will lead to totally different theoretical results, whose details are shown in Sec. \ref{sec:surrogate} and Sec. \ref{sec:generalization}.

\begin{table}[t]
    \renewcommand{\arraystretch}{1.2}
    \caption{Some important Notations used in this paper.}
    \label{table:notation}
    \centering
    \begin{tabular}{ll}
        \toprule
        Notation & Description \\
        \midrule
        $C$ & the number of classes \\
        $K$ & the maximum number of relevant labels \\ 
        $\mathcal{X}, \mathcal{Y}$ & the input space and the label space \\
        $\mathcal{D}$ & the joint distribution defined on $\mathcal{Z} = \mathcal{X} \times \mathcal{Y}$ \\
        $\eta(\boldsymbol{x})_y$ & the conditional distribution $\pp{y | \boldsymbol{x}}$ \\
        $f$ & the score function mapping $\mathcal{X}$ to $\mathbb{R}^{C}$ \\
        $\mathcal{F}$ & the set of score functions \\
        $L$ & the loss function mapping $\mathcal{Y} \times \mathbb{R}^C$ to $\mathbb{R}_{+}$ \\
        $\pi_{\boldsymbol{s}}(y)$ & the index of $s_y$ when sorting $\boldsymbol{s} \in \mathbb{R}^C$ descendingly \\
        $s_{[k]}$ & the $k$-th greatest entry of $\boldsymbol{s} \in \mathbb{R}^C$  \\
        $(s_{\setminus y})_{[k]}$ & the $k$-th greatest entry of $\boldsymbol{s} \in \mathbb{R}^C$ except $s_y$ \\
        $\ell_{0-1}, \ell$ & the 0-1 loss and its surrogate loss \\
        $\mathcal{R}_K(f)$ & the original $\atopk$ risk  \\
        $\mathcal{R}_K(f^*)$ & the optimal $\atopk$ risk\\
        $\mathsf{RP}_K$ & top-$K$ ranking-preserving property (Def.\ref{def:rp}) \\
        $\mathcal{R}_K^{\ell}(f)$ & the surrogate $\atopk$ risk \\
        $\mathcal{S}$ & the dataset sampled from $\mathcal{D}$ \\
        $\hat{\mathcal{R}}_K^{\ell}(f, \mathcal{S})$ & the empirical $\atopk$ risk \\
        $\mathfrak{G}_\mathcal{S}(\mathcal{F})$ & the empirical Gaussian Complexity \\
        $\Phi(\mathcal{S}, \delta)$ & the generalization error of $\atopk$ \\ 
        $\mathfrak{C}(\Theta, \epsilon, d)$ & covering number defined on the metric space $(\Theta, d)$ \\
        $\mathcal{F}_{\beta, \nu}$ & the set of neural networks we discuss in Sec.\ref{sec:bound_cnn} \\
        $\mathsf{vec}(\cdot)$ & the vectorization operation \\
        $\mathsf{mt}(\cdot)$ & the operator matrix \\
        \bottomrule
    \end{tabular}
\end{table}

\begin{figure*}[t]
    \centering
    \includegraphics[width=0.9\linewidth]{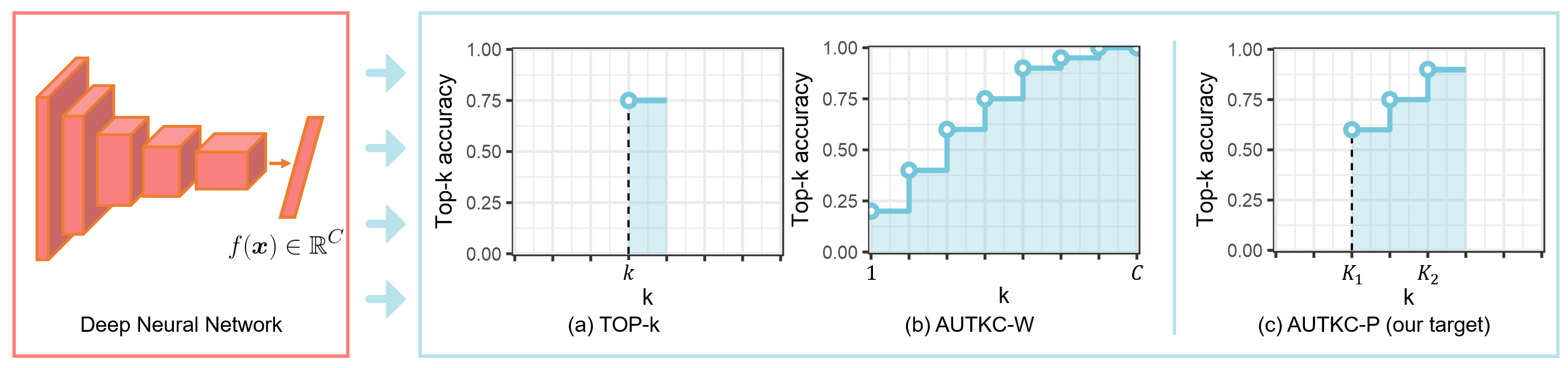}
    \caption{Comparisons of $\topk$ and $\atopk$: (a) $\topk$ focuses on the performance at a single point of the $\topk$ curve; (b) $\mathsf{AUTKC}\text{-}\mathsf{W}$ considers the entire area under the $\topk$ curve; (c) $\mathsf{AUTKC}\text{-}\mathsf{P}$ emphasizes the partial area with $k$ ranging in $[K_1, K_2]$. We denote $\mathsf{AUTKC}\text{-}\mathsf{P}$ as $\atopk$ in the rest discussion for convenience.}
    \label{fig:metrics}
\end{figure*}

\section{Preliminaries}
\label{sec:pre}
In this section, we first describe the notations of the top-$k$ error in Sec.\ref{sec:top_k_metric}. Then, a detailed analysis on the limitations of this metric will be presented in Sec.\ref{sec:motivation}. Motivated by this, we present the definition of AUTKC in Sec.\ref{sec:metric_autkc} and further reveal its advantages over the top-$k$ error in Sec.\ref{sec:autkc_topk}.

\subsection{Standard TOP-k Metric}
\label{sec:top_k_metric}
    In multi-class classification, one generally assumes that the samples are drawn \textit{i.i.d.} from a product space $\mathcal{Z} = \mathcal{X} \times \mathcal{Y}$, where $\mathcal{X}$ is the input space and $\mathcal{Y} = \{1, \cdots, C\}$ is the label space. Let $\mathcal{D}$ be the joint distribution defined on $\mathcal{Z}$, and $\eta(\boldsymbol{x})_y = \pp{y | \boldsymbol{x}}$ denotes the conditional probability of instance $\boldsymbol{x}$ according to distribution $\mathcal{D}$. We assume that there exist no ties in conditional probability $\eta(\boldsymbol{x}) \in \mathbb{R}^C$. In other words, we have $\eta(\boldsymbol{x})_{[1]} > \cdots > \eta(\boldsymbol{x})_{[C]}$, where $\eta(\boldsymbol{x})_{[k]}$ is the $k$-th greatest element of $\eta(\boldsymbol{x})$.
Then, our task is to learn a score function $f: \mathcal{X} \to \mathbb{R}^{C}$ to estimate the conditional probability for each class. Generally, a loss function $L:  \mathcal{Y} \times \mathbb{R}^C \to \mathbb{R}_{+}$ is used to measure the quality of the score function $f$ at $\boldsymbol{z} \in \mathcal{Z}$. For example, a standard measure for classification is top-$1$ error, which compares the ground-truth label with the only one guess, \textit{i.e.,} the class with the highest score. 

\begin{figure}[!t]
        \centering
        \includegraphics[width=0.95\linewidth]{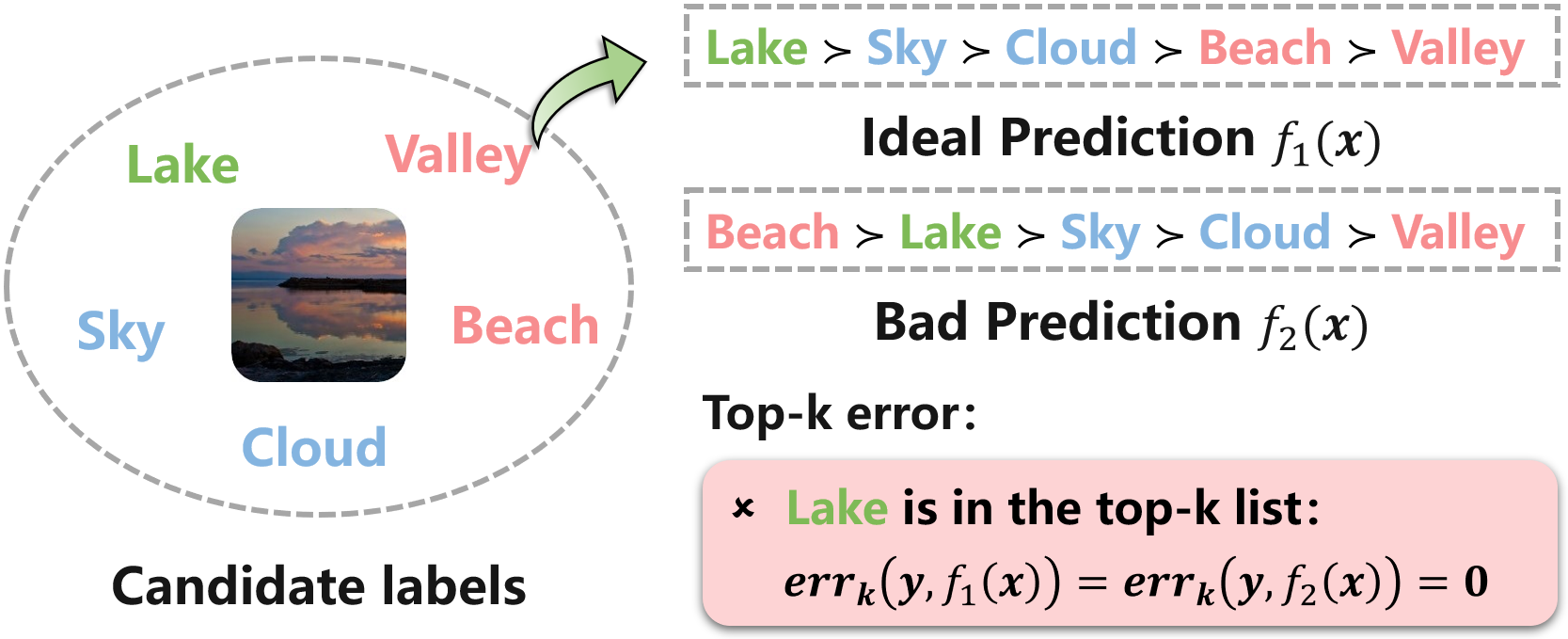}
        \caption{The limitation \textbf{(L1)} of $\topk$. $f_1(\boldsymbol{x})$ is an ideal prediction, while $f_2(\boldsymbol{x})$ is a \textit{bad} prediction since it ranks the irrelevant label \texttt{Beach} higher than the ground-truth label \texttt{Lake}. We expect $ err(y, f_2(\boldsymbol{x})) > err(y, f_1(\boldsymbol{x}))$, but $err_k(y, f_1(\boldsymbol{x})) = err_k(y, f_2(\boldsymbol{x})) = 0$ when $k \ge 2$.}
        \label{fig:l2}
\end{figure}

Recently, top-$k$ error has raised more attention due to label ambiguity. Specifically, it indicates whether the ground-truth label appears in the top-$k$ ranking list:
\begin{equation}
    \topk^{\downarrow} (f) = \E{\boldsymbol{z} \sim \mathcal{D}}{ err_k(y, f(\boldsymbol{x})) },
\end{equation}
where 
\begin{equation}
        err_k(y, f(\boldsymbol{x})) = \I{\pi_{f(\boldsymbol{x})}(y) > k},
\end{equation}
$\I{\cdot}$ is the indicator function, and $\pi_{f(\boldsymbol{x})}(y) \in \left\{1, \cdots, C\right\}$ denotes the index of $f(\boldsymbol{x})_y$ when we sort $f(\boldsymbol{x})$ in a descending order. Meanwhile, top-$k$ accuracy, denoted as $\topk^{\uparrow}$, is defined as $1 - \topk^{\downarrow}$; thus the two metrics are equivalent. In the following discussion, we use $\topk$ to refer to top-$k$ error and accuracy when there exists no ambiguity.

Note that different ways of breaking ties will lead to different $\pi_{f(\boldsymbol{x})}(y)$. Traditional methods assume that no ties exist since the number of classes is relatively small. However, the problem becomes non-negligible when calculating $\topk$ since label ambiguity generally involves a large number of classes. Following the prior arts on $\topk$ optimization \cite{DBLP:conf/icml/YangK20}, we adopt the worst-case assumption:
\begin{assumption}
    \label{ass:wrongly_break_ties}
    There exist ties in $f(\boldsymbol{x})$, and all the ties will be wrongly broken. In other words, we have the following formulations:
    \begin{itemize}
        \item Given $y_1, y_2$ such that $\eta(\boldsymbol{x})_{y_1} > \eta(\boldsymbol{x})_{y_2}$, if $f(\boldsymbol{x})_{y_1} = f(\boldsymbol{x})_{y_2}$, then we have $\pi_{f(\boldsymbol{x})}(y_1) > \pi_{f(\boldsymbol{x})}(y_2)$.
        \item Given the ground-truth label $y$, for any $y' \neq y$ such that $f(\boldsymbol{x})_{y'} = f(\boldsymbol{x})_{y}$, we have $\pi_{f(\boldsymbol{x})}(y) > \pi_{f(\boldsymbol{x})}(y')$.
    \end{itemize}
\end{assumption}
    \begin{remark}
        The two formulations apply to different situations:
        \begin{itemize}
            \item The first formulation is a population-level assumption where the conditional probability $\eta(\boldsymbol{x})$ is available. It states that, when ties exist, the ranking results of $f(\boldsymbol{x})$ are opposite to those of $\eta(\boldsymbol{x})$. We apply this formulation in the proof of Thm.\ref{thm:bayes_optimal}, Thm.\ref{thm:condition_for_consistency}, and Thm.\ref{thm:hinge}.
            \item The second formulation is an empirical-level assumption where only the ground-truth label is available. It states that, when ties exist, the ground-truth label will be ranked lower. We apply this formulation in the proof of Thm.\ref{thm:reformulation_opzero}.
            \item In summary, these two conditions ensure that we will always choose the worst result when there is a tie in a conservative sense.
        \end{itemize}
    \end{remark}

\begin{figure}[!t]
    \centering
    \includegraphics[width=0.68\linewidth]{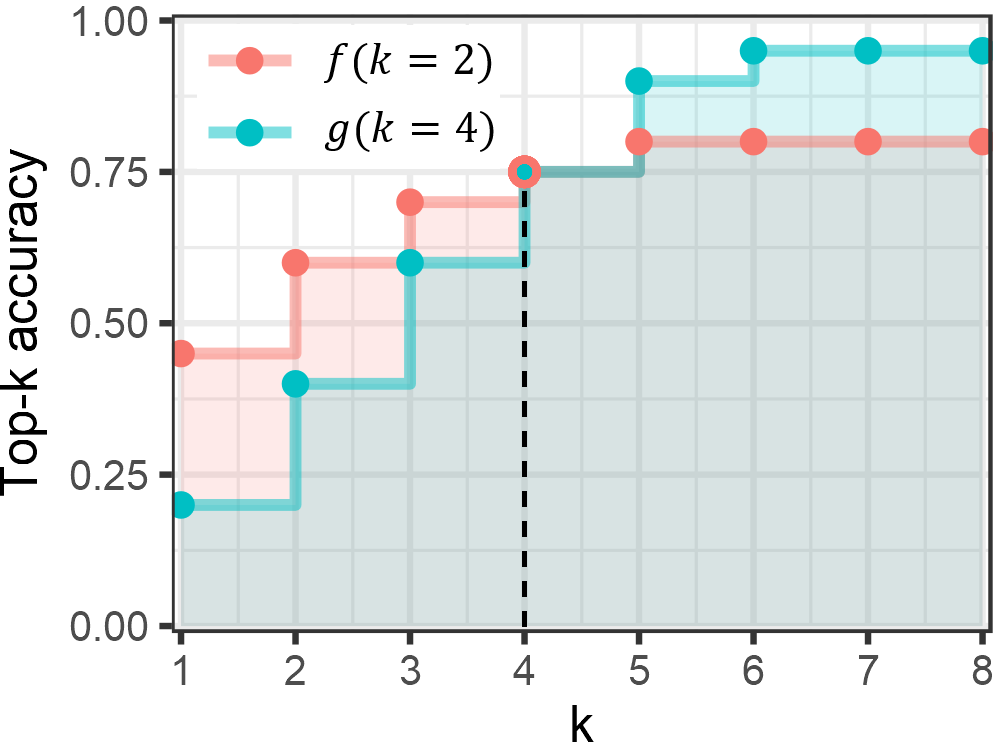}
    \caption{The limitation \textbf{(L2)} of $\topk$. $f$ and $g$ perform inconsistently at different $k$. It is difficult to select which model to deploy unless the participant knows the $k$ of interest ahead. In some scenarios, the $k$ of interest changes dynamically, and optimizing the performance at a specific $k$ is not a reasonable strategy.}
    \label{fig:topk_performance_inconsistent}
\end{figure}

\subsection{Motivation}
\label{sec:motivation}
    As mentioned in the introduction, $\topk$ has its limitations. In this subsection, we will make a further analysis to reveal its reasons.

    On one hand, $\topk$ \textbf{is not discriminating enough, leaving some fatal errors free from penalty (L1)}. As shown in Fig.\ref{fig:l2}, we assume that $\texttt{Lake}$ is the ground-truth label; $\texttt{Sky}$ and $\texttt{Cloud}$ are the labels with ambiguity; $\texttt{Beach}$ and $\texttt{Valley}$ are the irrelevant labels. Consider the following two score function such that:
    $$\begin{aligned}
        f_1: & \textbf{\texttt{  Lake}} \succ \underline{\texttt{Sky}} \succ \underline{\texttt{Cloud}} \succ \texttt{Beach} \succ \texttt{Valley}, \\
        f_2: & \texttt{  Beach} \succ \textbf{\texttt{Lake}} \succ \underline{\texttt{Sky}} \succ \underline{\texttt{Cloud}} \succ \texttt{Valley}. \\
    \end{aligned}$$
    Obviously, $f_1$ is an ideal score function since it ranks $\texttt{Lake}$ first, while $f_2$ is a \textit{bad} score function since it ranks the irrelevant label $\texttt{Beach}$ higher than the ground-truth label. Based on this observation, we expect 
    $$
        err(y, f_2(\boldsymbol{x})) > err(y, f_1(\boldsymbol{x})) .
    $$
    However, when $k \ge 2$, we have
    $$
        err_k(y, f_1(\boldsymbol{x})) = err_k(y, f_2(\boldsymbol{x})) = 0.
    $$
    In other words, although using the top-$k$ error can relax the punishment on label ambiguity, it also relaxes the punishment on the fatal errors. In this sense, \textbf{we need to seek out a more discriminating metric to selectively relax the punishment}.

    One may wonder whether a clever selection of $k$ could eliminate this limitation. Unfortunately, \textbf{the number of ambiguous labels generally differs among classes, while the hyperparameter $k$ is universal for the classes}. This contradiction leads to a dilemma during the training period: if the value of $k$ is too small for some classes, label ambiguity might make these classes hard to optimize, which is the motivation of $\topk$ optimization. But, if the value of $k$ is too large, irrelevant labels will be more likely to be ranked higher than the ground-truth label, which is also unexpected. This suggests that merely looking at a single $k$ is not enough to address \textbf{(L1)}.

On the other hand, $\topk$ \textbf{is a static measure, ignoring the changes in operator's decision condition (L2)}. To be specific, empirical results suggest that models trained with different hyperparameters $k$ generally perform inconsistently on the top-$k$ curve \cite{DBLP:conf/cvpr/Lapin0S16,DBLP:journals/pami/LapinHS18}. As shown in Fig. \ref{fig:topk_performance_inconsistent}, model $f$ outperforms model $g$ when $k < 4$, while the results reverse when $k > 4$. In view of this, the operator might ask: which model should we select to deploy? In practice, the answer depends on which $k$ is of interest during the deployment phase. However, $\topk$ optimization methods fix the value of $k$ during the training phase. As a result, the model performance cannot be guaranteed if the $k$ of interest changes after the model is deployed. 

Being aware of the limitations of $\topk$, we have the following question: 
$$
    \text{(\textbf{Q1}) \textit{Whether there exists a "better" metric for label ambiguity?}}
$$

\subsection{AUTKC and its Partial Variant}
\label{sec:metric_autkc}
Our answer to \textbf{(Q1)} is to evaluate the score function from a more comprehensive perspective, rather than a single point of the top-$k$ curve. Taking the inspiration from Area Under the ROC Curve (AUC) \cite{DBLP:conf/ijcai/LingHZ03}, we propose to adopt \textit{the area under the top-$k$ curve (AUTKC)} to evaluate the quality of the given score function:
\begin{equation}
    \label{equ:original_autkc}
    \mathsf{AUTKC}\text{-}\mathsf{W}^{\downarrow}(f) := \E{\boldsymbol{z} \sim \mathcal{D}} { \frac{1}{C} \int_{1}^{C} err_k(y, f(\boldsymbol{x})) dk } ,
\end{equation}

The original definition of $\atopk$ considers the whole area under the top-$k$ curve. However, we generally focus on the performance within a given range of $k$. In this case, Eq. (\ref{equ:original_autkc}) provides a biased estimation of the expected performance by considering the unrelated regions. This motivates us to adopt its variant involving the given range of $k$:
\begin{equation}
    \label{equ:partial_autkc}
    \begin{split}
        & \mathsf{AUTKC}\text{-}\mathsf{P}^{\downarrow}(f) \\
        & \phantom{-} := \E{\boldsymbol{z} \sim \mathcal{D}} { \frac{1}{K_1 - K_2 + 1} \int_{K_1}^{K_2} err_k(y, f(\boldsymbol{x})) dk } ,
    \end{split}
\end{equation}
where $K_1, K_2$ are the minimum and the maximum number of ambiguous labels, respectively. Since $\mathsf{TOP}\text{-}\mathsf{1}$ is the most common metric of interest, and there generally exist some instances involving no ambiguous labels, we set $K_1 = 1$ and denote $K_2$ as $K$ for the sake of conciseness in the following discussion.

Furthermore, since $k$ is discrete, Eq. (\ref{equ:partial_autkc}) enjoys a much simpler formulation:
\begin{equation}
    \mathsf{AUTKC}\text{-}\mathsf{P}^{\downarrow}(f) := \E{\boldsymbol{z} \sim \mathcal{D}} {aerr_K(y, f(\boldsymbol{x}))},
\end{equation}
where 
\begin{equation}
    aerr_K(y, f(\boldsymbol{x})) = \frac{1}{K} { \sum_{k=1}^{K} err_k\left(y, f(\boldsymbol{x})\right)}.
\end{equation}
Note that we \ul{denote $\mathsf{AUTKC}\text{-}\mathsf{P}^{\downarrow}$ as $\atopk^{\downarrow}$} in the following discussion for the sake of convenience. Similar to $\topk$, the partial area under top-$k$ accuracy curve, denoted as $\mathsf{AUTKC}^{\uparrow}$, is equivalent to $\mathsf{AUTKC}^{\downarrow}$, and we use $\atopk$ to refer to the two metrics when there exists no ambiguity.

\subsection{AUTKC vs TOP-k}
\label{sec:autkc_topk}
    Intuitively, $\atopk$ is a better measure than $\topk$. On one hand, $\atopk$ is more discriminating than $\topk$, which helps overcome the limitation \textbf{(L1)}. For example, when we set $K=3$, $\atopk$ tells us $f_1$ is the ideal score function and indeed better than $f_2$:
    $$aerr_K(y, f_1(\boldsymbol{x})) = \frac{1}{3} \left( 0 + 0 + 0 \right) = 0, $$
    and 
    $$aerr_K(y, f_2(\boldsymbol{x})) = \frac{1}{3} \left( 1 + 0 + 0 \right) = \frac{1}{3}. $$
On the other hand, $\atopk$ only requires a universal hyperparameter $K$ for all the instances, that is, the maximum number of ambiguous labels. Such a simplification is beneficial to overcoming the limitation \textbf{(L2)}. 

Although the aforementioned example indicates the advantages of AUTKC, more theoretical clues are necessary to validate our argument. To this end, we first present the definition of \textbf{consistent} and \textbf{discriminating}, which is a classic technique for comparing measures \cite{DBLP:conf/ijcai/LingHZ03}. 

    \begin{definition}[Degree of consistency  \cite{DBLP:conf/ijcai/LingHZ03}]
        Given two measures $f$ and $g$ on domain $\Psi$ and two predictions $\boldsymbol{a}, \boldsymbol{b} \in \Psi$, let $R = \{ (\boldsymbol{a}, \boldsymbol{b}) | f(\boldsymbol{a}) > f(\boldsymbol{b}), g(\boldsymbol{a}) > g(\boldsymbol{b}) \}, S = \{ (\boldsymbol{a}, \boldsymbol{b}) | f(\boldsymbol{a}) > f(\boldsymbol{b}), g(\boldsymbol{a}) < g(\boldsymbol{b}) \}$. The degree of consistency between $f$ and $g$ is defined as $\mathbf{C}=\frac{\size{R}}{\size{R} + \size{S}} \in [0, 1]$, where $\size{\cdot}$ is the cardinality of a set.
    \end{definition}
\begin{remark}
    The degree of consistency estimates the probability that the two measures make the same judgment on the given domain. A relatively high degree of consistency is a necessary condition for measure comparison.
\end{remark}
\begin{definition}[Degree of discriminancy \cite{DBLP:conf/ijcai/LingHZ03}]
    Given two measures $f$ and $g$ on domain $\Psi$ and two predictions $\boldsymbol{a}, \boldsymbol{b} \in \Psi$, let $P = \{ (\boldsymbol{a}, \boldsymbol{b}) | f(\boldsymbol{a}) > f(\boldsymbol{b}), g(\boldsymbol{a}) = g(\boldsymbol{b}) \}, S = \{ (\boldsymbol{a}, \boldsymbol{b}) | g(\boldsymbol{a}) > g(\boldsymbol{b}), f(\boldsymbol{a}) = f(\boldsymbol{b}) \}$. The degree of discriminancy between $f$ and $g$ is $\mathbf{D}=\frac{\size{P}}{\size{S}}$.
\end{definition}
\begin{remark}
    Degree of discriminancy describes whether a measure could find the discrepancy that the other measure fails to discover. For example, the top-1 error is intuitively more discriminating than the top-$k$ error when $k > 1$.
\end{remark}
\begin{definition}[Consistent and discriminating \cite{DBLP:conf/ijcai/LingHZ03}]
    The measure $f$ is consistent and more discriminating than $g$ if and only if $\mathbf{C} > 0.5$ and $\mathbf{D} > 1$. In this case, we conclude, intuitively, that $f$ is a "better" measure than $g$. Especially, if $\mathbf{C}=1.0$ and $\mathbf{D} = \infty$, we say the measure $f$ is strictly consistent and more discriminating than $g$.
\end{definition}
 
On top of consistency and discrimination, the following theorem suggests that $\atopk$ is a better measure than $\topk$. Please see Appendix.\ref{sec_app:metric_comparison} for the proof.
\begin{restatable}{theorem}{consistencydiscriminating}
    \label{thm:consistencydiscriminating}
    For any $k < K$, $\atopk$ is strictly consistent and more discriminating than $\topk$.
\end{restatable}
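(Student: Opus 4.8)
The plan is to reduce both metrics to a single integer statistic of a prediction and then verify the set-theoretic definitions of $\mathbf{C}$ and $\mathbf{D}$ directly. Following the setting in which these notions are stated, I would take the domain $\Psi$ to be the predictions $\boldsymbol{a}=f(\boldsymbol{x})$ on one labelled instance $(\boldsymbol{x},y)$ and evaluate $\topk$, $\atopk$ through their per-instance values $err_k(y,\boldsymbol{a})$, $aerr_K(y,\boldsymbol{a})$; applying $\mathbf{C},\mathbf{D}$ to the accuracy versions $1-err_k$, $1-aerr_K$ only flips inequalities and leaves $\size{R},\size{S},\size{P}$ (hence $\mathbf{C},\mathbf{D}$) unchanged, since these are strictly decreasing transforms. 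By Assumption~\ref{ass:wrongly_break_ties} the worst-case rank $r(\boldsymbol{a}):=\pi_{\boldsymbol{a}}(y)\in\{1,\dots,C\}$ is well defined, and both quantities factor through it: $err_k(y,\boldsymbol{a})=\I{r(\boldsymbol{a})>k}$, and since $\sum_{j=1}^{K}\I{r>j}=\min(K,r-1)$ we get $aerr_K(y,\boldsymbol{a})=\tfrac1K\min(K,r(\boldsymbol{a})-1)$. Thus $\topk$ is a $\{0,1\}$-step in $r$ that has already saturated at $r=k+1$, while $aerr_K$ is non-decreasing in $r$ and \emph{strictly} increasing on $\{1,\dots,K+1\}$; because $k<K$, the step of $\topk$ occurs strictly before $aerr_K$ stops increasing. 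Everything below follows from this monotonicity picture.

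For strict consistency I would show the disagreement set $S$ is empty: if $aerr_K$ strictly ranks $\boldsymbol{a}$ above $\boldsymbol{b}$ then monotonicity forces $r(\boldsymbol{a})<r(\boldsymbol{b})$, hence $\I{r(\boldsymbol{a})>k}\le\I{r(\boldsymbol{b})>k}$, so $\topk$ cannot strictly disagree. Since $R$ is nonempty (for instance $r(\boldsymbol{a})=1$, $r(\boldsymbol{b})=k+1$, where both metrics strictly prefer $\boldsymbol{a}$), this gives $\mathbf{C}=\size{R}/(\size{R}+\size{S})=1$.

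For strict discriminancy I would verify the two requirements behind $\mathbf{D}=\infty$. First, the set $S$ appearing in the definition of $\mathbf{D}$ is empty: if $\topk$ strictly separates, say $r(\boldsymbol{a})\le k<r(\boldsymbol{b})$, then $aerr_K(y,\boldsymbol{a})\le\tfrac{k-1}{K}$ while $aerr_K(y,\boldsymbol{b})\ge\tfrac1K\min(K,k)=\tfrac{k}{K}$ --- here $k<K$ is exactly what keeps $aerr_K$ still climbing at rank $k$ --- so the two values differ by at least $1/K$ and $aerr_K$ does not tie them. Second, $P\neq\emptyset$: taking $r(\boldsymbol{a})=K$, $r(\boldsymbol{b})=K+1$ (legitimate since $K<C$), both ranks exceed $k$ so $\topk$ ties them, yet $aerr_K$ assigns them the distinct values $\tfrac{K-1}{K}$ and $1$. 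Hence $\mathbf{D}=\size{P}/\size{S}=\infty$, and combined with $\mathbf{C}=1$ this is precisely ``strictly consistent and more discriminating'', for every $k<K$.

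Once the reduction to $r$ is in place the verifications are routine arithmetic. The points I would be careful about are: (i) invoking Assumption~\ref{ass:wrongly_break_ties} so that the rank --- and hence each metric's value on a tied prediction --- is unambiguous; (ii) keeping the error/accuracy orientation consistent with how $\mathbf{C}$ and $\mathbf{D}$ are phrased; and (iii) ensuring a rank value survives strictly \emph{between} the saturation point of $\topk$ and that of $aerr_K$, which is exactly why the hypothesis $k<K$ (together with the standing fact $K<C$) is needed for the discriminancy witness, whereas $\mathbf{C}=1$ already holds for all $k$. I expect (iii) --- delineating the precise regime of $(C,K,k)$ in which both $\mathbf{C}=1$ and $\mathbf{D}=\infty$ hold --- to be the only genuinely delicate step.
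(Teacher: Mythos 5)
Your proof is correct and follows essentially the same route as the paper's: both reduce each metric to a function of the worst-case rank $r=\pi_{\boldsymbol{a}}(y)$, establish $\mathbf{C}=1$ by showing the disagreement set is empty while $R\neq\emptyset$, and establish $\mathbf{D}=\infty$ by showing the AUTKC-tie set is empty while exhibiting a pair that $\topk$ ties but $\atopk$ separates. The only cosmetic difference is that the paper enumerates the exact cardinalities of $R$, $P_1$, $P_2$ over rank pairs, whereas you argue via the monotonicity of $\I{r>k}$ versus $\frac{1}{K}\min(K,r-1)$ and supply explicit witnesses, which suffices since only emptiness and nonemptiness matter.
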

\begin{remark}
    Being strictly consistent with $\topk$ for any $k < K$ means that $\atopk$ is insensitive to the $k$ of interest, which overcomes the limitation \textbf{(L2)}. Meanwhile, being more discriminating implies that $\atopk$ might be a better optimization objective than $\topk$, thus providing a clue to overcoming the limitation \textbf{(L1)}. Further analysis in Sec.\ref{sec:bayes_optimality} will support these conjectures.
\end{remark}

\section{Roadmap for AUTKC Optimization}
\label{sec:roadmap}
So far, we have known the advantage of $\atopk$. Then, a natural question is
$$\begin{aligned}
    \text{(\textbf{Q2}) \textit{How to design }} & \text{\textit{a learning algorithm }} \\
    & \text{\textit{that optimizes $\atopk$ effectively?}}
\end{aligned}$$
Following the standard machine learning paradigm \cite{10.5555/2371238}, we first reformulate the metric optimization problem to a risk minimization problem:
\begin{equation}
    \label{eq:raw}
    \begin{split}
    (OP_0)\phantom{|} \min_{f} \ & \mathcal{R}_K(f) := \atopk^{\downarrow} (f)\\
    & = \E{\boldsymbol{z} \sim \mathcal{D}} {\frac{1}{K} \sum_{k \le K} \ell_{0-1} \left(s_y - (s_{\setminus y})_{[k]}\right)},
    \end{split}
\end{equation}
where $s_y := f(\boldsymbol{x})_y$ denotes the score of the class $y$, $(s_{\setminus y})_{[k]}$ is the $k$-th greatest entry of $f(\boldsymbol{x})$ except $s_y$, and $\ell_{0-1}(t) := \I{t \le 0}$ is the 0-1 loss. Note that we set $\ell_{0-1}(0)=1$ since according to Asm.\ref{ass:wrongly_break_ties}, any ties will lead to an error. 

For the sake of conciseness, we provide an equivalent reformulation of $(OP_0)$, where the annoying operation $\setminus y$ is removed. See Appendix.\ref{sec_app:reformulation_opzero} for the proof.
\begin{restatable}{theorem}{reformulationofopzero}
    \label{thm:reformulation_opzero}
    When Asm.\ref{ass:wrongly_break_ties} holds, the original optimization problem $(OP_0)$ is equivalent to the following problem:
    \begin{equation}
        \label{eq:op1}
        (OP_1)\phantom{|} \min_{f} \E{\boldsymbol{z} \sim \mathcal{D}}{\frac{1}{K} \sum_{k = 1}^{K + 1} \ell_{0-1} \left(s_y - s_{[k]}\right)}.
    \end{equation}
\end{restatable}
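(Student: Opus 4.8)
\begin{sketch}
The plan is to prove the stronger pointwise statement that, for every $(\boldsymbol{x},y)$ and every $f$, the per-sample loss inside the expectation of $(OP_1)$ exceeds the one inside $(OP_0)$ by the fixed amount $1/K$. Taking expectations then gives that the $(OP_1)$ objective equals $\mathcal{R}_K(f)+1/K$ for all $f\in\mathcal{F}$; since an additive constant does not change the set of minimizers, $(OP_0)$ and $(OP_1)$ have exactly the same minimizers, which is the asserted equivalence.

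Fix $\boldsymbol{x}$, $y$, $f$ and set $r:=\pi_{f(\boldsymbol{x})}(y)\in\{1,\dots,C\}$. First I would rewrite the $(OP_0)$ summand, using the identity already behind the passage from $\atopk^{\downarrow}$ to $(OP_0)$: under Asm.\ref{ass:wrongly_break_ties} the labels ranked strictly above $y$ are precisely the $y'\neq y$ with $f(\boldsymbol{x})_{y'}\ge s_y$, so $\ell_{0-1}\left(s_y-(s_{\setminus y})_{[k]}\right)=\I{(s_{\setminus y})_{[k]}\ge s_y}=err_k(y,f(\boldsymbol{x}))=\I{k<r}$, whence the $(OP_0)$ summand equals $\frac{1}{K}\sum_{k=1}^{K}\I{k<r}=\frac{1}{K}\min(K,r-1)$. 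Next I would treat the $(OP_1)$ summand: since $s_{[1]}\ge\cdots\ge s_{[C]}$, we have $\ell_{0-1}(s_y-s_{[k]})=\I{s_{[k]}\ge s_y}=\I{k\le N}$ with $N:=\size{\{j:f(\boldsymbol{x})_j\ge s_y\}}$ counting \emph{all} coordinates. This is the only place where the tie rule enters: by Asm.\ref{ass:wrongly_break_ties} the ground-truth label sits at the bottom of its tie block, so $N=\size{\{y'\neq y:f(\boldsymbol{x})_{y'}\ge s_y\}}+1=(r-1)+1=r$, and the $(OP_1)$ summand equals $\frac{1}{K}\sum_{k=1}^{K+1}\I{k\le r}=\frac{1}{K}\min(K+1,r)$.

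To finish, I would invoke the elementary identity $\min(K+1,r)=\min(K,r-1)+1$, valid for every integer $r\ge 1$ and checked in the three cases $r\le K$, $r=K+1$, and $r\ge K+2$. This shows the $(OP_1)$ summand is the $(OP_0)$ summand plus $1/K$ at every point; taking expectations over $\boldsymbol{z}\sim\mathcal{D}$ then finishes the proof. I would also record the harmless remarks that $K+1\le C$ (so $s_{[K+1]}$ is well-defined) and that $r\ge 1$ always holds, which is precisely the range the arithmetic identity requires.

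I expect the only delicate step to be the tie bookkeeping: the worst-case rule of Asm.\ref{ass:wrongly_break_ties} must be applied consistently both when expressing $err_k$ through $(s_{\setminus y})_{[k]}$ and when identifying the count $N$ of coordinates at least $s_y$ with $r$, so that $N$ equals $\pi_{f(\boldsymbol{x})}(y)$ rather than being off by the size of $y$'s tie block; under a tie rule favorable to $y$ the stated reformulation would fail. Everything else is routine manipulation of the $\min$ function.
\end{sketch}
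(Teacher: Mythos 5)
Your proposal is correct and follows essentially the same route as the paper: both establish the pointwise identity that the inner sum of $(OP_1)$ exceeds that of $(OP_0)$ by exactly $1$ (the paper via a case split on $s_y \lessgtr (s_{\setminus y})_{[K]}$ and a chain of indicator identities, you via the rank $r=\pi_{f(\boldsymbol{x})}(y)$ and the identity $\min(K+1,r)=\min(K,r-1)+1$), and then conclude equivalence up to the additive constant $1/K$. Your explicit bookkeeping of the tie rule and of why $N=r$ is a welcome clarification of a step the paper leaves implicit, but it is not a different argument.
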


\begin{remark}
    Thm. \ref{thm:reformulation_opzero} bridges $\atopk$ optimization with two classic problems: AUC optimization and label ranking. Specifically, AUC optimization is a binary classification problem with a pairwise ranking objective \cite{DBLP:conf/ijcai/LingHZ03}:
    $$
        (OP_{AUC})\phantom{|} \min_{f} \E{\boldsymbol{x^{+}} \sim \mathcal{D}^{+} \atop \boldsymbol{x^{-}} \sim \mathcal{D}^{-}} { \ell_{0-1}( f(\boldsymbol{x^{+}}) - f(\boldsymbol{x^{-}}) ) },
    $$
    where $\mathcal{D}^{+}, \mathcal{D}^{-}$ are the distribution of positive and negative samples, respectively. If we view the ground-truth label as the positive instance and the other labels as the negative ones, then $\atopk$ optimization is similar to \textbf{a partial variant of AUC optimization in the context of label ranking}. Although partial AUC optimization has been well studied \cite{DBLP:conf/kdd/NarasimhanA13,DBLP:conf/icml/NarasimhanA13,DBLP:journals/neco/Narasimhan017,DBLP:conf/icml/0001XBHCH21}, there still exist two main differences:
    \begin{itemize}
        \item The objective of AUC optimization assumes that there exist no ties in $f(\boldsymbol{x})$, which contradicts Asm.\ref{ass:wrongly_break_ties}.
        \item The expectation operation of AUC involves two samples, while in $\atopk$ it only involves one label.
    \end{itemize}
    These differences will lead to \textbf{different theoretical results and practical implementation}. Please refer to Sec. \ref{sec:surrogate} and Sec. \ref{sec:generalization} for the details.
\end{remark}

According to Eq.(\ref{eq:op1}), the main challenges for directly minimizing $\mathcal{R}_K(f)$ are two-fold: 
\begin{itemize}
    \item[\textbf{(C1)}] The loss function $\ell_{0-1}$ is not differentiable;
    \item[\textbf{(C2)}] The data distribution $\mathcal{D}$ is unavailable.
\end{itemize}
Next, we will present the solutions to \textbf{(C1)} and \textbf{(C2)} in Sec.~\ref{sec:surrogate} and Sec.~\ref{sec:generalization}, respectively. Finally, we will obtain an end-to-end framework that optimizes the empirical risk of $\atopk$ efficiently.

\section{Surrogate Risk Minimization for AUTKC}
\label{sec:surrogate}
    In this section, our task is to handle \textbf{(C1)}. Specifically, the discrete $\ell_{0-1}$ in $(OP_1)$ will be replaced by a differentiable surrogate loss $\ell$, leading to the following surrogate risk minimization problem:
\begin{equation}
    (OP_2)\phantom{|} \min_{f} \mathcal{R}_{K}^{\ell}(f) := \E{\boldsymbol{z} \sim \mathcal{D}} {\frac{1}{K} \sum_{k=1}^{K+1} \ell \left(s_y - s_{[k]}\right)}.
\end{equation}
But,
$$
    \text{(\textbf{Q2-1}) \textit{What kind of surrogate loss should we select?}}
$$
We next answer this question in three steps. First, we study the Bayes optimal function for $\atopk$, that is, the optimal solution of $(OP_1)$. Next is the condition of $\ell$ such that optimizing $(OP_2)$ leads to the Bayes optimal function. In other words, when the surrogate risk is Fisher consistent with the $\ell_{0-1}$-based $\atopk$ risk. Finally, we answer the question \textbf{(Q2-1)} by discussing the consistency property of some common surrogate losses. 

\subsection{AUTKC Bayes Optimality}
\label{sec:bayes_optimality}
We first define the Bayes optimal score function for $\atopk$, which is the best score function we could approximate:
\begin{definition}[$\atopk$ Bayes optimal]
    The score function $f^*: \mathcal{X} \to \mathbb{R}^{C}$ is $\atopk$ Bayes optimal if 
    \begin{equation}
        \label{eq:bayes_optimal}
        f^* \in \arg \inf_f \mathcal{R}_K(f).
    \end{equation}
\end{definition}

\noindent Our goal is to find the solution to Eq.(\ref{eq:bayes_optimal}). To this end, we first present the top-$K$ ranking-preserving property:
\begin{definition}[Top-$K$ ranking-preserving property] 
    \label{def:rp}
    Given $\boldsymbol{a}, \boldsymbol{b} \in \mathbb{R}^{C}$, we say that $\boldsymbol{b}$ is \textit{top-$K$ ranking-preserving} with respect to $\boldsymbol{a}$, denoted as $\mathsf{RP}_K(\boldsymbol{b}, \boldsymbol{a})$, if for any $k \le K$, 
    $$\pi_{\boldsymbol{b}}(y) = k \Longrightarrow  \pi_{\boldsymbol{a}}(y) = k.$$ 
    The negation of this statement is $\lnot \mathsf{RP}_K(\boldsymbol{b}, \boldsymbol{a})$. Note that $\mathsf{RP}_K$ is transitive. In other words, if $\mathsf{RP}_K(\boldsymbol{c}, \boldsymbol{b})$ and $\mathsf{RP}_K(\boldsymbol{b}, \boldsymbol{a})$, then we have $\mathsf{RP}_K(\boldsymbol{c}, \boldsymbol{a})$.
\end{definition} 
\noindent Then, the following theorem illuminates the connection between $\mathsf{RP}_K$ and $\atopk$ optimality. Please see Appendix.\ref{sec_app:bayes_optimal} for the proof.
    \begin{restatable}{theorem}{bayesoptimal}
        \label{thm:bayes_optimal}
        The score function $f: \mathcal{X} \to \mathbb{R}^{C}$ is $\atopk$ Bayes optimal if and only if the score is top-$K$ ranking-preserving with respect to the conditional risk. More concretely, we have
        $$f(\boldsymbol{x})_{y_1} > f(\boldsymbol{x})_{y_2} \text{  if  } \eta(\boldsymbol{x})_{y_1} > \eta(\boldsymbol{x})_{y_2} \ge \eta(\boldsymbol{x})_{[K]}.$$ 
    \end{restatable}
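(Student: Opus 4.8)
The plan is to strip off the outer expectation over $\boldsymbol{x}$ and reduce to a pointwise minimization of the conditional $\atopk$ risk, which I then solve as a discrete rearrangement problem over rankings. Write $\mathcal{R}_K(f) = \E{\boldsymbol{z}\sim\mathcal{D}}{aerr_K(y, f(\boldsymbol{x}))} = \mathbb{E}_{\boldsymbol{x}}\!\left[ r_{\boldsymbol{x}}(f(\boldsymbol{x})) \right]$, where $r_{\boldsymbol{x}}(\boldsymbol{s}) := \sum_{y=1}^{C} \eta(\boldsymbol{x})_y\, aerr_K(y, \boldsymbol{s})$ is the conditional risk; since the integrand is non-negative and $f$ is unconstrained over all maps $\mathcal{X}\to\mathbb{R}^C$, $f$ is $\atopk$ Bayes optimal if and only if $f(\boldsymbol{x}) \in \arg\min_{\boldsymbol{s}\in\mathbb{R}^C} r_{\boldsymbol{x}}(\boldsymbol{s})$ for $\mathcal{D}_{\mathcal{X}}$-almost every $\boldsymbol{x}$. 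Fix such an $\boldsymbol{x}$ and abbreviate $\eta := \eta(\boldsymbol{x})$, $\boldsymbol{s} := f(\boldsymbol{x})$. By Asm.~\ref{ass:wrongly_break_ties}, $err_k(y, \boldsymbol{s}) = \I{\pi_{\boldsymbol{s}}(y) > k}$, so
\[
    r_{\boldsymbol{x}}(\boldsymbol{s}) \;=\; \frac{1}{K}\sum_{y=1}^{C}\eta_y\,\phi\big(\pi_{\boldsymbol{s}}(y)\big) \;=\; \frac{1}{K}\sum_{j=1}^{C}\phi(j)\,\eta_{\pi_{\boldsymbol{s}}^{-1}(j)}, \qquad \phi(j) := \sum_{k=1}^{K}\I{j>k} = \min(j-1,\,K).
\]
The weight profile $\phi$ is non-decreasing, it is strictly increasing on $\{1,\dots,K+1\}$ (i.e.\ $\phi(1)<\cdots<\phi(K+1)$), and it is constant on $\{K+1,\dots,C\}$.

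Next I would minimize $\sum_{j=1}^{C}\phi(j)\,\eta_{\pi^{-1}(j)}$ over all permutations $\pi$ of $\{1,\dots,C\}$ by an exchange argument: if positions $i<j$ with $\phi(i)<\phi(j)$ carry classes $a,b$ with $\eta_a<\eta_b$, then swapping $a$ and $b$ changes the objective by $(\phi(j)-\phi(i))(\eta_a-\eta_b)<0$, so at any minimizer a strictly smaller weight is paired with a strictly larger conditional probability. Iterating from the top and using that $\eta$ has no ties and that each of the gaps $(1,2),\dots,(K,K+1)$ is a strict increase of $\phi$, this forces the class at position $j$ to be the one with the $j$-th largest $\eta$ for every $j\le K$; since $\phi$ is constant on $\{K+1,\dots,C\}$, the remaining classes may be placed in any order there, and conversely every such ranking attains the minimum. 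Hence $\pi_{\boldsymbol{s}}$ minimizes $r_{\boldsymbol{x}}$ if and only if $\pi_{\eta}(y)\le K \Rightarrow \pi_{\boldsymbol{s}}(y)=\pi_{\eta}(y)$; as the two top-$K$ label sets then coincide, this is exactly $\pi_{\boldsymbol{s}}(y)=k\le K\Rightarrow\pi_{\eta}(y)=k$, i.e.\ $\mathsf{RP}_K(f(\boldsymbol{x}),\eta(\boldsymbol{x}))$. Combined with the pointwise reduction, $f$ is $\atopk$ Bayes optimal iff $\mathsf{RP}_K(f(\boldsymbol{x}),\eta(\boldsymbol{x}))$ holds for a.e.\ $\boldsymbol{x}$.

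Finally I would read off the concrete score inequality, which is where Asm.~\ref{ass:wrongly_break_ties} enters. If $\eta_{y_1}>\eta_{y_2}\ge\eta_{[K]}$ but $f(\boldsymbol{x})_{y_1}\le f(\boldsymbol{x})_{y_2}$, then (in the tie case) Asm.~\ref{ass:wrongly_break_ties} forces $\pi_{f(\boldsymbol{x})}(y_1)>\pi_{f(\boldsymbol{x})}(y_2)$, and in either case the $\eta$-top-$K$ class $y_1$ is displaced from its slot $\pi_\eta(y_1)$, so by the exchange computation above (whose strictness uses $\eta_{y_1}>\eta_{y_2}$) $r_{\boldsymbol{x}}(f(\boldsymbol{x}))$ strictly exceeds the pointwise minimum, contradicting optimality; hence $f(\boldsymbol{x})_{y_1}>f(\boldsymbol{x})_{y_2}$, the claimed form. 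I expect the main obstacle to be exactly this interaction with adversarial tie-breaking: since $\boldsymbol{s}\mapsto\pi_{\boldsymbol{s}}$ is not a tie-respecting bijection onto permutations, one must argue separately that (i) $\inf_{\boldsymbol{s}} r_{\boldsymbol{x}}(\boldsymbol{s})$ equals $\min_{\pi}\sum_j\phi(j)\eta_{\pi^{-1}(j)}/K$ and is attained by strictly separated scores realizing an $\mathsf{RP}_K$ ranking, and (ii) any tie involving an $\eta$-top-$K$ class is strictly suboptimal, so that optimality delivers strict score separation among the relevant classes rather than a merely weak ordering; the pointwise decomposition and the rearrangement inequality themselves are routine.
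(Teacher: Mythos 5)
Your proposal is correct and follows essentially the same route as the paper: both reduce to pointwise minimization of the conditional risk $\frac{1}{K}\sum_y \eta_y \min\{\pi_{\boldsymbol{s}}(y)-1, K\}$ and then argue via pairwise exchanges that the weight profile $\min(j-1,K)$, being strictly increasing up to position $K+1$ and flat thereafter, forces the top-$K$ classes of $\eta$ into their exact slots while leaving the tail unconstrained (the paper's explicit three-case swap analysis S1--S3 is exactly your single exchange computation split according to whether $\phi(i)<\phi(j)$ or $\phi(i)=\phi(j)$). Your closing discussion of adversarial tie-breaking and strict score separation is if anything slightly more explicit than the paper's, which handles that point only through the blanket invocation of Asm.~\ref{ass:wrongly_break_ties}.
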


\noindent It is worth noting that this theorem sheds more light on the advantages of optimizing $\atopk$:
\begin{figure}[!t]
        \centering
        \includegraphics[width=\linewidth]{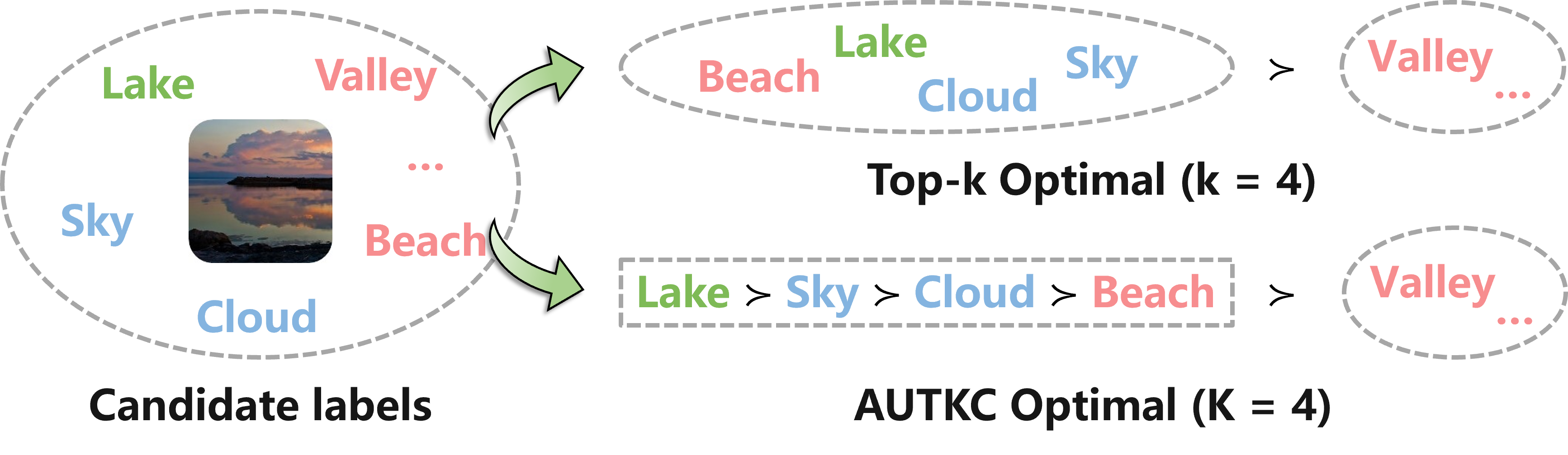}
        \caption{The comparison between $\atopk$ Bayes optimality and $\topk$ Bayes optimality. $\atopk$ optimality requires that $\textbf{\texttt{Lake}} \succ \underline{\texttt{Sky}} \succ \underline{\texttt{Cloud}} \succ \texttt{Beach} \succ \text{others}$, while $\topk$ optimality does not.}
        \label{fig:bayes}
\end{figure}

    \begin{remark}
        Top-$K$ ranking-preserving is stricter than the optimality property for $\topk$ optimization \cite{DBLP:conf/icml/YangK20}. To be specific, given any $\boldsymbol{x}$, the $\topk$ optimality property requires that 
        $$\begin{aligned}
            f(\boldsymbol{x})_y > f(\boldsymbol{x})_{[k+1]} & \text{  if  } \eta(\boldsymbol{x})_y > \eta(\boldsymbol{x})_{[k+1]}, \\
            f(\boldsymbol{x})_y < f(\boldsymbol{x})_{[k]} & \text{  if  } \eta(\boldsymbol{x})_y < \eta(\boldsymbol{x})_{[k]}. \\
        \end{aligned}$$
        In other words, this property only considers the ranking involving  $\eta(\boldsymbol{x})_{[k+1]}$ and $\eta(\boldsymbol{x})_{[k]}$. As a comparison, Top-$K$ ranking-preserving $\mathsf{RP}_K$ also considers the ranking between $f(\boldsymbol{x})_{y_1}$ and $f(\boldsymbol{x})_{y_2}$, where $\eta(\boldsymbol{x})_{y_1} > \eta(\boldsymbol{x})_{y_2} > \eta(\boldsymbol{x})_{[K]}$. As shown in Fig.\ref{fig:bayes}, $\atopk$ optimality requires that $$\textbf{\texttt{Lake}} \succ \underline{\texttt{Sky}} \succ \underline{\texttt{Cloud}} \succ \texttt{Beach} \succ \text{others},$$ while $\topk$ optimality only requires that $$\textbf{\texttt{Lake}}, \underline{\texttt{Sky}}, \underline{\texttt{Cloud}}, \texttt{Beach} \succ \text{others}.$$ 
        
        \noindent The advantage of this strict property is three-fold:
        \begin{itemize}
            \item It reveals how $\atopk$ overcomes limitation \textbf{(L1)}: optimizing $\atopk$ encourages the classes with a higher conditional probability being ranked higher, while the $\topk$ objective does not.
            \item It reveals how $\atopk$ overcomes limitation \textbf{(L2)} and escapes from the dilemma that the value of hyperparameter $k$ is hard to determine: as long as the hyperparameter $K$ is not so small, the induced optimal prediction will rank the ground-truth label and the ambiguous ones higher than the irrelevant ones.
            \item It helps us obtain a simpler criterion to select surrogate loss, whose details are presented in Sec.\ref{sec:consist_loss}.
        \end{itemize}
    \end{remark}

\begin{remark}
    If a score function $f$ is $\atopk$ Bayes optimal, no ties will exist in 
    $$
        \{f(\boldsymbol{x})_{[1]}, \cdots, f(\boldsymbol{x})_{[K]}, f(\boldsymbol{x})_{[K + 1]}\}.
    $$
    In other words, \ul{optimizing $\atopk$ encourages breaking the ties}, which is an appealing property. Meanwhile, ties might exist in 
    $$
        \{f(\boldsymbol{x})_{[K+1]}, \cdots, f(\boldsymbol{x})_{[C]}\}
    $$
    since it does not break the top-$K$ ranking-preserving property.
\end{remark}

\subsection{Consistency analysis for surrogate risk minimization}
\label{sec:consist_loss}
On top of Thm.\ref{thm:bayes_optimal}, we could investigate the answer to the question \textbf{(Q2-1)}. A basic requirement for $\ell$ is consistency. That is, the solution returned by minimizing $\mathcal{R}_{K}^{\ell}(f)$ should recover the Bayes optimal score function:
\begin{definition}[$\atopk$ consistency]
    The surrogate loss $\ell: \mathbb{R} \to \mathbb{R}_{+}$ is consistent with $\atopk$ if for every function sequence $\left\{ f_t \right\}_{t=1, 2, \cdots}$, we have:
    \begin{equation}
        \mathcal{R}_{K}^{\ell}(f_t) \to \inf_{f} \mathcal{R}_{K}^{\ell}(f) \Rightarrow \mathcal{R}_{K}(f_t) \to \inf_{f} \mathcal{R}_{K}(f).
    \end{equation}
\end{definition}

\noindent Next, we provide a sufficient condition for $\atopk$ consistency, which enjoys a simpler formulation than checking whether the optimal solutions of $(OP_2)$ are Bayes optimal. Please see Appendix.\ref{sec_app:condition_calibration} for the proof.
\begin{restatable}{theorem}{conditionforconsistency}
    \label{thm:condition_for_consistency}
    The surrogate loss $\ell(t)$ is $\atopk$ consistent if it is bounded, differentiable and strictly decreasing.
\end{restatable}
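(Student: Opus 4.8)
The plan is the classical two-layer recipe for Fisher-consistency results: localise everything to a pointwise (conditional-risk) statement, prove a pointwise calibration bound there, and then lift it back to the population level by dominated convergence. For a fixed $\boldsymbol{x}$, write $\eta := \eta(\boldsymbol{x})$, $\boldsymbol{s}:=f(\boldsymbol{x})\in\mathbb{R}^C$, and set
\[
W_\ell(\eta,\boldsymbol{s}) := \frac{1}{K}\sum_{y=1}^{C}\eta_y\sum_{k=1}^{K+1}\ell\bigl(s_y-s_{[k]}\bigr),
\qquad
W_0(\eta,\boldsymbol{s}) := \frac{1}{K}\sum_{y=1}^{C}\eta_y\sum_{k=1}^{K+1}\ell_{0-1}\bigl(s_y-s_{[k]}\bigr),
\]
so that $\mathcal{R}_K^{\ell}(f)=\mathbb{E}_{\boldsymbol{x}}\bigl[W_\ell(\eta(\boldsymbol{x}),f(\boldsymbol{x}))\bigr]$ and, by Thm.\ref{thm:reformulation_opzero}, $\mathcal{R}_K(f)$ coincides with $\mathbb{E}_{\boldsymbol{x}}\bigl[W_0(\eta(\boldsymbol{x}),f(\boldsymbol{x}))\bigr]$ up to an additive constant, so it is harmless to work with $W_0$. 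Writing $W_\ell^{*}(\eta):=\inf_{\boldsymbol{s}}W_\ell(\eta,\boldsymbol{s})$ and $W_0^{*}(\eta):=\inf_{\boldsymbol{s}}W_0(\eta,\boldsymbol{s})$, a measurable-selection argument gives $\inf_f\mathcal{R}_K^{\ell}(f)=\mathbb{E}_{\boldsymbol{x}}[W_\ell^{*}(\eta(\boldsymbol{x}))]$ and likewise for $\mathcal{R}_K$. Since $\ell$ is bounded and the excess integrands are nonnegative, it then suffices to show the pointwise statement below and to invoke the usual contrapositive/subsequence device: if $\mathcal{R}_K(f_t)\not\to\inf_f\mathcal{R}_K(f)$ along a minimising sequence for $\mathcal{R}_K^{\ell}$, pass to a bad subsequence, then to a further subsequence along which the conditional surrogate excess still vanishes for a.e.\ $\boldsymbol{x}$, apply the pointwise statement, and conclude by dominated convergence (legitimate because $0\le W_0-W_0^{*}\le (K+1)/K$).

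\textbf{The core step: pointwise calibration.} I would prove that any sequence $\boldsymbol{s}_t$ with $W_\ell(\eta,\boldsymbol{s}_t)\to W_\ell^{*}(\eta)$ also has $W_0(\eta,\boldsymbol{s}_t)\to W_0^{*}(\eta)$; by Thm.\ref{thm:bayes_optimal} the latter is equivalent to $\mathsf{RP}_K(\boldsymbol{s}_t,\eta)$ holding for $t$ large. Pass to a subsequence along which the sign of every pairwise difference $s_{t,i}-s_{t,j}$ is constant and each difference converges in $[-\infty,+\infty]$; because $\ell$ is bounded, $W_\ell(\eta,\boldsymbol{s}_t)$ converges to an explicit finite value $\bar W$, a sum of terms $\ell(\cdot)$ evaluated at the limiting differences (with $\ell(+\infty)=\inf\ell$, $\ell(-\infty)=\sup\ell$). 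It remains to show that if the limiting pattern is not top-$K$ ranking-preserving — i.e.\ there are labels $y_1,y_2$ with $\eta_{y_1}>\eta_{y_2}$, both lying in the top-$K$ block of positions as in Def.\ref{def:rp}, but with limiting scores ordered the wrong way, $\bar s_{y_1}\le\bar s_{y_2}$ — then $\bar W>W_\ell^{*}(\eta)$, a contradiction. In the strict case $\bar s_{y_1}<\bar s_{y_2}$ one swaps the scores of $y_1$ and $y_2$ in the tail of the sequence: every summand is unchanged except those in the $y_1$- and $y_2$-rows, and strict monotonicity of $\ell$ makes the net change strictly negative — here Asm.\ref{ass:wrongly_break_ties} is used so that any remaining tie is resolved against the higher-$\eta$ label and cannot be exploited to dodge the penalty. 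In the borderline case $\bar s_{y_1}=\bar s_{y_2}$ one instead perturbs $\boldsymbol{s}_t$ by $+\varepsilon$ on coordinate $y_1$ and $-\varepsilon$ on coordinate $y_2$; differentiability of $\ell$ lets one compute the directional derivative of $W_\ell(\eta,\cdot)$ along this direction, and summing the contributions of all $K+1$ comparison terms one finds it equals a strictly positive multiple of $(\eta_{y_1}-\eta_{y_2})\,\ell'(0)<0$, so a small perturbation strictly decreases the surrogate risk. Transitivity of $\mathsf{RP}_K$ (Def.\ref{def:rp}) then chains these pairwise comparisons into the full ranking-preserving conclusion.

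\textbf{Assembly.} Pointwise, $W_\ell(\eta,\boldsymbol{s}_t)\to W_\ell^{*}(\eta)$ forces $\mathsf{RP}_K(\boldsymbol{s}_t,\eta)$ eventually, hence $W_0(\eta,\boldsymbol{s}_t)=W_0^{*}(\eta)$ eventually; integrating over $\boldsymbol{x}$ and applying dominated convergence yields $\mathcal{R}_K(f_t)\to\inf_f\mathcal{R}_K(f)$, which is exactly $\atopk$ consistency.

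\textbf{Main obstacle.} The delicate part is the borderline (tie-in-the-limit) case of the pointwise calibration, which is precisely why differentiability — and not merely strict monotonicity — is needed. One must justify the perturbation argument at a point where several coordinates coincide, so that the order statistics $s_{[k]}$ are themselves non-smooth in $\boldsymbol{s}$, and one must verify that after summing over all $K+1$ comparison terms the sign of the directional derivative is genuinely governed by $\eta_{y_1}-\eta_{y_2}$. Exhaustively handling every such near-tie configuration, and checking that the calibration gap obtained can be chosen measurably in $\eta$ (all that dominated convergence requires), is the bookkeeping-heavy heart of the proof.
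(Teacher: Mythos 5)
Your proposal follows essentially the same route as the paper's proof: reduce to the conditional risk, show via Thm.\ref{thm:bayes_optimal} that any score violating $\mathsf{RP}_K$ has strictly larger conditional surrogate risk — splitting into a tie case handled by a first-order/derivative-sign argument (where differentiability and $\ell'(0)<0$ enter) and a no-tie case handled by a pairwise swap using strict monotonicity — then establish a positive calibration gap and lift to the population level. Your extended-real subsequence compactification is a slightly more careful treatment of possibly non-attained infima than the paper's Claims 1–2, but it is a refinement of the same argument rather than a different one.
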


So far, we have discussed the consistency \textit{w.r.t.} all the measurable functions. However, common surrogate losses are not bounded. To this end, we next restrict the functions within a special function set $\mathcal{F}$, which induces the concept of $\mathcal{F}$-consistency:
\begin{definition}[$\atopk$ $\mathcal{F}$-consistency]
    The surrogate loss $\ell: \mathbb{R} \to \mathbb{R}_{+}$ is consistent with $\atopk$ if for every function sequence $\left\{ f_t \right\}_{t=1, 2, \cdots}, f_t \in \mathcal{F}$, we have:
    \begin{equation}
        \mathcal{R}_{K}^{\ell}(f_t) \to \inf_{f \in \mathcal{F}} \mathcal{R}_{K}^{\ell}(f) \Rightarrow \mathcal{R}_{K}(f_t) \to \inf_{f \in \mathcal{F}} \mathcal{R}_{K}(f).
    \end{equation}
\end{definition}
\noindent According to Thm.\ref{thm:condition_for_consistency}, the following proposition and corollary clearly hold:
\begin{proposition}
    Let $\mathcal{F}$ denote the set of functions whose outputs are bounded in $[0, 1]$. Then the surrogate loss $\ell(t)$ is $\mathcal{F}$-consistent with $\atopk$ if it is differentiable and strictly decreasing in $[0, 1]$.
\end{proposition}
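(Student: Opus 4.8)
The plan is to re-run the argument behind Theorem~\ref{thm:condition_for_consistency} with the hypothesis class cut down to $\mathcal{F}$, after noticing that on this class the boundedness assumption of that theorem is automatically satisfied. Indeed, for any $f\in\mathcal{F}$ every argument that $\ell$ receives inside $\mathcal{R}_K^{\ell}(f)$ has the form $s_y-s_{[k]}=f(\boldsymbol{x})_y-f(\boldsymbol{x})_{[k]}$ with both $s_y$ and $s_{[k]}$ in $[0,1]$, so it lies in the compact interval $[-1,1]$; a differentiable function is continuous, hence bounded on $[-1,1]$, which is the only region where $\ell$ is ever evaluated. If one wants to quote Theorem~\ref{thm:condition_for_consistency} verbatim, it is enough to first replace $\ell$ by a global extension $\bar{\ell}\colon\mathbb{R}\to\mathbb{R}_{+}$ obtained by gluing strictly decreasing, value- and derivative-matching, flattening tails onto the restriction $\ell|_{[-1,1]}$: then $\bar{\ell}$ is bounded, differentiable and strictly decreasing on all of $\mathbb{R}$, and $\mathcal{R}_K^{\ell}(f)=\mathcal{R}_K^{\bar{\ell}}(f)$ for every $f\in\mathcal{F}$.

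Next I would verify that passing to $\mathcal{F}$ does not disturb the two infima that appear in the definition of $\mathcal{F}$-consistency. For the target risk this is immediate from Theorem~\ref{thm:bayes_optimal}: the conditional-probability map $\boldsymbol{x}\mapsto\eta(\boldsymbol{x})$ is trivially top-$K$ ranking-preserving with respect to itself, hence $\atopk$ Bayes optimal, and it lies in $\mathcal{F}$, so $\inf_{f\in\mathcal{F}}\mathcal{R}_K(f)=\inf_{f}\mathcal{R}_K(f)$ and this optimum is attained inside $\mathcal{F}$. The surrogate side needs a little more care: because $\bar{\ell}$ is bounded, its unconstrained infimum is only approached with unbounded margins, so $\inf_{f\in\mathcal{F}}\mathcal{R}_K^{\bar{\ell}}(f)$ is in general strictly larger than the unconstrained one, and the unrestricted consistency of $\bar{\ell}$ cannot simply be invoked. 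Instead I would argue at the conditional level: for fixed $\boldsymbol{x}$, the conditional surrogate risk $\boldsymbol{s}\mapsto\frac{1}{K}\sum_{k=1}^{K+1}\sum_{y}\eta(\boldsymbol{x})_y\,\bar{\ell}(s_y-s_{[k]})$ is continuous on the compact cube $[0,1]^C$ and hence attains its minimum there, and a first-order perturbation argument in the spirit of the proof of Theorem~\ref{thm:condition_for_consistency} (using differentiability and strict monotonicity of $\bar{\ell}$) shows that any such minimizer must order its leading coordinates in agreement with $\eta(\boldsymbol{x})$, i.e.\ it satisfies the characterization of Theorem~\ref{thm:bayes_optimal} and is $\atopk$ Bayes optimal. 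Extracting a conditional ``calibration gap'' this way, integrating over $\boldsymbol{x}$, and invoking compactness then turns any minimizing sequence of $\mathcal{R}_K^{\bar{\ell}}$ — equivalently of $\mathcal{R}_K^{\ell}$ — inside $\mathcal{F}$ into one whose $\atopk$ risk converges to $\inf_{f\in\mathcal{F}}\mathcal{R}_K(f)$, which is precisely $\mathcal{F}$-consistency.

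The step I expect to be the real obstacle is the perturbation argument \emph{under the box constraint}. In the unconstrained setting one may nudge an offending coordinate in whichever direction decreases the risk, whereas on $\partial[0,1]^C$ only one-sided moves are available (lower an over-ranked score that already sits at $1$, or raise an under-ranked score that already sits at $0$). One then has to check that such an admissible one-sided perturbation still \emph{strictly} decreases the conditional surrogate risk whenever the top-$K$ ranking of $\boldsymbol{s}$ disagrees with that of $\eta(\boldsymbol{x})$ — the subtlety being that raising a coordinate $s_y$ simultaneously shrinks the terms $\bar{\ell}(s_y-s_{[k]})$ (helpful) and, when $s_y$ happens to equal some $s_{[k]}$, enlarges the terms $\bar{\ell}(s_{y'}-s_{[k]})$ in which $s_y$ acts as a subtrahend (harmful), so one must keep track of which terms of the double sum move in which direction, taking the tie-breaking conventions of Assumption~\ref{ass:wrongly_break_ties} into account. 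Strict monotonicity of $\bar{\ell}$ is exactly what forces the net change to have the desired sign; the remaining bookkeeping is routine.
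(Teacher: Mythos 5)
Your first paragraph reproduces what the paper actually does here: the paper gives no proof of this proposition at all, merely asserting that it ``clearly'' follows from Thm.~\ref{thm:condition_for_consistency} because, for $f\in\mathcal{F}$, every argument $s_y-s_{[k]}$ lives in $[-1,1]$, where a differentiable strictly decreasing $\ell$ is automatically bounded. You are also right, and more careful than the paper, in observing that this is \emph{not} an immediate corollary: restricting to $\mathcal{F}$ changes $\inf_f\mathcal{R}_K^{\ell}(f)$, and the key step of the paper's proof of Thm.~\ref{thm:condition_for_consistency} (Claim~1, Case~1 in Appendix~\ref{sec_app:condition_calibration}) sets $\partial\mathcal{R}_K^{\ell}(\boldsymbol{s},\eta)/\partial s_y=0$, an interior first-order condition that is unavailable at a minimizer sitting on $\partial[0,1]^C$. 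So your diagnosis of where the work lies is exactly right.

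The genuine gap is in your last sentence: ``strict monotonicity of $\bar{\ell}$ is exactly what forces the net change to have the desired sign; the remaining bookkeeping is routine.'' This is false as stated. Take $C=3$, $K=2$, $\eta=(0.45,0.44,0.11)$, $\ell(t)=e^{-t}$, and the non-$\mathsf{RP}_K$ boundary point $\boldsymbol{s}=(1,1,0.99)$. The one-sided derivative of the conditional surrogate risk along the natural tie-breaking move $s_2\mapsto 1-\epsilon$ is proportional to $(\eta_1-\eta_2)\ell'(0)-\eta_2\ell'(1-s_3)+\eta_3\ell'(s_3-1)\approx -0.01+0.436-0.111>0$, so lowering the over-ranked tied coordinate \emph{strictly increases} the risk (lowering $s_1$ increases it too, and raising either is infeasible). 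Strict monotonicity alone does not give the sign you need, because the harmful terms $\ell(s_{y'}-s_{[k]})$ in which the perturbed score acts as a subtrahend can dominate. The point is of course still not a minimizer — one can decrease $s_3$ — but that move does not separate the tie, so the argument cannot be closed coordinate-by-coordinate: you must first impose the stationarity/KKT conditions of the \emph{other} coordinates (e.g.\ at the $s_3$ optimal for $s_1=s_2=1$, the same perturbation of $s_2$ does become strictly improving) and only then derive the contradiction. That combination of optimality conditions across coordinates is a missing idea, not bookkeeping, and it is precisely the content that neither your sketch nor the paper supplies.
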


\begin{corollary}
    \label{coll:consistent_loss}
    Let $\mathcal{F}$ denote the set of functions whose outputs are bounded in $[0, 1]$. Then the following statements hold:
    \begin{itemize}
        \item The square loss $\ell_{sq}(t) = (1 - t)^2$ is $\mathcal{F}$-consistent with $\atopk$.
        \item The exponential loss $\ell_{exp}(t) = \exp(-t)$ is $\mathcal{F}$-consistent with $\atopk$.
        \item The logit loss $\ell_{logit}(t) = \log (1 + \exp(-t))$ is $\mathcal{F}$-consistent with $\atopk$.
    \end{itemize}
\end{corollary}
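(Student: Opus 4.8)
The plan is to deduce the corollary directly from the Proposition that precedes it: for $\mathcal{F}$ the class of score functions with outputs in $[0,1]$, that Proposition reduces $\atopk$ $\mathcal{F}$-consistency of a surrogate $\ell$ to two purely structural checks — differentiability and strict decrease of $\ell$ on $[0,1]$ — which already sidesteps the boundedness obstruction of Thm.\ref{thm:condition_for_consistency}, since a continuous loss is automatically bounded on the compact interval $[0,1]$ (equivalently, on the range $[-1,1]$ of the argument $s_y - s_{[k]}$ that actually feeds into $\ell$ when $f \in \mathcal{F}$). So it remains only to verify, for the square, exponential, and logit losses, that each is differentiable and strictly decreasing on $[0,1]$, and then to invoke the Proposition.

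I would dispose of $\ell_{exp}$ and $\ell_{logit}$ first, as the easy cases. Both are smooth on all of $\mathbb{R}$; their derivatives, $\ell_{exp}'(t) = -\exp(-t)$ and $\ell_{logit}'(t) = -1/(1+\exp(t))$, are strictly negative everywhere, so each loss is strictly decreasing on $\mathbb{R}$, a fortiori on $[0,1]$, and $\mathcal{F}$-consistency follows at once.

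The square loss $\ell_{sq}(t) = (1-t)^2$ is the only delicate point, and therefore the nearest thing to an obstacle. It is differentiable, but $\ell_{sq}'(t) = -2(1-t)$ vanishes at $t = 1$, which is the right endpoint of $[0,1]$, so one cannot certify strict decrease on $[0,1]$ merely from the sign of the derivative. Instead I would verify the order inequality by hand: for $0 \le t_1 < t_2 \le 1$ one has $1 - t_1 > 1 - t_2 \ge 0$, and squaring a strictly larger nonnegative quantity preserves the strict inequality, so $\ell_{sq}(t_1) > \ell_{sq}(t_2)$. Hence $\ell_{sq}$ is strictly decreasing on $[0,1]$, and the Proposition yields $\mathcal{F}$-consistency. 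The one subtlety to get right, then, is reading the Proposition's hypothesis as the order-theoretic notion of ``strictly decreasing'' (which $\ell_{sq}$ satisfies on $[0,1]$) rather than as a nowhere-vanishing-derivative condition (which it does not); everything else is routine.
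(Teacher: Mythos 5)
Your proposal is correct and follows essentially the same route as the paper, which states the corollary as an immediate consequence of the preceding Proposition (itself derived from Thm.\ref{thm:condition_for_consistency}) and leaves the verification of differentiability and strict decrease for each loss implicit. Your extra care at the endpoint $t=1$ for the square loss, where the derivative vanishes but order-theoretic strict decrease still holds, is a worthwhile detail the paper glosses over.
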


Traditional $\topk$ optimization methods adopt the hinge loss $\ell_{hinge}\left(t\right) = \left[1 - t\right]_{+}$ as the surrogate loss, where $\left[t\right]_{+} = \max\left\{0, t\right\}$. However, we point out that its variant in $\atopk$ optimization is inconsistent, whose proof is shown in Appendix.\ref{sec_app:hinge_inconsistent}.
\begin{restatable}{theorem}{hingenotconsistent}
    \label{thm:hinge}
    The hinge loss is inconsistent with $\atopk$ when there exists $\boldsymbol{x}$ such that $ \sum_{k= K + 2}^{C} \eta(\boldsymbol{x})_{[k]} > \frac{K}{K+1}$.
\end{restatable}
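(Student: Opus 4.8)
The plan is to produce, for a distribution $\mathcal{D}$ that has a point $\boldsymbol{x}_0$ of positive probability with $\sum_{k=K+2}^{C}\eta(\boldsymbol{x}_0)_{[k]} > \tfrac{K}{K+1}$ (which forces $C\ge K+2$), a sequence of score functions whose surrogate risk tends to $\inf_f \mathcal{R}_K^{\ell}(f)$ while their $\atopk$ risk stays bounded away from $\inf_f \mathcal{R}_K(f)$. Writing $\eta:=\eta(\boldsymbol{x}_0)$ and $\boldsymbol{s}:=f(\boldsymbol{x}_0)$, the relevant conditional quantities at $\boldsymbol{x}_0$ are the hinge risk $W(\boldsymbol{s})=\sum_{y}\eta_y\,\tfrac1K\sum_{k=1}^{K+1}[1-s_y+s_{[k]}]_{+}$ and the $0$-$1$ risk $W^{0/1}(\boldsymbol{s})=\tfrac1K\sum_{y}\eta_y\min(\pi_{\boldsymbol{s}}(y)-1,\,K)$. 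First I would take $f_t$ equal to the all-zero vector at $\boldsymbol{x}_0$ and an $\tfrac1t$-minimizer of the conditional hinge risk at every other input (a routine measurable selection); the proof then reduces to: \textbf{(a)} the all-zero vector minimizes $W$, whence $\mathcal{R}_K^{\ell}(f_t)\to\inf_f\mathcal{R}_K^{\ell}(f)$; and \textbf{(b)} the all-zero vector is \emph{strictly} suboptimal for $W^{0/1}$, whence $\mathcal{R}_K(f_t)$ exceeds $\inf_f\mathcal{R}_K(f)$ by the fixed amount $\mathbb{P}(\boldsymbol{x}=\boldsymbol{x}_0)\bigl(W^{0/1}(\boldsymbol{0})-\min_{\boldsymbol{s}}W^{0/1}(\boldsymbol{s})\bigr)>0$ for every $t$, contradicting $\atopk$ consistency.

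For \textbf{(a)} I would first argue that $W$ is convex. After the linear substitution $t_z:=s_z-s_y$ one has $\sum_{k=1}^{K+1}[1-s_y+s_{[k]}]_{+}=\sum_{k=1}^{K+1}[1+t_{[k]}]_{+}$; since $v\mapsto[1+v]_{+}$ is convex and non-decreasing, the top $K+1$ entries of the transformed vector are the transform of the top $K+1$ entries, so this equals $\max_{|T|=K+1}\sum_{z\in T}[1+t_z]_{+}$, a pointwise maximum of convex functions, hence convex in $\boldsymbol{t}$ and therefore in $\boldsymbol{s}$. Thus $W$, a nonnegative combination of such terms, is convex and invariant under translation along $\boldsymbol{1}$, so the all-zero vector is a global minimizer iff $W'(\boldsymbol{0};\boldsymbol{d})\ge0$ for all $\boldsymbol{d}$. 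Because no hinge is active at $\boldsymbol{0}$ (all arguments $1+t_{[k]}$ equal $1$), a direct differentiation gives $W'(\boldsymbol{0};\boldsymbol{d})=\tfrac1K\bigl(\sum_{k=1}^{K+1}d_{[k]}-(K+1)\sum_y\eta_y d_y\bigr)$. By the rearrangement inequality $\sum_y\eta_y d_y\le\sum_j\eta_{[j]}d_{[j]}$, so it suffices to check $\sum_{k=1}^{K+1}d_{[k]}\ge(K+1)\sum_j\eta_{[j]}d_{[j]}$ for every non-increasing $\boldsymbol{d}$; with $c_j:=\I{j\le K+1}-(K+1)\eta_{[j]}$, summation by parts reduces this to $\sum_{j=1}^{C}c_j=0$ (immediate from $\sum_j\eta_{[j]}=1$) together with $\sum_{i=1}^{j}c_i\ge0$, i.e.\ $\sum_{i=1}^{j}\eta_{[i]}\le\min(j,K+1)/(K+1)$ for $j\le C-1$. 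This is trivial for $j\ge K+1$, and for $j\le K$ it is exactly where the hypothesis enters: $\sum_{i=1}^{j}\eta_{[i]}\le\sum_{i=1}^{K+1}\eta_{[i]}<\tfrac{1}{K+1}\le\tfrac{j}{K+1}$.

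For \textbf{(b)} I would invoke Assumption~\ref{ass:wrongly_break_ties}: the all-equal vector makes $\pi_{\boldsymbol{0}}$ the exact reversal of the (tie-free) ranking of $\eta$, so $W^{0/1}(\boldsymbol{0})=\tfrac1K\sum_{r=1}^{C}\eta_{[r]}\min(C-r,K)$, whereas by Theorem~\ref{thm:bayes_optimal} the conditional Bayes risk is $\tfrac1K\sum_{r=1}^{C}\eta_{[r]}\min(r-1,K)$. Their difference is $\tfrac1K\sum_r\eta_{[r]}w_r$ with $w_r:=\min(C-r,K)-\min(r-1,K)$; since $w_{C+1-r}=-w_r$ and $w_r\ge0$ for $r\le C/2$, pairing $r$ with $C+1-r$ yields $\tfrac1K\sum_{r\le\lfloor C/2\rfloor}w_r(\eta_{[r]}-\eta_{[C+1-r]})$, which is strictly positive since $\eta$ has no ties and the $r=1$ term equals $K(\eta_{[1]}-\eta_{[C]})>0$ (here $w_1=\min(C-1,K)=K$ because $C\ge K+2$). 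Combining \textbf{(a)} and \textbf{(b)} with the bookkeeping of the first paragraph finishes the argument.

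The main obstacle is step \textbf{(a)}, and within it the recognition that $W$ is convex: this is what upgrades the first-order condition at $\boldsymbol{0}$ from necessary to sufficient and turns the whole matter into the clean inequality above, after which everything hinges on reducing $W'(\boldsymbol{0};\cdot)\ge0$, via rearrangement and Abel summation, to a partial-sum bound on $\eta_{[\cdot]}$ in which the threshold $\tfrac{K}{K+1}$ appears precisely. The remaining points (the reduction to a single positive-probability input, that $W'(\boldsymbol{0};\cdot)$ is the genuine one-sided directional derivative, the measurable selection of near-minimizers, and the strict-rearrangement step in \textbf{(b)}) are routine.
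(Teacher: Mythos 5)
Your proof is correct, and it takes a genuinely different route from the paper's. The paper fixes an arbitrary top-$K$ ranking-preserving score $\boldsymbol{s}^*$, collapses its top $K+1$ entries onto $s^*_{[K+1]}$, and shows the resulting (tied, hence non-optimal for the $0$-$1$ risk) score has strictly smaller conditional surrogate risk: it first uses the flat region of the hinge ($\ell_{hinge}(t)=0$ for $t\ge 1$, via an auxiliary lemma) to reduce to the regime $s^*_y-s^*_k\le 1$, and then shows the risk gap is increasing in each $s^*_y$ for $y\le K$, the sign computation $-\eta_y+\tfrac{1}{K}(1-\eta_y)>0$ being exactly where $\eta_y<\tfrac{1}{K+1}$ enters --- the same role your partial-sum bound $\sum_{i\le j}\eta_{[i]}<\tfrac{1}{K+1}$ plays. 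You instead identify the fully tied score $\boldsymbol{0}$ as a \emph{global} minimizer of the conditional surrogate risk, via convexity (the subset-maximum representation of the per-label sum) plus the one-sided first-order condition at $\boldsymbol{0}$, reduced by rearrangement and Abel summation to the partial-sum inequality; your step \textbf{(b)} and the lifting to an explicit bad minimizing sequence are also sound, and I checked the Abel-summation reduction and the pairing argument in \textbf{(b)} in detail. Your route is cleaner and makes the mechanism (the ``crowding'' of uniformly small top-$(K{+}1)$ probabilities) more transparent, whereas the paper's argument is more local and explicitly tied to the piecewise-linear structure of the hinge. One thing you should surface explicitly: nothing in your step \textbf{(a)} is specific to the hinge beyond convexity of the induced conditional risk and $\ell'(0)<0$; the identical computation applies to the exponential and logit losses, whose conditional risks are convex by the same representation as a maximum over $(K+1)$-subsets of sums of convex nondecreasing transforms of affine maps. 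Your argument therefore also shows that $\boldsymbol{0}$ globally minimizes those conditional surrogate risks under the same hypothesis (a direct numerical check with $C=5$, $K=1$, $\eta=(0.21,0.2,0.2,0.2,0.19)$ confirms that $\boldsymbol{0}$ beats any ranking-preserving score for the logit loss), which is in tension with Thm.\ref{thm:condition_for_consistency} and Corollary \ref{coll:consistent_loss}. This is not a defect in your proof of Thm.\ref{thm:hinge}, but it means your argument proves strictly more than the statement asks, and it points at the use of a smooth first-order condition at a tied (hence non-differentiable) point in Case 1 of the paper's proof of Thm.\ref{thm:condition_for_consistency}, which deserves a second look.
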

\begin{sketch}
    Given any sample $\boldsymbol{x}$ satisfying the above condition and its Bayes optimal score $\boldsymbol{s}^*$, Thm.\ref{thm:hinge} is induced by the fact that we could construct a score $\boldsymbol{s}$ such that $\lnot \mathsf{RP}(\boldsymbol{s}, \eta(\boldsymbol{x}))$ but $\boldsymbol{s}$ has a smaller risk than $\boldsymbol{s}^*$ does. This construction holds by the property that $\ell_{hinge}(t) = 0$ for any $t \ge 1$. Please refer to Appendix.\ref{sec_app:hinge_inconsistent} for the details.
\end{sketch}

\section{Generalization Analysis for AUTKC Optimization}
\label{sec:generalization}
    In this section, we aim to relieve \textbf{(C2)}. Since the data distribution $\mathcal{D}$ is unknown, we turn to optimizing its empirical estimation based on the given dataset $\mathcal{S}=\{\boldsymbol{z}^{(i)}\}_{i=1}^{n}$ sampled \textit{i.i.d.} from $\mathcal{D}$. Let $s^{(i)}_{y}$ denote $f(\boldsymbol{x}^{(i)})_{y^{(i)}}$ for convenience. Then, we have the following optimization problem:
\begin{equation}
    (OP_3)\phantom{|} \min_{f \in \mathcal{F}} \hat{\mathcal{R}}_K^{\ell}(f, \mathcal{S}) := \frac{1}{n}\sum_{i=1}^{n}\sum_{k=1}^{K+1} \ell \left(s^{(i)}_{y} - s^{(i)}_{[k]}\right),
\end{equation}
The results in Sec. \ref{sec:surrogate} guarantee a small $\atopk$ risk $\mathcal{R}_K$ when we minimize the surrogate risk $\mathcal{R}_K^{\ell}$, as long as the surrogate loss $\ell$ is $\atopk$ consistent. But, 
$$\begin{aligned}
    \text{(\textbf{Q2-2}) \textit{Does }} & \text{\textit{minimizing the empirical risk }} \hat{\mathcal{R}}_K^{\ell}(f, \mathcal{S}) \\
    & \text{\textit{ lead to a small generalization error }} \mathcal{R}_K^{\ell}(f) \text{\textit{?}}
\end{aligned}$$
In other words, we should ensure $\mathcal{R}_K^{\ell}(f) - \hat{\mathcal{R}}_K^{\ell}(f, \mathcal{S})$ is bounded with high probability, and the upper bound goes to zero as the number of samples increases, \textbf{especially when the number of classes is large}. To this end, we next provide a Gaussian-complexity-based analysis for this bound in Sec. \ref{sec:general_bound}. Since deep neural networks have been one of the most popular function sets for large-scale classification, we further present its result in Sec. \ref{sec:cnn_bound}.

\subsection{General Result for AUTKC Generalization Bound}
\label{sec:general_bound}
Gaussian Complexity \cite{10.5555/2371238} is a classic measure of generalization error. Here we present its formal definition.
\begin{definition}[Gaussian Complexity] 
    Given the dataset $\mathcal{S} = \{\boldsymbol{z}^{(i)}\}_{i=1}^{n}$, let $\mathcal{F}$ be a family of functions defined on the sample space $\mathcal{Z}$. Then, the empirical Gaussian complexity is defined as:
    \begin{equation}
        \mathfrak{G}_\mathcal{S}(\mathcal{F}) := \E{\boldsymbol{g}}{\sup_{f \in \mathcal{F}} \frac{1}{n}\sum_{i=1}^{n} g^{(i)} f(\boldsymbol{z}^{(i)}) },
    \end{equation}
    where $\boldsymbol{g} = ( g^{(1)}, g^{(2)}, \cdots, g^{(n)} )$ are the independent random variables sampled from the standard normal distribution $N(0, 1)$. 
\end{definition}

\noindent Then, the following lemma provides the upper bound of $\mathcal{R}_K^{\ell}(f) - \hat{\mathcal{R}}_K^{\ell}(f, \mathcal{S})$ via Gaussian complexity.
\begin{lemma}[Generalization Bound with Gaussian Complexity \cite{10.5555/2371238}]
    \label{lem:generalization_gaussian}
    Suppose that the loss function $L$ is bounded by $B > 0$. Then, for any $\delta \in (0, 1)$, with a probability of at least $1 - \delta$, the following holds
    \begin{equation}
        \Phi(\mathcal{S}, \delta) \le \sqrt{2\pi} \mathfrak{G}_{\mathcal{S}}(L \circ \mathcal{F}),
    \end{equation}
    where 
    $$
        \Phi(\mathcal{S}, \delta) = \sup_{f \in \mathcal{F}} \left( \mathcal{R}_K^{\ell}(f) - \hat{\mathcal{R}}_K^{\ell}(f, \mathcal{S}) \right) - 3 B \sqrt{\frac{\log{\frac{2}{\delta}}}{2n}},
    $$
    and $L \circ \mathcal{F} := \{L \circ f | f \in \mathcal{F}\}$ is the set that consists of the composition of the loss function $L: \mathbb{R}^C \to \mathbb{R}$ and the score function $f \in \mathcal{F}$.
\end{lemma}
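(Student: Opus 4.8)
The plan is to follow the classical symmetrization-plus-concentration argument for uniform convergence, specialized to the empirical Gaussian complexity (this is essentially the proof of the analogous empirical-Rademacher bound in the reference, with one extra comparison step at the end). Abbreviate $\Psi(\mathcal{S}) := \sup_{f \in \mathcal{F}}\bigl(\mathcal{R}_K^{\ell}(f) - \hat{\mathcal{R}}_K^{\ell}(f, \mathcal{S})\bigr)$, so that the claim reduces to showing that, with probability at least $1-\delta$,
\[
    \Psi(\mathcal{S}) \le \sqrt{2\pi}\,\mathfrak{G}_{\mathcal{S}}(L \circ \mathcal{F}) + 3B\sqrt{\frac{\log(2/\delta)}{2n}}.
\]
I would organize the argument into four steps: a bounded-difference concentration of $\Psi$; a symmetrization bounding $\mathbb{E}_{\mathcal{S}}[\Psi(\mathcal{S})]$ by the expected Rademacher complexity; a second bounded-difference concentration replacing the expected Rademacher complexity by its empirical counterpart; and a deterministic comparison between the empirical Rademacher and Gaussian complexities.

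\emph{Step 1 (concentration of $\Psi$).} Since $L$ is nonnegative and bounded by $B$, replacing a single sample of $\mathcal{S}$ perturbs $\hat{\mathcal{R}}_K^{\ell}(f, \cdot)$ by at most $B/n$ uniformly over $f \in \mathcal{F}$, hence perturbs $\Psi$ by at most $B/n$. McDiarmid's inequality then gives $\Psi(\mathcal{S}) \le \mathbb{E}_{\mathcal{S}}[\Psi(\mathcal{S})] + B\sqrt{\log(2/\delta)/(2n)}$ with probability at least $1-\delta/2$. \emph{Step 2 (symmetrization).} Introducing an independent ghost sample $\mathcal{S}'$ and Rademacher signs $\boldsymbol{\sigma}$, the standard symmetrization argument gives $\mathbb{E}_{\mathcal{S}}[\Psi(\mathcal{S})] \le 2\,\mathbb{E}_{\mathcal{S}}[\hat{\mathfrak{R}}_{\mathcal{S}}(L \circ \mathcal{F})]$, where $\hat{\mathfrak{R}}_{\mathcal{S}}$ denotes the empirical Rademacher complexity of the composed class.

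\emph{Step 3 (empirical vs.\ expected Rademacher complexity).} The map $\mathcal{S} \mapsto \hat{\mathfrak{R}}_{\mathcal{S}}(L \circ \mathcal{F})$ itself has bounded differences of size $B/n$ (again because the values of $L$ lie in $[0,B]$), so a second application of McDiarmid yields $\mathbb{E}_{\mathcal{S}}[\hat{\mathfrak{R}}_{\mathcal{S}}(L \circ \mathcal{F})] \le \hat{\mathfrak{R}}_{\mathcal{S}}(L \circ \mathcal{F}) + B\sqrt{\log(2/\delta)/(2n)}$ with probability at least $1-\delta/2$. Taking a union bound over the events in Steps 1 and 3 produces, with probability at least $1-\delta$, $\Psi(\mathcal{S}) \le 2\hat{\mathfrak{R}}_{\mathcal{S}}(L \circ \mathcal{F}) + 3B\sqrt{\log(2/\delta)/(2n)}$. \emph{Step 4 (Rademacher to Gaussian).} Using $\mathbb{E}|g| = \sqrt{2/\pi}$ for $g \sim N(0,1)$, write $\sigma^{(i)} = \sqrt{\pi/2}\,\mathbb{E}|g^{(i)}|\,\sigma^{(i)}$, apply Jensen's inequality to pull the supremum outside the Gaussian expectation, and use $\sigma^{(i)}|g^{(i)}| \stackrel{d}{=} g^{(i)}$; this gives the deterministic bound $\hat{\mathfrak{R}}_{\mathcal{S}}(L \circ \mathcal{F}) \le \sqrt{\pi/2}\,\mathfrak{G}_{\mathcal{S}}(L \circ \mathcal{F})$, so $2\hat{\mathfrak{R}}_{\mathcal{S}} \le \sqrt{2\pi}\,\mathfrak{G}_{\mathcal{S}}$. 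Substituting into the bound from Step 3 and moving the $3B\sqrt{\log(2/\delta)/(2n)}$ term to the left side gives exactly $\Phi(\mathcal{S}, \delta) \le \sqrt{2\pi}\,\mathfrak{G}_{\mathcal{S}}(L \circ \mathcal{F})$.

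As this is essentially a known result, there is no conceptual obstacle; the part that needs care is bookkeeping the constants through the two McDiarmid applications and the union bound — which is exactly what forces the $\log(2/\delta)$ (rather than $\log(1/\delta)$) and the coefficient $3B$ — and checking that the bounded-difference constant for $\hat{\mathfrak{R}}_{\mathcal{S}}$ is indeed $B/n$, which relies on $L$ being valued in $[0,B]$ rather than merely having range $2B$.
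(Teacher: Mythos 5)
Your proof is correct: the paper does not prove this lemma itself but cites it as a known result from the referenced textbook, and your four-step argument (McDiarmid on the supremum, symmetrization, McDiarmid on the empirical Rademacher complexity with a union bound, then the $\hat{\mathfrak{R}}_{\mathcal{S}} \le \sqrt{\pi/2}\,\mathfrak{G}_{\mathcal{S}}$ comparison via $\mathbb{E}|g| = \sqrt{2/\pi}$ and Jensen) is exactly the standard derivation there, with the constants $3B$ and $\log(2/\delta)$ coming out as you describe. Your closing remark about the bounded-difference constant relying on $L$ taking values in $[0,B]$ is the right detail to flag.
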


In multi-class classification, this classic bound scales \textit{w.r.t.} $\mathcal{O}(\sqrt{C})$ \cite{10.5555/2371238,10.1214/aos/1015362183,DBLP:journals/paa/Guermeur02}. This result is rather unfavorable when facing numerous classes. Thanks to the recent advance in learning theory, a Lipschitz assumption on the loss function $L$ can help eliminate this problem \cite{DBLP:journals/tit/LeiDZK19}:
\begin{assumption}[Lipschitz Continuity \textit{w.r.t.} a Variant of the $\ell_2$-Norm \cite{DBLP:journals/tit/LeiDZK19}]
    \label{ass:lipschitz}
    The loss function $L: \mathbb{R}^{C} \to \mathbb{R}$ is Lipschitz continuous \textit{w.r.t.} a variant of the $\ell_2$-norm involving index $y \in \mathcal{Y}$ if for all $\boldsymbol{s}, \boldsymbol{s}' \in \mathbb{R}^{C}$
    $$\begin{aligned}
        & \left| L(\boldsymbol{s}) - L(\boldsymbol{s}') \right| \\
        & \phantom{| L(s)} \le L_1 \left\| (s_1 - s'_1, \cdots, s_C - s'_C ) \right\|_2 + L_2 \left| s_y - s'_y \right|, 
    \end{aligned}$$
    where $L_1, L_2$ are Lipschitz constants.
\end{assumption}

\noindent Then, according to Lemma 1 in \cite{DBLP:journals/tit/LeiDZK19}, we obtain the following generalization bound: 
\begin{proposition}[Generalization Bound with the Lipschitz continuity variant of the $\ell_2$-Norm]
    \label{prop:generalization_gaussian}
    If the loss function $L$ satisfies Asm.\ref{ass:lipschitz}, then for any $\delta \in (0, 1)$, with probability of at least $1 - \delta$, we have
    \begin{equation}
        \label{eq:bound_lip}
        \Phi(S, \delta) \le 2 \sqrt{\pi} \left[L_1 C \mathfrak{G}_{\widetilde{\mathcal{S}}}(\widetilde{\mathcal{F}}) + L_2 \mathfrak{G}_{\mathcal{S}}(\widetilde{\mathcal{F}})\right].
    \end{equation}
    where 
    $$\begin{aligned}
        \widetilde{\mathcal{S}} = & \Big\{ \underbrace{(\boldsymbol{x}^{(1)}, 1), (\boldsymbol{x}^{(1)}, 2), \cdots, (\boldsymbol{x}^{(1)}, C)}_{\text{induced by } \boldsymbol{x}^{(1)}}, \\
        & \phantom{\{} \underbrace{(\boldsymbol{x}^{(2)}, 1), (\boldsymbol{x}^{(2)}, 2), \cdots, (\boldsymbol{x}^{(2)}, C)}_{\text{induced by } \boldsymbol{x}^{(2)}}, \cdots, \\
        & \phantom{\{} \underbrace{(\boldsymbol{x}^{(n)}, 1), (\boldsymbol{x}^{(n)}, 2), \cdots, (\boldsymbol{x}^{(n)}, C)}_{\text{induced by } \boldsymbol{x}^{(n)}} \Big\}, \\
        \widetilde{\mathcal{F}} = & \left\{ \tilde{f}(\boldsymbol{x}, y) := \boldsymbol{e}_y^T f(\boldsymbol{x}) | f \in \mathcal{F}, \boldsymbol{x} \in \mathcal{X}, y \in \mathcal{Y} \right\}, \\
    \end{aligned}$$
    and $\boldsymbol{e}_y = (\underbrace{0, \cdots, 0}_{y-1}, 1, \underbrace{0, \cdots, 0}_{C-y})^T$.
\end{proposition}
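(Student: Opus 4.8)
The plan is to obtain the bound by chaining two results that are already on the table: the abstract Gaussian-complexity generalization bound of Lemma~\ref{lem:generalization_gaussian}, and a vector-valued contraction inequality for loss classes that are Lipschitz with respect to the variant $\ell_2$-norm of Asm.~\ref{ass:lipschitz} (Lemma~1 of \cite{DBLP:journals/tit/LeiDZK19}). First I would apply Lemma~\ref{lem:generalization_gaussian} directly: since by hypothesis $L$ satisfies Asm.~\ref{ass:lipschitz} and, on the bounded function sets we work with, is bounded by some $B>0$, Lemma~\ref{lem:generalization_gaussian} gives, with probability at least $1-\delta$,
$$\Phi(\mathcal{S},\delta) \le \sqrt{2\pi}\,\mathfrak{G}_{\mathcal{S}}(L\circ\mathcal{F}).$$
Hence the whole task reduces to bounding the empirical Gaussian complexity of the composite class $L\circ\mathcal{F}$ by the two decoupled quantities $\mathfrak{G}_{\widetilde{\mathcal{S}}}(\widetilde{\mathcal{F}})$ and $\mathfrak{G}_{\mathcal{S}}(\widetilde{\mathcal{F}})$.

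The core step is the contraction lemma. For each training point $\boldsymbol{z}^{(i)}=(\boldsymbol{x}^{(i)},y^{(i)})$ the per-sample map $\boldsymbol{s}\mapsto L(\boldsymbol{s})$ is, by Asm.~\ref{ass:lipschitz}, Lipschitz in the decomposed form $|L(\boldsymbol{s})-L(\boldsymbol{s}')|\le L_1\|\boldsymbol{s}-\boldsymbol{s}'\|_2 + L_2|s_{y^{(i)}}-s'_{y^{(i)}}|$, where the singled-out coordinate is exactly the observed label $y^{(i)}$. Applying Lemma~1 of \cite{DBLP:journals/tit/LeiDZK19} peels off the loss and splits the resulting Gaussian process into a ``full-coordinate'' part, in which every coordinate $f(\boldsymbol{x}^{(i)})_c$, $c\in\mathcal{Y}$, carries its own Gaussian multiplier, and a ``label-coordinate'' part, in which only $f(\boldsymbol{x}^{(i)})_{y^{(i)}}$ appears:
$$\mathfrak{G}_{\mathcal{S}}(L\circ\mathcal{F}) \le \sqrt{2}\,L_1\, \E{\boldsymbol{g}}{\sup_{f\in\mathcal{F}}\frac{1}{n}\sum_{i=1}^{n}\sum_{c=1}^{C} g^{(i,c)} f(\boldsymbol{x}^{(i)})_c} + \sqrt{2}\,L_2\,\E{\boldsymbol{g}}{\sup_{f\in\mathcal{F}}\frac{1}{n}\sum_{i=1}^{n} g^{(i)} f(\boldsymbol{x}^{(i)})_{y^{(i)}}}.$$
The second term is, by definition of $\widetilde{\mathcal{S}}$ and the coordinate-selection parametrization $\widetilde f(\boldsymbol{x},y)=\boldsymbol{e}_y^{T} f(\boldsymbol{x})$, exactly $\sqrt{2}L_2\,\mathfrak{G}_{\mathcal{S}}(\widetilde{\mathcal{F}})$. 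In the first term the double sum runs over the $nC$ index pairs that constitute $\widetilde{\mathcal{S}}$; since $\mathfrak{G}_{\widetilde{\mathcal{S}}}$ normalizes by $nC$ rather than $n$, rewriting $\tfrac{1}{n}=C\cdot\tfrac{1}{nC}$ produces the factor $C$ and identifies this term with $\sqrt{2}L_1 C\,\mathfrak{G}_{\widetilde{\mathcal{S}}}(\widetilde{\mathcal{F}})$.

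Substituting this bound into the inequality from Lemma~\ref{lem:generalization_gaussian} and collapsing constants via $\sqrt{2\pi}\cdot\sqrt{2}=2\sqrt{\pi}$ yields exactly Eq.~(\ref{eq:bound_lip}). I expect the only real difficulty to be bookkeeping rather than mathematical depth: matching the precise hypotheses and normalization conventions of the contraction lemma of \cite{DBLP:journals/tit/LeiDZK19} to our notation — in particular, checking that the variant-$\ell_2$ Lipschitz condition of Asm.~\ref{ass:lipschitz} is literally the one that lemma requires, carefully tracking the $\tfrac{1}{n}$-versus-$\tfrac{1}{nC}$ normalization that is responsible for the extra factor $C$ in front of $\mathfrak{G}_{\widetilde{\mathcal{S}}}(\widetilde{\mathcal{F}})$, and confirming that the boundedness constant $B$ needed to invoke Lemma~\ref{lem:generalization_gaussian} has already been absorbed into the definition of $\Phi(\mathcal{S},\delta)$ and therefore does not reappear in the final inequality.
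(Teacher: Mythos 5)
Your proposal is correct and follows the same route the paper takes: the paper proves this proposition simply by composing Lemma~\ref{lem:generalization_gaussian} with the contraction inequality of Lemma~1 in \cite{DBLP:journals/tit/LeiDZK19}, which is exactly your decomposition into the full-coordinate term (yielding the factor $C$ from the $\tfrac{1}{n}$ versus $\tfrac{1}{nC}$ normalization) and the label-coordinate term, with $\sqrt{2\pi}\cdot\sqrt{2}=2\sqrt{\pi}$. Your side remarks about the constant $B$ being absorbed into $\Phi(\mathcal{S},\delta)$ and the contraction lemma not requiring linearity of $f$ are also consistent with the paper's own remark that the statement generalizes Thm.~2 of \cite{DBLP:journals/tit/LeiDZK19}.
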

\noindent Note that this formulation generalizes the Thm.2 in \cite{DBLP:journals/tit/LeiDZK19}, where $f$ is restricted in linear models. Such generalization is necessary when we analyze the bound of neural networks. 

According to Proposition \ref{prop:generalization_gaussian}, we need to present the Lipschitz property of the surrogate losses mentioned in Corollary \ref{coll:consistent_loss}. To this end, we first show the following theorem, whose proof is shown in Appendix.\ref{sec_app:lip_surrogate}.
\begin{restatable}{theorem}{conditionoflipschitz}
    \label{thm:for_lip}
    Assume that $dom \ s_i \subset [0, B_s]$ for any $i=1, \cdots, C$, if $\ell(t)$ is strictly decreasing and $L_\ell$-Lipschitz continuous in the range $[-B_s, B_s]$, Then 
    $$
        L_K(\boldsymbol{s}, y) = \frac{1}{K} \sum_{k=1}^{K+1} \ell\left( s_{y} - s_{[k]} \right)
    $$
    is Lipschitz continuous \textit{w.r.t.} a variant of the $\ell_2$-norm with Lipschitz constant pair $( \frac{ L_\ell \sqrt{K+1} }{K}, \frac{ L_\ell (K+1) }{K} )$.
\end{restatable}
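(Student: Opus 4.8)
The plan is to bound $\left\lvert L_K(\boldsymbol{s},y)-L_K(\boldsymbol{s}',y)\right\rvert$ by a combination of $\left\|\boldsymbol{s}-\boldsymbol{s}'\right\|_2$ and $\left\lvert s_y-s'_y\right\rvert$, by first unwinding the definition of $L_K$ via the Lipschitz property of $\ell$, and then isolating the perturbation of the $y$-th coordinate from that of the sorted coordinates. The domain restriction $dom\ s_i\subset[0,B_s]$ ensures that every argument $s_y-s_{[k]}$, and likewise $s'_y-s'_{[k]}$, lies in $[-B_s,B_s]$, the interval on which $\ell$ is $L_\ell$-Lipschitz. Applying this termwise together with the triangle inequality $\left\lvert (s_y-s_{[k]})-(s'_y-s'_{[k]})\right\rvert\le \left\lvert s_y-s'_y\right\rvert+\left\lvert s_{[k]}-s'_{[k]}\right\rvert$, summing over the $K+1$ summands and dividing by $K$, yields
\begin{align*}
    \left\lvert L_K(\boldsymbol{s},y)-L_K(\boldsymbol{s}',y)\right\rvert
    &\le \frac{L_\ell(K+1)}{K}\left\lvert s_y-s'_y\right\rvert \\
    &\quad + \frac{L_\ell}{K}\sum_{k=1}^{K+1}\left\lvert s_{[k]}-s'_{[k]}\right\rvert .
\end{align*}
The first term is already the $L_2\left\lvert s_y-s'_y\right\rvert$ contribution of Asm.\ref{ass:lipschitz}, so the remaining task is to show $\sum_{k=1}^{K+1}\left\lvert s_{[k]}-s'_{[k]}\right\rvert\le \sqrt{K+1}\,\left\|\boldsymbol{s}-\boldsymbol{s}'\right\|_2$.

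For that I would combine two ingredients. The first is Cauchy--Schwarz against the all-ones vector of length $K+1$, giving $\sum_{k=1}^{K+1}\left\lvert s_{[k]}-s'_{[k]}\right\rvert\le \sqrt{K+1}\,\bigl(\sum_{k=1}^{K+1}(s_{[k]}-s'_{[k]})^2\bigr)^{1/2}$. The second, which is the only step with genuine content, is that the descending-sort map is a contraction in the Euclidean norm: $\sum_{k=1}^{C}(s_{[k]}-s'_{[k]})^2\le \left\|\boldsymbol{s}-\boldsymbol{s}'\right\|_2^2$. Expanding the squares, this inequality is equivalent to $\left\langle \boldsymbol{s},\boldsymbol{s}'\right\rangle\le\left\langle \boldsymbol{s}^{\downarrow},\boldsymbol{s}'^{\downarrow}\right\rangle$, i.e.\ exactly the rearrangement inequality, since co-sorting two vectors maximizes their inner product over all relative orderings. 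Truncating the full sorted sum of squares to its first $K+1$ terms can only decrease it, so the two ingredients chain to the desired bound, and substituting back into the displayed inequality produces precisely the constant pair $\bigl(\tfrac{L_\ell\sqrt{K+1}}{K},\ \tfrac{L_\ell(K+1)}{K}\bigr)$.

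I expect the sorting-contraction lemma to be the only point requiring justification, and it is not a real obstacle: besides the rearrangement-inequality argument above, it also follows by a bubble-sort argument in which each adjacent transposition that removes an inversion of $\boldsymbol{s}$ relative to the order of $\boldsymbol{s}'$ leaves the Euclidean distance non-increasing. Two minor remarks worth recording: (i) the strict-monotonicity hypothesis on $\ell$ is inherited from the consistency condition in Thm.\ref{thm:condition_for_consistency} and plays no role here — only Lipschitz continuity on $[-B_s,B_s]$ is used; (ii) replacing the contraction step by the cruder bound $\left\lvert s_{[k]}-s'_{[k]}\right\rvert\le\left\|\boldsymbol{s}-\boldsymbol{s}'\right\|_\infty\le\left\|\boldsymbol{s}-\boldsymbol{s}'\right\|_2$ would still give Lipschitz continuity, but with the inferior constant $L_\ell(K+1)/K$ in place of $L_\ell\sqrt{K+1}/K$, so the contraction lemma is exactly what sharpens the dependence on $K$.
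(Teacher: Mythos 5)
Your proof is correct, and it takes a genuinely different route from the paper's. The paper never compares the $k$-th sorted entries of $\boldsymbol{s}$ and $\boldsymbol{s}'$ directly: it first uses the strict monotonicity of $\ell$ to rewrite $\sum_{k=1}^{K+1}\ell(s_y-s_{[k]})$ as the sum of the top $K{+}1$ entries of the vector $\bigl(\ell(s_y-s_j)\bigr)_{j}$, hence as a maximum over $(K{+}1)$-subsets of $[C]$ of sums indexed by \emph{original} coordinates, and then applies $\lvert \max_i a_i - \max_i b_i\rvert \le \max_i \lvert a_i-b_i\rvert$ so that every subsequent comparison is between $s_j$ and $s'_j$ at the same index $j$; Cauchy--Schwarz over the maximizing subset and extension of the sum to all $C$ coordinates then yield the $\sqrt{K+1}\,\|\boldsymbol{s}-\boldsymbol{s}'\|_2$ term. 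You instead pair the sorted coordinates termwise and absorb the mismatch of sorting permutations into the lemma that descending sort is $1$-Lipschitz in the Euclidean norm, which you correctly reduce to the rearrangement inequality $\langle \boldsymbol{s},\boldsymbol{s}'\rangle \le \langle \boldsymbol{s}^{\downarrow},\boldsymbol{s}'^{\downarrow}\rangle$. Both arguments land on exactly the same constant pair. What each buys: the paper's max-over-subsets device avoids needing any property of the sorting map but consumes the monotonicity hypothesis on $\ell$ at the very first rewriting step; your version isolates the combinatorial content in a clean, reusable contraction lemma and, as you note in remark (i), genuinely does not use strict monotonicity at all, so it proves a slightly more general statement. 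Your remark (ii) on the $\ell_\infty$ shortcut and the resulting weaker constant is also accurate.
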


\noindent On top of Thm.\ref{thm:for_lip}, the following results are clear. We leave the rest of the proof in Appendix.\ref{sec_app:lip_surrogate}.
\begin{restatable}{corollary}{lipschitzofsurrogate}
    \label{coll:lipschitz_surrogate}
    Let $\overline{\boldsymbol{s}}$ denote the normalized score $\mathsf{soft}(\boldsymbol{s})$, where $\mathsf{soft}$ is the softmax function. Then, the following statements hold:
    \begin{itemize}
        \item The square loss 
        $$
            L_K^{sq}(\boldsymbol{s}, y) = \frac{1}{K} \sum_{k=1}^{K+1} \ell_{sq}(\overline{s}_y - \overline{s}_{[k]})
        $$ 
        is Lipschitz continuous \textit{w.r.t.} a variant of the $\ell_2$-norm with Lipschitz constant pair $$( \frac{2\sqrt{2(K+1)}}{K}, \frac{2\sqrt{2}(K+1)}{K} ).$$
        \item The exponential loss 
        $$
            L_K^{exp}(\boldsymbol{s}, y) = \frac{1}{K} \sum_{k=1}^{K+1} \ell_{exp}(\overline{s}_y - \overline{s}_{[k]})
        $$
        is Lipschitz continuous \textit{w.r.t.} a variant of the $\ell_2$-norm with Lipschitz constant pair $$( \frac{e\sqrt{2(K+1)}}{2K}, \frac{e\sqrt{2}(K+1)}{2K} ).$$
        \item The logit loss 
        $$
            L_K^{logit}(\boldsymbol{s}, y) = \frac{1}{K} \sum_{k=1}^{K+1} \ell_{logit}(\overline{s}_y - \overline{s}_{[k]})
        $$
        is Lipschitz continuous \textit{w.r.t.} a variant of the $\ell_2$-norm with Lipschitz constant pair $$( \frac{\sqrt{2(K+1)}}{2eK\ln 2}, \frac{\sqrt{2}(K+1)}{2eK \ln 2} ).$$
    \end{itemize}
\end{restatable}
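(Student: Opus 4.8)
The plan is to obtain the three constant pairs as instances of Thm.\ref{thm:for_lip}, after exploiting the fact that passing the scores through the softmax confines the loss arguments to a bounded interval. First I would note that $\overline{\boldsymbol{s}} = \mathsf{soft}(\boldsymbol{s})$ lies in the probability simplex, so every coordinate obeys $\overline{s}_i \in [0,1]$; hence we may take $B_s = 1$ in Thm.\ref{thm:for_lip}, and consequently every argument $\overline{s}_y - \overline{s}_{[k]}$ appearing in $L_K^{sq}, L_K^{exp}, L_K^{logit}$ lies in $[-1,1]$. Applying Thm.\ref{thm:for_lip} to the map $(\boldsymbol{u}, y) \mapsto \frac{1}{K}\sum_{k=1}^{K+1}\ell(u_y - u_{[k]})$ with $\boldsymbol{u} = \overline{\boldsymbol{s}}$ then shows it is Lipschitz \textit{w.r.t.} the variant $\ell_2$-norm of Asm.\ref{ass:lipschitz} with constant pair $\big(\tfrac{L_\ell\sqrt{K+1}}{K}, \tfrac{L_\ell(K+1)}{K}\big)$, where $L_\ell$ is the Lipschitz constant of the chosen surrogate on $[-1,1]$; transferring this to the raw scores $\boldsymbol{s}$ requires only that the softmax map is itself Lipschitz, which introduces the numerical factor ($\sqrt{2}/2$) visible in the stated pairs via the operator norm of its Jacobian $\mathrm{diag}(\boldsymbol{p}) - \boldsymbol{p}\boldsymbol{p}^{T}$.

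What then remains is the per-loss bookkeeping. For the hypotheses of Thm.\ref{thm:for_lip}, strict monotonicity is immediate for $\ell_{exp}$ and $\ell_{logit}$, which are strictly decreasing on all of $\mathbb{R}$; for $\ell_{sq}(t) = (1-t)^2$ one uses $\ell_{sq}'(t) = -2(1-t) < 0$ for $t < 1$, so $\ell_{sq}$ is strictly decreasing exactly on the interval $[-1,1]$ furnished by the normalization — this is precisely why the square loss (non-monotone on $\mathbb{R}$) is admissible here at all. For the Lipschitz constants one simply bounds $|\ell'(t)|$ on $[-1,1]$: $|\ell_{sq}'(t)| = 2|1-t|$, $|\ell_{exp}'(t)| = e^{-t}$, and $|\ell_{logit}'(t)| = (1+e^{t})^{-1}$ (up to a $1/\ln 2$ factor if the base-two logarithm is used), each of which is monotone in $t$ and hence attains its supremum at an endpoint of $[-1,1]$; plugging the resulting values of $L_\ell$ into the pair above and scaling by the softmax factor yields the three claimed constants. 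The restriction to $[-1,1]$ is essential for \emph{all three} losses, not only the square one, since $e^{-t}$ and $(1+e^{t})^{-1}$ blow up as $t \to -\infty$; it is the simplex constraint $\sum_i \overline{s}_i = 1$, $\overline{s}_i \ge 0$ that rules this out.

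I expect the only non-mechanical step to be the softmax part rather than the elementary derivative bounds. The subtlety is that the norm in Asm.\ref{ass:lipschitz} singles out the true-label coordinate $y$ through the extra $|s_y - s_y'|$ term, whereas the softmax couples all coordinates, so one must verify that both $\|\mathsf{soft}(\boldsymbol{s}) - \mathsf{soft}(\boldsymbol{s}')\|_2$ and $|\mathsf{soft}(\boldsymbol{s})_y - \mathsf{soft}(\boldsymbol{s}')_y|$ are controlled by $\|\boldsymbol{s} - \boldsymbol{s}'\|_2$ with the correct constants — this is where the $\sqrt{2}/2$ factor originates. If one instead reads the corollary as a Lipschitz statement in the already-normalized variable $\overline{\boldsymbol{s}}$, even this step disappears and the result is a direct corollary of Thm.\ref{thm:for_lip} with $B_s = 1$; either way, everything beyond evaluating $\sup_{|t| \le 1}|\ell'(t)|$ is routine.
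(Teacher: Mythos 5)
Your proposal matches the paper's own proof essentially step for step: the paper likewise invokes Thm.\ref{thm:for_lip} with the softmax confining each argument $\overline{s}_y - \overline{s}_{[k]}$ to $[-1,1]$, checks strict monotonicity and bounds $\lvert \ell'(t)\rvert$ on that interval for each surrogate, and then folds in the $\tfrac{\sqrt{2}}{2}$ Lipschitz constant of the softmax via Lem.\ref{lem:softmax}. If anything you are more careful than the paper, which silently applies the softmax factor to both components of the constant pair without addressing the coupling between the full $\ell_2$ term and the coordinate-$y$ term that you correctly flag as the one non-mechanical step.
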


\subsection{Generalization Bound for Convolutional Neural Networks}
\label{sec:cnn_bound}
According to Proposition \ref{prop:generalization_gaussian} and Thm.\ref{thm:for_lip}, we could obtain the final generalization bound as soon as a proper upper bound of Gaussian complexity is available. To this end, we first present an important technique for the bound of Gaussian complexity named chaining in Sec.\ref{sec:chaining}. Then, the final result of convolutional neural networks is available under the given regularization condition in Sec.\ref{sec:bound_cnn}.

\subsubsection{Chaining Bound for Gaussian Complexity}
\label{sec:chaining}
Similar to Gaussian complexity, covering number \cite{wainwright2019high} is another technique to measure the complexity of the given (pseudo)metric space:
\begin{definition}[$\epsilon$-covering]
    \label{def:covering}
    Let $\{\mathcal{F}, d\}$ be a (pseudo)metric space and $\mathcal{B}(f, r)$ denote the ball of radius $r$ centered at $f \in \mathcal{F}$. Given $\Theta \subset \mathcal{F}$, we say $\mathcal{F}_m = \{f_i\}_{i=1}^{m}$ is $\epsilon$-covering of $\Theta$ if $\Theta \subset \bigcup_{i=1}^{m} \mathcal{B}\left(f_i, \epsilon\right)$. In other words, for any $\theta \in \Theta$, there exists $f_i \in \mathcal{F}_m$ such that $d(\theta, f_i) < \epsilon$.
\end{definition}

\begin{definition}[Covering Number]
    On top of Def. \ref{def:covering}, covering number of $\Theta$ is defined as the minimum cardinality of any $\epsilon$-covering of $\Theta$:
    \begin{equation}
        \mathfrak{C}(\Theta, \epsilon, d) := \min \{m: \mathcal{F}_m \text{ is } \epsilon\text{-covering of } \Theta\}.
    \end{equation}
\end{definition}

Our task is to bound the Gaussian Complexity with covering number. To this end, we first check the sub-Gaussian property of Gaussian Complexity. See Appendix.\ref{sec_app:subgaussian} for the proof.
\begin{definition}[Sub-Gaussian Process \textit{w.r.t.} the Metric $d_X$] 
    \label{def:subgaussian}
    A collection of zero-mean random variables $\{X_f, f \in \mathcal{F}\}$ is a sub-Gaussian process \textit{w.r.t.} a metric $d_X$ on $\mathcal{F}$ if 
    $$
        \mathbb{E}[e^{\lambda (X_f - X_{f'})}] \le e^{\frac{\lambda^2 d_X^2(f, f')}{2}}, \forall f, f' \in \mathcal{F}, \lambda \in \mathbb{R}.
    $$
\end{definition}

\begin{restatable}[Sub-Gaussian Property of Gaussian Complexity]{proposition}{subgaussian}
    \label{prop:sub_gaussian}
    For any $f \in \mathcal{F}$, let
    $$
        T_{f}(\boldsymbol{g}) = \sum_{i=1}^{n} \sum_{j=1}^{C} g_{i, j} s^{(i)}_{j},
    $$
    Then, $Z := \frac{1}{\sqrt{nC}} (T_{f}(\boldsymbol{g}) - T_{f'}(\boldsymbol{g}))$ is a sub-Gaussian process with metric
    $$
        d_{\infty, \mathcal{S}} (s, s') = \max_{\boldsymbol{z} \in \mathcal{S}} |s_y - s'_y|,
    $$
    where $s^{(i)}_{j} = f(\boldsymbol{x}^{(i)})_j$.
\end{restatable}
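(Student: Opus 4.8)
The plan is to recognize that, once the data $\mathcal{S}$ and a pair $f, f' \in \mathcal{F}$ are fixed, the increment
\[
    Z = X_f - X_{f'} = \frac{1}{\sqrt{nC}}\bigl(T_f(\boldsymbol{g}) - T_{f'}(\boldsymbol{g})\bigr) = \frac{1}{\sqrt{nC}}\sum_{i=1}^{n}\sum_{j=1}^{C} g_{i,j}\,\bigl(s^{(i)}_j - {s'}^{(i)}_j\bigr)
\]
is a \emph{linear} form in the independent standard-normal variables $g_{i,j}$ with deterministic coefficients $c_{i,j} := \bigl(s^{(i)}_j - {s'}^{(i)}_j\bigr)/\sqrt{nC}$, where $s^{(i)}_j = f(\boldsymbol{x}^{(i)})_j$ and ${s'}^{(i)}_j = f'(\boldsymbol{x}^{(i)})_j$. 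A linear combination of independent Gaussians is again Gaussian, so $Z \sim N(0, \sigma^2)$ with $\sigma^2 = \sum_{i,j} c_{i,j}^2 = \frac{1}{nC}\sum_{i=1}^{n}\sum_{j=1}^{C}\bigl(s^{(i)}_j - {s'}^{(i)}_j\bigr)^2$; in particular $\mathbb{E}_{\boldsymbol{g}}[Z] = 0$, so the family $\{X_f : f \in \mathcal{F}\}$ with $X_f := T_f(\boldsymbol{g})/\sqrt{nC}$ is a zero-mean process.

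Next I would bound the variance in terms of the stated (pseudo)metric. Each summand satisfies $\bigl(s^{(i)}_j - {s'}^{(i)}_j\bigr)^2 \le \max_{i,j}\bigl(s^{(i)}_j - {s'}^{(i)}_j\bigr)^2 = d_{\infty,\mathcal{S}}^2(s, s')$, because the $\max$ over the $nC$ index pairs $(i,j)$ is exactly the $\max$ over $\boldsymbol{z}$ ranging through the sample index set that defines $d_{\infty,\mathcal{S}}$: each pair $(i,j)$ corresponds to the point $(\boldsymbol{x}^{(i)}, j)$, whose label coordinate selects the entry $s^{(i)}_j$. Summing the $nC$ terms and cancelling against the $1/(nC)$ prefactor gives $\sigma^2 \le d_{\infty,\mathcal{S}}^2(s, s')$.

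Finally I would invoke the closed form of the Gaussian moment generating function: if $W \sim N(0, \sigma^2)$ then $\mathbb{E}[e^{\lambda W}] = e^{\lambda^2 \sigma^2/2}$ for every $\lambda \in \mathbb{R}$. Taking $W = X_f - X_{f'}$ and using the variance bound just established yields, for all $f, f' \in \mathcal{F}$ and all $\lambda \in \mathbb{R}$,
\[
    \E{\boldsymbol{g}}{e^{\lambda(X_f - X_{f'})}} = e^{\lambda^2 \sigma^2 / 2} \le e^{\lambda^2 d_{\infty,\mathcal{S}}^2(s, s') / 2},
\]
which is precisely the defining inequality in Def.~\ref{def:subgaussian}; hence the increment $Z$ (equivalently, the process $\{X_f\}$) is sub-Gaussian with respect to $d_{\infty,\mathcal{S}}$.

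I do not expect a genuine obstacle here: the proof is essentially ``a linear image of a standard Gaussian vector is Gaussian'' together with one crude coordinatewise bound. The only point that needs care is the bookkeeping in the middle step — aligning the $nC$-term sum that defines $T_f$ with the $\max$ that defines $d_{\infty,\mathcal{S}}$, so that the bound ``$nC$ summands, each $\le d_{\infty,\mathcal{S}}^2$'' is legitimate. This also explains the role of the normalization $1/\sqrt{nC}$ in $Z$: it trades an $\ell_2$-type scale $\sqrt{nC}$ for the $\ell_\infty$-type metric $d_{\infty,\mathcal{S}}$, which is what allows the ensuing chaining argument in Sec.~\ref{sec:chaining} to produce a bound that is insensitive to $C$.
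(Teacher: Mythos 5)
Your proposal is correct and matches the paper's own argument step for step: identify $Z$ as a zero-mean Gaussian linear form in the $g_{i,j}$, compute its variance as $\frac{1}{nC}\sum_{i,j}(s^{(i)}_j - {s'}^{(i)}_j)^2$, bound it coordinatewise by $d_{\infty,\mathcal{S}}^2$, and close with the Gaussian moment generating function. No substantive differences to report.
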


\noindent Since $d_{\infty, \mathcal{S}} (\boldsymbol{s}, \boldsymbol{s}') \le 1$ and
$$
    \mathfrak{G}_{\tilde{S}}(\mathcal{F}) =  \E{\boldsymbol{g}}{\sup_{f \in \mathcal{F}} \frac{1}{nC} T_{f}(\boldsymbol{g}) },
$$ 
we have the chaining bound of Gaussian Complexity according to Dudley’s entropy integral \cite{wainwright2019high}:
\begin{proposition}[Chaining bound of Gaussian Complexity]
    \label{prop:chaining_gaussian}
    \begin{equation}
        \label{equ:chaining_gaussian}
        \mathfrak{G}_{\tilde{S}}(\mathcal{F}) \le \inf_{\alpha \ge 0} \left( C_1\alpha + \frac{C_2}{\sqrt{nC}} \int_\alpha^{1} \sqrt{\log{ \mathfrak{C}(\mathcal{F}, \epsilon, d_{\infty, \mathcal{S}}) }} d\epsilon \right),
    \end{equation}
    where $C_1, C_2$ are two constants. 
\end{proposition}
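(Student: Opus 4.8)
The plan is to recognise $\mathfrak{G}_{\tilde{S}}(\mathcal{F})$ as the expected supremum of the sub-Gaussian process identified in Proposition~\ref{prop:sub_gaussian}, apply the truncated form of Dudley's entropy integral (chaining) for sub-Gaussian processes \cite{wainwright2019high}, and finally rescale the underlying metric so that the covering numbers in the bound are stated with respect to $d_{\infty,\mathcal{S}}$, as in Eq.~(\ref{equ:chaining_gaussian}).

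First I would recentre the process. Since $\boldsymbol{g}$ has i.i.d.\ zero-mean entries, $\mathbb{E}_{\boldsymbol{g}}[T_{f_0}(\boldsymbol{g})]=0$ for any fixed $f_0\in\mathcal{F}$, so $\mathfrak{G}_{\tilde{S}}(\mathcal{F})=\mathbb{E}_{\boldsymbol{g}}\big[\sup_{f\in\mathcal{F}}\frac{1}{nC}(T_f(\boldsymbol{g})-T_{f_0}(\boldsymbol{g}))\big]$. By Proposition~\ref{prop:sub_gaussian}, $\frac{1}{\sqrt{nC}}(T_f-T_{f'})$ is sub-Gaussian with respect to $d_{\infty,\mathcal{S}}$, hence the process $f\mapsto\frac{1}{nC}T_f(\boldsymbol{g})$ is sub-Gaussian with respect to the rescaled pseudometric $\rho:=\frac{1}{\sqrt{nC}}\,d_{\infty,\mathcal{S}}$; since $d_{\infty,\mathcal{S}}\le 1$ on $\mathcal{F}$, the $\rho$-diameter of $\mathcal{F}$ is at most $1/\sqrt{nC}$. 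Feeding this into the truncated chaining bound for sub-Gaussian processes gives, for any $\alpha_0\in[0,1/\sqrt{nC}]$,
\[
    \mathfrak{G}_{\tilde{S}}(\mathcal{F}) \le C_1'\, r(\alpha_0) + C_2'\int_{\alpha_0}^{1/\sqrt{nC}}\sqrt{\log\mathfrak{C}(\mathcal{F},\epsilon,\rho)}\,d\epsilon ,
\]
where the chain links are summed using the sub-Gaussian maximal inequality through Proposition~\ref{prop:sub_gaussian} (this is where the weight $1/\sqrt{nC}$ enters), and $r(\alpha_0)=\mathbb{E}_{\boldsymbol{g}}[\sup_f\frac{1}{nC}(T_f(\boldsymbol{g})-T_{\bar f}(\boldsymbol{g}))]$ is the residual obtained by projecting each $f$ onto the nearest point $\bar f$ of a finest net at scale $\alpha_0$.

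The key structural step is bounding $r(\alpha_0)$: if $d_{\infty,\mathcal{S}}(s,s')\le\beta$, then by $\ell_1$--$\ell_\infty$ duality $\frac{1}{nC}|T_f(\boldsymbol{g})-T_{f'}(\boldsymbol{g})|\le\frac{\beta}{nC}\sum_{i,j}|g_{i,j}|$ \emph{uniformly} over $f$, a single random variable with expectation $\beta\sqrt{2/\pi}\le\beta$; since a $\rho$-scale of $\alpha_0$ is a $d_{\infty,\mathcal{S}}$-scale of $\sqrt{nC}\,\alpha_0$, this gives $r(\alpha_0)\le\sqrt{nC}\,\alpha_0$. I would then undo the rescaling: using $\mathfrak{C}(\mathcal{F},\epsilon,\rho)=\mathfrak{C}(\mathcal{F},\sqrt{nC}\,\epsilon,d_{\infty,\mathcal{S}})$ and the substitution $u=\sqrt{nC}\,\epsilon$, the integral becomes $\frac{1}{\sqrt{nC}}\int_{\sqrt{nC}\,\alpha_0}^{1}\sqrt{\log\mathfrak{C}(\mathcal{F},u,d_{\infty,\mathcal{S}})}\,du$. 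Writing $\alpha:=\sqrt{nC}\,\alpha_0$ (so $\alpha\in[0,1]$ and $r(\alpha_0)\le\alpha$) collapses both terms to $C_1\alpha+\frac{C_2}{\sqrt{nC}}\int_{\alpha}^{1}\sqrt{\log\mathfrak{C}(\mathcal{F},\epsilon,d_{\infty,\mathcal{S}})}\,d\epsilon$ after renaming the constants and the variable of integration; since $\alpha\ge 1$ only weakens the bound (the integral is then trivial), taking the infimum over all $\alpha\ge 0$ yields Eq.~(\ref{equ:chaining_gaussian}).

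I expect the main obstacle to be matching the abstract chaining statement to the rescaled pseudometric $\rho$ and, in particular, verifying that the residual step legitimately produces the explicit $C_1\alpha$ term — this relies essentially on the process being a Gaussian sum paired against $\ell_\infty$-bounded coordinate differences, rather than on abstract sub-Gaussianity alone. The remaining work — the geometric summation of the chain links and the tracking of absolute constants — is routine.
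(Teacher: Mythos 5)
Your proposal is correct and follows essentially the same route as the paper: the paper simply invokes Dudley's entropy integral for the sub-Gaussian process established in Proposition~\ref{prop:sub_gaussian} together with the observation that $d_{\infty,\mathcal{S}}\le 1$, and your argument is exactly that invocation carried out in full — recentering, rescaling the metric by $1/\sqrt{nC}$, bounding the residual via the $\ell_1$--$\ell_\infty$ pairing, and changing variables so the covering numbers are measured in $d_{\infty,\mathcal{S}}$. The extra detail you supply (in particular the explicit $r(\alpha_0)\le\sqrt{nC}\,\alpha_0$ bound that produces the $C_1\alpha$ term) is a faithful expansion of what the paper leaves implicit, not a different method.
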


\subsubsection{Practical Result for Convolutional Neural Networks}
\label{sec:bound_cnn}
On top of Eq.(\ref{equ:chaining_gaussian}), we are now ready to present the generalization bound of a family of neural networks, where both convolution layers and fully-connected layers exist. In fact, it is an application of the recent advance in \cite{DBLP:conf/iclr/LongS20}.

\noindent \textbf{The family of neural networks.} The network consists of $L_c$ convolutional layers and followed by $L_f$ fully-connected layers. Each convolutional layer consists of a convolution followed by an activation function and an optional pooling operation. All the convolutions use zero-padding \cite{deeplearning} with the kernel ${KN}^{(l)} \in \mathbb{R}^{p_l \times p_l \times c_{l-1}\times c_{l}}$ for layer $l \in \{1, \cdots, L_c\}$. All the activation functions and pooling operations are $1$-Lipschitz continuous. Meanwhile, we denote the parameters of the $l$-the fully-connected layer as $V^{(l)}$. Finally, Let $\Theta = \{{KN}^{(1)}, \cdots, {KN}^{(L_c)}, V^{(1)}, \cdots, V^{(L_f)} \}$ denote the set of all the parameters in the given network. 

\noindent \textbf{Regularization of the parameters.} The model input $\boldsymbol{x} \in \mathbb{R}^{d\times d\times c}$ satisfies $||\mathsf{vec}(\boldsymbol{x})|| \le \chi$, where $c$ is the number of channels and $\mathsf{vec}(\cdot)$ is the vectorization operation. Note that convolution is essentially a linear operation, and we denote the operator matrix of the kernel ${KN}^{(l)}$ as $\mathsf{mt}({KN}^{(l)})$. We assume that the initial parameters, denoted as $\Theta_0$, satisfy 
$$\begin{aligned}
    ||\mathsf{mt}({KN}^{(l)}_0)||_2 \le 1 + \nu, l = 1, \cdots, L_c, \\
    ||\mathsf{mt}(V^{(l)}_0)||_2 \le 1 + \nu, l = 1, \cdots, L_f. \\
\end{aligned}$$
And the distance from $\Theta_0$ to the current parameters $\Theta$ is bounded:
\begin{equation}
    \begin{split}
        ||\Theta - \Theta_0|| & := \sum_{l=1}^{L} || \mathsf{mt}({KN}^{(l)}) - \mathsf{mt}({KN}_0^{(l)}) ||_2\\
        & \phantom{\sum_{l=1}^{L} || \mathsf{mt}({KN}^{(l)})} + \sum_{l=1}^{L} || V^{(l)} - V_0^{(l)} ||_2\\
        & \le \beta.
    \end{split}
\end{equation}

Finally, let $\mathcal{F}_{\beta, \nu}$ denote the set of neural networks discribed above. According to Proposition \ref{prop:chaining_bound} in Appendix.\ref{sec_app:bound_cnn}, where the covering number in Eq.(\ref{equ:chaining_gaussian}) and the Gaussian complexity in Eq.(\ref{eq:bound_lip}) are bounded, we have the final result for the generalization bound of $\mathcal{F}_{\beta, \nu}$. See Appendix.\ref{sec_app:bound_cnn} for the details of the proof.

\begin{restatable}[Generalization bound of Convolutional Neural Networks]{theorem}{boundcnn}
    \label{thm:final_bound}
    Given the set of convolution neural networks $\mathcal{F}_{\beta, \nu}$, if the loss function $L$ satisfies Asm.\ref{ass:lipschitz}, then for any $\delta \in (0, 1)$, with a probability of at least $1 - \delta$, we have
    $$\begin{aligned}
        \Phi(S, \delta) \precsim \mathcal{O} & \left( \frac{ WL_1\sqrt{C}\log(nC B_{\beta,\nu, \chi}) }{\sqrt{n}}\right) \\ 
        & \phantom{ WL_1\sqrt{C}} + \mathcal{O}\left(\frac{ WL_2 \log(nB_{\beta, \nu, \chi}) }{\sqrt{n}}\right), 
    \end{aligned}$$
    where $B_{\beta, \nu, \chi} := \chi \beta (1 + \nu + \beta / L_{a}) ^ {L_{a}}$, $L_{a} = L_{c} + L_{f}$, and $W$ is the number of parameters in $\mathcal{F}_{\beta, \nu}$.
\end{restatable}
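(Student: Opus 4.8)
The plan is to assemble the bound from four ingredients that are already in place: the Lipschitz-type generalization bound of Proposition~\ref{prop:generalization_gaussian}, the explicit Lipschitz constants of the $\atopk$ surrogate losses from Theorem~\ref{thm:for_lip} and Corollary~\ref{coll:lipschitz_surrogate}, the sub-Gaussian property of the Gaussian-complexity functional (Proposition~\ref{prop:sub_gaussian}) together with Dudley's chaining bound (Proposition~\ref{prop:chaining_gaussian}), and a covering-number estimate for the family $\mathcal{F}_{\beta,\nu}$ imported from \cite{DBLP:conf/iclr/LongS20}. First I would observe that because all scores are passed through the softmax and hence lie in $[0,1]$, each surrogate loss in Corollary~\ref{coll:consistent_loss} is bounded on the relevant domain, so the hypothesis $L\le B$ of Lemma~\ref{lem:generalization_gaussian} holds with an $\mathcal{O}(1)$ constant; this legitimizes the $3B\sqrt{\log(2/\delta)/2n}$ term that is absorbed into $\Phi(\mathcal{S},\delta)$, and Corollary~\ref{coll:lipschitz_surrogate} supplies Lipschitz constants $L_1,L_2$ that are $\mathcal{O}(1)$ for fixed $K$. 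Consequently Proposition~\ref{prop:generalization_gaussian} reduces the entire task to bounding the two Gaussian complexities $\mathfrak{G}_{\widetilde{\mathcal{S}}}(\widetilde{\mathcal{F}})$ and $\mathfrak{G}_{\mathcal{S}}(\widetilde{\mathcal{F}})$ of the scalar class $\widetilde{\mathcal{F}}=\{\boldsymbol{e}_y^{\top} f : f\in\mathcal{F}_{\beta,\nu}\}$.

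Second, I would control these complexities by chaining. Proposition~\ref{prop:sub_gaussian} shows that the relevant empirical process is sub-Gaussian with respect to the data-dependent pseudometric $d_{\infty,\mathcal{S}}$, and $d_{\infty,\mathcal{S}}\le 1$, so Dudley's entropy integral (Proposition~\ref{prop:chaining_gaussian}) gives, for every truncation level $\alpha\ge 0$,
$$\mathfrak{G}_{\widetilde{\mathcal{S}}}(\widetilde{\mathcal{F}}) \le C_1\alpha + \frac{C_2}{\sqrt{nC}}\int_{\alpha}^{1}\sqrt{\log\mathfrak{C}(\mathcal{F}_{\beta,\nu},\epsilon,d_{\infty,\mathcal{S}})}\,d\epsilon,$$
and an analogous inequality (with $nC$ replaced by $n$) bounds $\mathfrak{G}_{\mathcal{S}}(\widetilde{\mathcal{F}})$; here I use that the covering number of the coordinate projection $\widetilde{\mathcal{F}}$ under $d_{\infty,\mathcal{S}}$ is dominated by that of the vector-valued class $\mathcal{F}_{\beta,\nu}$ under the corresponding sup-metric. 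Everything now hinges on the metric entropy $\log\mathfrak{C}(\mathcal{F}_{\beta,\nu},\epsilon,d_{\infty,\mathcal{S}})$.

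Third, I would invoke the covering-number machinery of \cite{DBLP:conf/iclr/LongS20} for deep convolutional networks. The key steps are: (i) represent each convolution through its operator matrix $\mathsf{mt}(KN^{(l)})$, so that the whole network is a composition of linear maps interleaved with $1$-Lipschitz activations and poolings; (ii) use the spectral bounds $\|\mathsf{mt}(KN^{(l)}_0)\|_2,\|\mathsf{mt}(V^{(l)}_0)\|_2\le 1+\nu$ at initialization, the $\beta$-ball constraint on $\Theta-\Theta_0$, and $\|\mathsf{vec}(\boldsymbol{x})\|\le\chi$ to bound the Lipschitz constant of every network in $\mathcal{F}_{\beta,\nu}$ by the product-over-layers quantity that is exactly $B_{\beta,\nu,\chi}=\chi\beta(1+\nu+\beta/L_a)^{L_a}$; (iii) build an $\epsilon$-cover layer by layer by discretizing the $W$ parameters, obtaining $\log\mathfrak{C}(\mathcal{F}_{\beta,\nu},\epsilon,d_{\infty,\mathcal{S}})$ that is polynomial in $W$ and only logarithmic in $B_{\beta,\nu,\chi}$, $nC$, and $1/\epsilon$. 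Plugging this into the chaining integral and choosing $\alpha$ of order $1/\sqrt{nC}$ (respectively $1/\sqrt{n}$) collapses the integral to $\mathcal{O}(W\log(nCB_{\beta,\nu,\chi}))$, so $\mathfrak{G}_{\widetilde{\mathcal{S}}}(\widetilde{\mathcal{F}})=\mathcal{O}(W\log(nCB_{\beta,\nu,\chi})/\sqrt{nC})$ and $\mathfrak{G}_{\mathcal{S}}(\widetilde{\mathcal{F}})=\mathcal{O}(W\log(nB_{\beta,\nu,\chi})/\sqrt{n})$. Substituting back into Proposition~\ref{prop:generalization_gaussian}, the term $L_1 C\cdot W\log(nCB_{\beta,\nu,\chi})/\sqrt{nC}=L_1 W\sqrt{C}\log(nCB_{\beta,\nu,\chi})/\sqrt{n}$ yields the first summand and $L_2\cdot W\log(nB_{\beta,\nu,\chi})/\sqrt{n}$ the second, which is exactly the claimed bound.

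I expect the main obstacle to be the third step: making the covering-number estimate for $\mathcal{F}_{\beta,\nu}$ come out in the right shape, with $W$ appearing \emph{polynomially outside} the logarithm and $B_{\beta,\nu,\chi}$ confined to a \emph{logarithmic} factor. This requires carefully carrying the operator-matrix and spectral-norm bookkeeping through both convolutional and fully-connected layers, tracking how perturbations of the $W$ parameters propagate to output perturbations measured in the data-dependent metric $d_{\infty,\mathcal{S}}$, and verifying that the resulting entropy remains integrable near $\epsilon=0$ so that the chaining integral converges after the $\alpha$-truncation. The remaining pieces---boundedness of the surrogate loss, the Lipschitz constants from Corollary~\ref{coll:lipschitz_surrogate}, the sub-Gaussian/Dudley reduction, and the final arithmetic of combining the two summands in Proposition~\ref{prop:generalization_gaussian}---are essentially routine bookkeeping.
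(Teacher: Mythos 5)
Your proposal follows essentially the same route as the paper's proof: Proposition~\ref{prop:generalization_gaussian} reduces everything to the two Gaussian complexities of $\widetilde{\mathcal{F}}_{\beta,\nu}$, the $(B_{\beta,\nu,\chi},W)$-Lipschitz parameterization imported from \cite{DBLP:conf/iclr/LongS20} (extended to $\widetilde{\mathcal{F}}_{\beta,\nu}$ as in Corollary~\ref{coll:tilde_F_beta_nu}) is converted to a covering-number bound and fed into the chaining estimate of Proposition~\ref{prop:chaining_gaussian}, and the truncation choices $\alpha\sim 1/\sqrt{nC}$ and $\alpha\sim 1/\sqrt{n}$ give exactly the two summands. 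The only cosmetic discrepancy is that if you keep the $\sqrt{\log\mathfrak{C}}$ inside Dudley's integral the parameter count would come out as $\sqrt{W}$ rather than $W$ (the paper silently drops the square root at this step), and either way the stated bound follows.
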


    \begin{remark}
        \label{rem:insensitive}
        According to Thm.\ref{thm:bayes_optimal} and the generalization bound, it seems that we can simply set $K$ as the number of classes. However, only the ranking of the ground-truth label and the ambiguous labels is of interest, which are all top-ranked classes. In this sense, a large $K$ will provide a biased estimation of the performance by including more uninformative loss terms when taking the average. Motivated by this, a moderate $K$, which is exactly equal to or slightly larger than the maximum number of relevant labels, might be a better choice. Since the maximum number of relevant labels is proportional to the number of classes, we set $K \ge p \cdot C$, where $p$ is a universal constant. Note that $L_1 \sim \mathcal{O}(\frac{1}{\sqrt{K}})$ according to Thm.\ref{thm:for_lip}. In this sense, $\Phi(\mathcal{S}, \delta)$ becomes irrelevant to $C$.
    \end{remark}

    \textbf{Computational Complexity.} On one hand, the proposed method shares a similar convergence rate with prior arts since it can be optimized efficiently via common optimizers such as SGD \cite{DBLP:conf/icml/SutskeverMDH13}. On the other hand, the additional complexity of the proposed method comes from the top-$K$ ranking operator. With the help of heapsort, the average complexity of each sample is in order of $\mathcal{O}(C\log K)$. Then, the total complexity is in order of $\mathcal{O}(nC\log K + nK)$, where $n$ is the number of samples. As analyzed in the revised remark for Thm.9, we prefer choosing a moderate $K$. Thus, compared with top-$k$ optimization, whose complexity is in order of $\mathcal{O}(nC + n)$, the complexity of the proposed method is still moderate. 

\section{Experiment}
\label{sec:experiment}
In this section, we analyze the empirical results on four benchmark datasets to validate the proposed $\atopk$ optimization framework as well as the theoretical results.
\subsection{Datasets}
We first describe the datasets used in the following discussion:
\begin{itemize}
    \item Cifar-10 \cite{krizhevsky2009learning}: This dataset\footnote{\url{https://www.cs.toronto.edu/~kriz/cifar.html} \label{foot:cifar}} consists of 60,000 32 $\times$ 32 color images in 10 classes, with 6,000 images per class. There are 50,000 and 10,000 images in the training set and the test set, respectively. At the first glance, Cifar-10 is not suitable for the context of label ambiguity since all the classes are completely mutually exclusive. For example, the class \texttt{Automobile} includes sedans, SUVs, things of that sort while the class \texttt{Truck} includes only big trucks. However, we have shown that $\atopk$ is a better metric than $\topk$ for any $k < K$ in Sec.\ref{sec:autkc_topk}. We adopt this dataset to justify this argument even when no label ambiguity involves.
    \item Cifar-100 \cite{krizhevsky2009learning}: This dataset\textsuperscript{\ref{foot:cifar}} is similar to the Cifar-10, except it has 100 classes each containing 600 images. There are 500 training images and 100 testing images per class. As the number of classes increases, label ambiguity inevitably happens. For example, the class \texttt{Sea} tends to overlap with the class \texttt{Dolphin} and \texttt{Seal}; the class \texttt{Cloud} is widely distributed in outdoor classes such as \texttt{House}, \texttt{Mountain}, \texttt{Plain} and \texttt{Skyscraper}.
    \item Tiny-imagenet-200\footnote{\url{https://www.kaggle.com/c/tiny-imagenet}}: This dataset contains 100,000 64 $\times$ 64 color images in 200 classes. Each class has 500 training images, 50 validation images and 50 test images. The classification challenge on this dataset adopts both $\mathsf{TOP}\text{-}\mathsf{1}$ and $\mathsf{TOP}\text{-}\mathsf{5}$ as the evaluation metrics. Since the test set does not provide annotations, we report the performances on the validation set.
    \item Places-365 \cite{zhou2017places}: The standard Places-365\footnote{\url{http://places2.csail.mit.edu/download.html}} dataset contains 1.8 million training images in 365 scene classes. There are 50 images per class in the validation set and 900 images per class in the test set. We use this dataset to validate the proposed framework on large-scale datasets.
\end{itemize}

\begin{table}[t]
    \renewcommand{\arraystretch}{1.7}
    \caption{The prior arts of $\topk$ optimization, where $\overline{\mathbf{1}}_y := \mathbf{1} - e_y$. In these methods, TCE is a truncated variant of the traditional cross-entropy loss; and the others are induced by the multiclass hinge loss.}
    \label{table:arts_topk}
    \centering
    \begin{tabular}{ccc}
        \toprule
        Name & Formulation & References \\
        \midrule
        \textbf{L1} & $\left[ 1 + (s_{\setminus y})_{[k]} - s_y \right]_+$ & \cite{DBLP:conf/nips/LapinHS15,DBLP:conf/iclr/BerradaZK18} \\
        \textbf{L2} & $\left[ \frac{1}{k} \sum_{j=1}^{k} ( s + \overline{\mathbf{1}}_y )_{[j]} - s_y \right]_+$ & \cite{DBLP:conf/nips/LapinHS15,DBLP:conf/cvpr/Lapin0S16,DBLP:journals/pami/LapinHS18} \\
        \textbf{L3} & $\frac{1}{k} \sum_{j=1}^{k} \left[ ( s + \overline{\mathbf{1}}_y )_{[j]} - s_y \right]_+$ & \cite{DBLP:conf/nips/LapinHS15,DBLP:conf/cvpr/Lapin0S16,DBLP:journals/pami/LapinHS18} \\
        \textbf{L4} & $\left[ \frac{1}{k} \sum_{j=1}^{k} ( 1 + (s_{\setminus y})_{[j]}) - s_y \right]_+$ & \cite{DBLP:conf/icml/YangK20} \\
        \textbf{L5} & $\left[ 1 + s_{[k+1]} - s_y \right]_+$ & \cite{DBLP:conf/icml/YangK20} \\
        \textbf{TCE} & $\log \left( 1 + \sum_{j=1}^{k} \exp (s_{[j]} - s_y) \right)$ & \cite{DBLP:conf/cvpr/Lapin0S16,DBLP:journals/pami/LapinHS18} \\
        \bottomrule
    \end{tabular}
\end{table}

\begin{table*}[htbp]
    \centering
    \renewcommand{\arraystretch}{1.05}
    \caption{The empirical results of $\atopk^\uparrow @ K$ on the four datasets. The Best and the runner-up methods on each metric are marked with {\color{Top1}\textbf{red}} and {\color{Top2}\textbf{blue}}, respectively. The best method of the competitors is marked with \underline{underline}.}
      \begin{tabular}{c|c|cc|ccc|ccc|ccc}
      \toprule
      \multirow{2}{*}[-2pt]{Type} & Dataset & \multicolumn{2}{c|}{Cifar-10} & \multicolumn{3}{c|}{Cifar-100} & \multicolumn{3}{c|}{Tiny-imagenet-200} & \multicolumn{3}{c}{Places-365} \\
      \cmidrule(){2-2}  \cmidrule(){3-4}  \cmidrule(){5-7} \cmidrule(){8-10} \cmidrule(){11-13}
      & $K$ & 3     & 5     & 3     & 5     & 10    & 3     & 5     & 10    & 3     & 5     & 10 \\
      \midrule
      \multirow{2}{*}{Traditional} & CE    & \underline{92.87}  & \underline{95.24}  & 71.81  & 76.70  & 83.05  & 81.56  & 85.34  & 89.66  & 67.72 & 74.16  & \underline{82.31} \\
      & Hinge & 92.75  & 95.20  & 71.33  & 76.65  & 83.35  & 79.85  & 83.95  & 88.74  & 66.84  & 73.38  & 81.75  \\
      \midrule
      \multirow{6}{*}{Top-$k$} & L1    & 89.58  & 93.25  & 69.90  & 76.19  & 83.19  & 81.41  & 85.22  & 89.54  & 66.50  & 72.94  & 80.55  \\
      & L2    & 92.04  & 94.70  & 71.34  & 77.19  & 83.98  & 81.43  & 85.30  & 89.64  & 67.19  & 73.78  & 81.98  \\
      & L3    & 92.56  & 94.92  & \underline{71.98}  & 76.78  & 82.81  & \underline{81.87}  & \underline{85.48}  & \underline{89.66} & \underline{67.86}  & \underline{74.22}  & 82.15  \\
      & L4    & 91.64  & 94.48  & 71.47  & \underline{77.28}  & \underline{84.05} & 81.16  & 85.11  & 89.48  & 67.27  & 73.85  & 82.06  \\
      & L5    & 88.88  & 92.79  & 68.88  & 75.27  & 82.31  & 81.24  & 85.10  & 89.43  & 65.93  & 72.84  & 81.12  \\
      & TCE   & 90.29  & 93.74  & 71.41  & 76.99  & 83.73  & 79.45  & 83.82  & 88.76  & 67.52  & 74.06  & 82.19  \\
      \midrule
      \multirow{4}{*}[-4pt]{AUTKC (Ours)} & AUTKC-Hinge & 92.93  & 95.27  & 73.29  & 78.33  & 84.57  & {\color{Top2}\textbf{82.20}} & 85.73  & 89.99 & 67.89  & 74.28  & 82.33  \\
      \cmidrule(){2-2}  \cmidrule(){3-4}  \cmidrule(){5-7} \cmidrule(){8-10} \cmidrule(){11-13}
      & AUTKC-Sq & 93.00  & 95.01 & {\color{Top2}\textbf{73.39}}  & {\color{Top2}\textbf{78.43}}  & {\color{Top2}\textbf{84.76}}  & 82.05  & 85.70  & 89.99  & 67.92  & 74.33  & {\color{Top2}\textbf{82.38}}  \\
      & AUTKC-Exp & {\color{Top2}\textbf{93.01}}  & {\color{Top2}\textbf{95.30}}  & {\color{Top1}\textbf{73.60}}  & {\color{Top1}\textbf{78.70}}  & {\color{Top1}\textbf{84.87}} & {\color{Top1}\textbf{82.87}}  & {\color{Top2}\textbf{85.77}} & {\color{Top2}\textbf{90.01}} & {\color{Top1}\textbf{67.95}}  & {\color{Top1}\textbf{74.36}}  & {\color{Top1}\textbf{82.38}}  \\
      & AUTKC-Logit & {\color{Top1}\textbf{93.05}}  & {\color{Top1}\textbf{95.30}}  & 73.29  & 78.42  & 84.69 & 82.12  & {\color{Top1}\textbf{85.81}}  & {\color{Top1}\textbf{90.05}} & {\color{Top2}\textbf{67.95}}  & {\color{Top2}\textbf{74.34}}  & 82.37  \\
      \bottomrule
      \end{tabular}
    \label{tab:overall_results_autkc}
\end{table*}

\subsection{Competitors}
Specifically, we hope to validate three arguments: (1) $\topk$ is not an ideal optimization objective, as presented in Sec.\ref{sec:motivation}; (2) $\atopk$ is a better metric than $\topk$ since the models induced by $\atopk$ optimization consistently achieve better performance; (3) The hinge loss is inconsistent in $\atopk$ optimization. Motivated by this, the competitors include: 
\begin{itemize}
    \item Traditional loss functions for multiclass classification, such as cross-entropy loss (\textbf{CE}) and multi-class hinge loss (\textbf{Hinge}).
    \item Prior arts of $\topk$ optimization, whose details are presented in Tab.\ref{table:arts_topk}. Note that \textbf{TCE} and \textbf{L1} are induced by \textbf{CE} and \textbf{Hinge} via simple top-$k$ truncation, respectively. Meanwhile, \textbf{L2}, \textbf{L3} and \textbf{L4} are hinge-like top-$k$ loss functions consistent with $\topk$ when no ties exist in the scores. \textbf{L5} is the recent consistent variant even when ties exist. The value of $k$ is searched in $\{3, 5\}$ on Cifar-10 and in $\{3, 5, 10\}$ on the other datasets.
    \item Our $\atopk$ optimization framework implemented with the hinge surrogate loss (\textbf{AUTKC-Hinge}):
            $$
                L_K^{hinge}(s, y) = \frac{1}{K} \sum_{k=1}^{K+1} \ell_{hinge}(s_y - s_{[k]}),
            $$
          where $K$ is searched in the same space as $k$.
    \item Our $\atopk$ optimization framework implemented with the square surrogate loss (\textbf{AUTKC-Sq}), the exponential surrogate loss (\textbf{AUTKC-Exp}) and the logit surrogate loss (\textbf{AUTKC-Logit}), that is, $L_K^{exp}$, $L_K^{logit}$ and $L_K^{sq}$ mentioned in Corollary \ref{coll:lipschitz_surrogate}. The search space of $K$ is same as that of $k$, and the corresponding ablation study is shown in Sec.\ref{sec:sensitivity}.
\end{itemize}

\subsection{Implementation details}
\noindent \textbf{Infrastructure.} We carry out all the experiments on an ubuntu 16.04 server equipped with Intel(R) Xeon(R) Silver 4110 CPU and an Nvidia(R) TITAN RTX GPU. The version of CUDA is 10.2 with GPU driver version 440.44. The codes are implemented via \texttt{python} (v-3.8.11) \cite{DBLP:conf/nips/PaszkeGMLBCKLGA19}, and the main third-party packages include \texttt{pytorch} (v-1.9.0), \texttt{numpy} (v-1.20.3), \texttt{scikit-learn} (v-0.24.2) and \texttt{torchvision} (v-0.10.0).

\noindent \textbf{Evaluation Metric.} Given a trained score function $f$, we evaluate its performance with $\atopk^\uparrow$ (the up the better):
$$
    \atopk^\uparrow = \frac{1}{nK} \sum_{i=1}^{n} \left| \left\{ k: f(\boldsymbol{x}_i)_{y_i} > f(\boldsymbol{x}_i)_{[k]}, k \le K+1 \right\} \right|.
$$
The value of $K$ ranges in $\{3, 5\}$ on Cifar-10 and $\{3, 5, 10\}$ on the other datasets. Meanwhile, we also record the performance at some points of the top-$k$ curve:
$$
    \topk^\uparrow = \frac{1}{n} \sum_{i=1}^{n} \I{ f(\boldsymbol{x}_i)_{y_i} > f(\boldsymbol{x}_i)_{[k]} },
$$
where the value of $k$ is in the range of $[1, 5]$ on Cifar-10 and $[1, 10]$ on the other datasets.

\noindent \textbf{Backbone and Optimization Method.} We utilize ResNet-18 \cite{DBLP:conf/cvpr/HeZRS16} as the backbone on Cifar-10, Cifar-100 and Tiny-imagenet-200. The parameters of this neural network are initialized with the checkpoint pre-trained on ILSVRC2012 \cite{DBLP:conf/cvpr/DengDSLL009}, and we finetine the whole model by \texttt{Stochastic Gradient Descent} (SGD) \cite{DBLP:conf/icml/SutskeverMDH13} with Nesterov momentum. Empirically, the batch size and momentum are set as 128 and 0.9, respectively; the learning rate at the first epoch is searched in $\{0.01, 0.001, 0.0001\}$, and decays exponentially with a ratio of 0.1 every 30 epochs; the $\ell_2$ regularization term is searched in $\{0.01, 0.001, 0.0001, 0.00001\}$. On Places-365, to scale up the training period, we extract the original images to the features with 2,048 dimensions via the official pre-trained ResNet-50\footnote{\url{https://github.com/CSAILVision/places365}}. Then, the backbone is implemented with a three-layer fully-connected neural network, whose parameters are randomly initialized. The optimization strategy is similar to that on the other datasets, except that the batch size is 4096 and the initial learning rate is searched in $\{0.1, 0.01, 0.001, 0.0001\}$. The total number of epochs is set as 50 on Tiny-imagenet-200 and 90 on the other datasets. Note that Tiny-imagenet-200 has the same domain as ILSVRC2012, which helps the training process converge in fewer epochs.

\noindent \textbf{Warm-Up Training.} Focusing on a few classes during the early training period makes it easy to suffer from over-fitting. Thus, we adopt a warm-up strategy to encourage the attention on all the classes. Specifically, the model is trained with CE at the first $E_w$ epochs. Afterward, we start the $\atopk$ training phase by optimizing the proposed objectives. The corresponding ablation study is shown in Sec.\ref{sec:sensitivity}, where $E_w$ is searched in $\{5, 10, 20, 30, 40\}$.

\begin{figure}[t]
    \centering
    \subfigure[Cifar-10]{
        \begin{minipage}[t]{0.45\linewidth}
        \centering
        \includegraphics[width=\linewidth, height=0.8\linewidth]{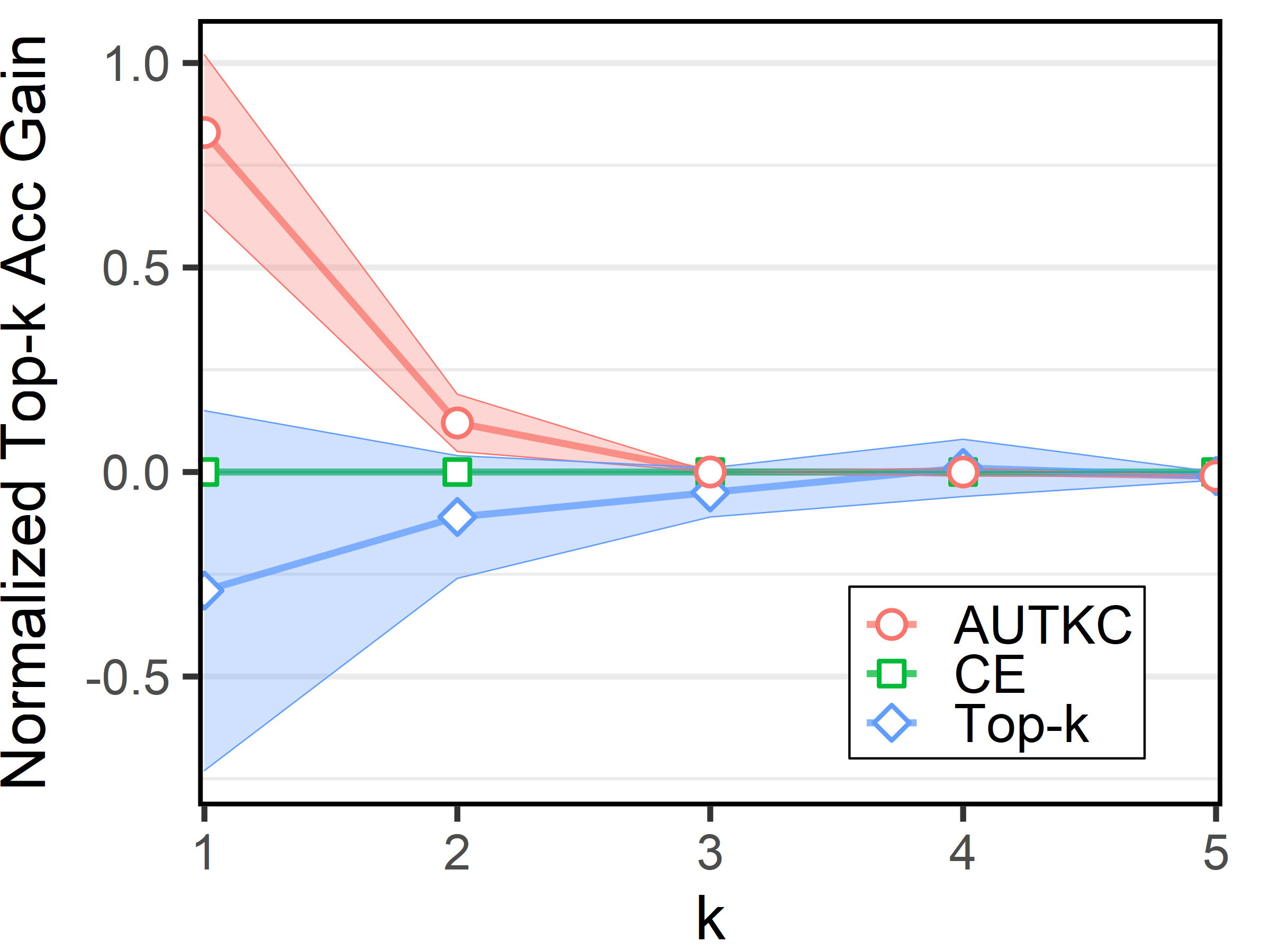}
        \end{minipage}%
    }
    \subfigure[Cifar-100]{
        \begin{minipage}[t]{0.45\linewidth}
        \centering
        \includegraphics[width=\linewidth, height=0.8\linewidth]{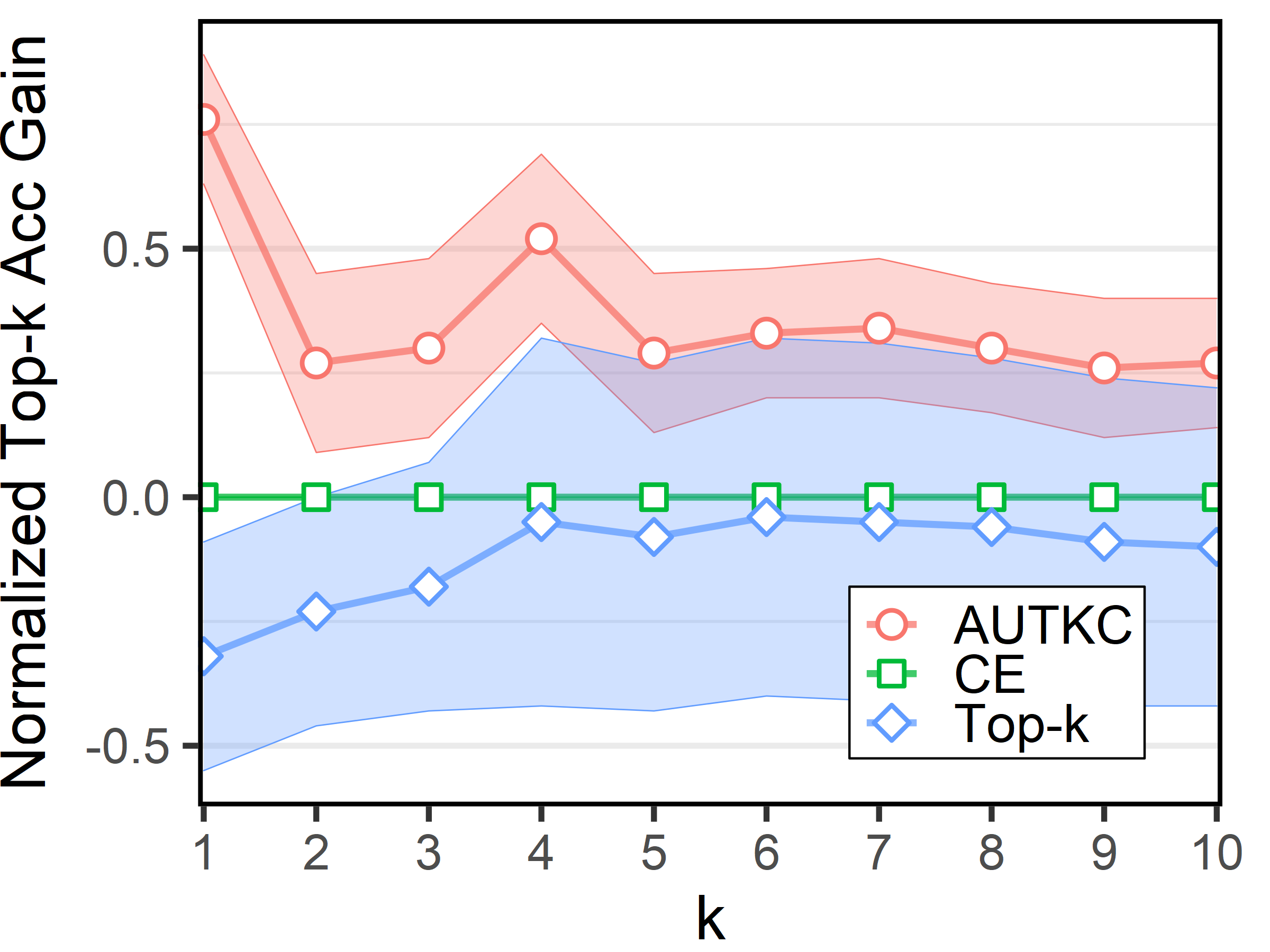}
        \end{minipage}%
    }

    \subfigure[Tiny-imagenet-200]{
        \begin{minipage}[t]{0.45\linewidth}
        \centering
        \includegraphics[width=\linewidth, height=0.8\linewidth]{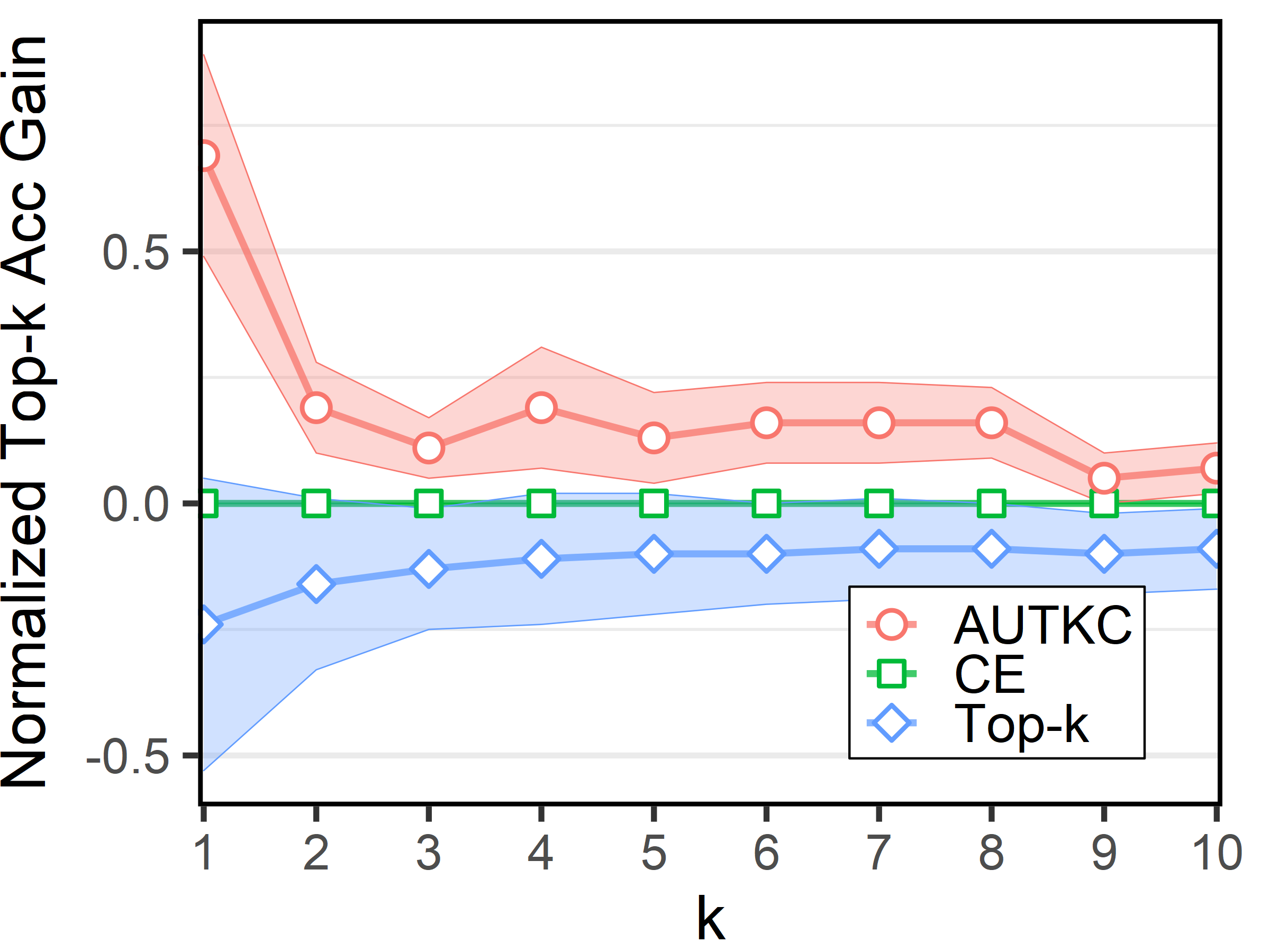}
        \end{minipage}
    }%
    \subfigure[Places-365]{
        \begin{minipage}[t]{0.45\linewidth}
        \centering
        \includegraphics[width=\linewidth, height=0.8\linewidth]{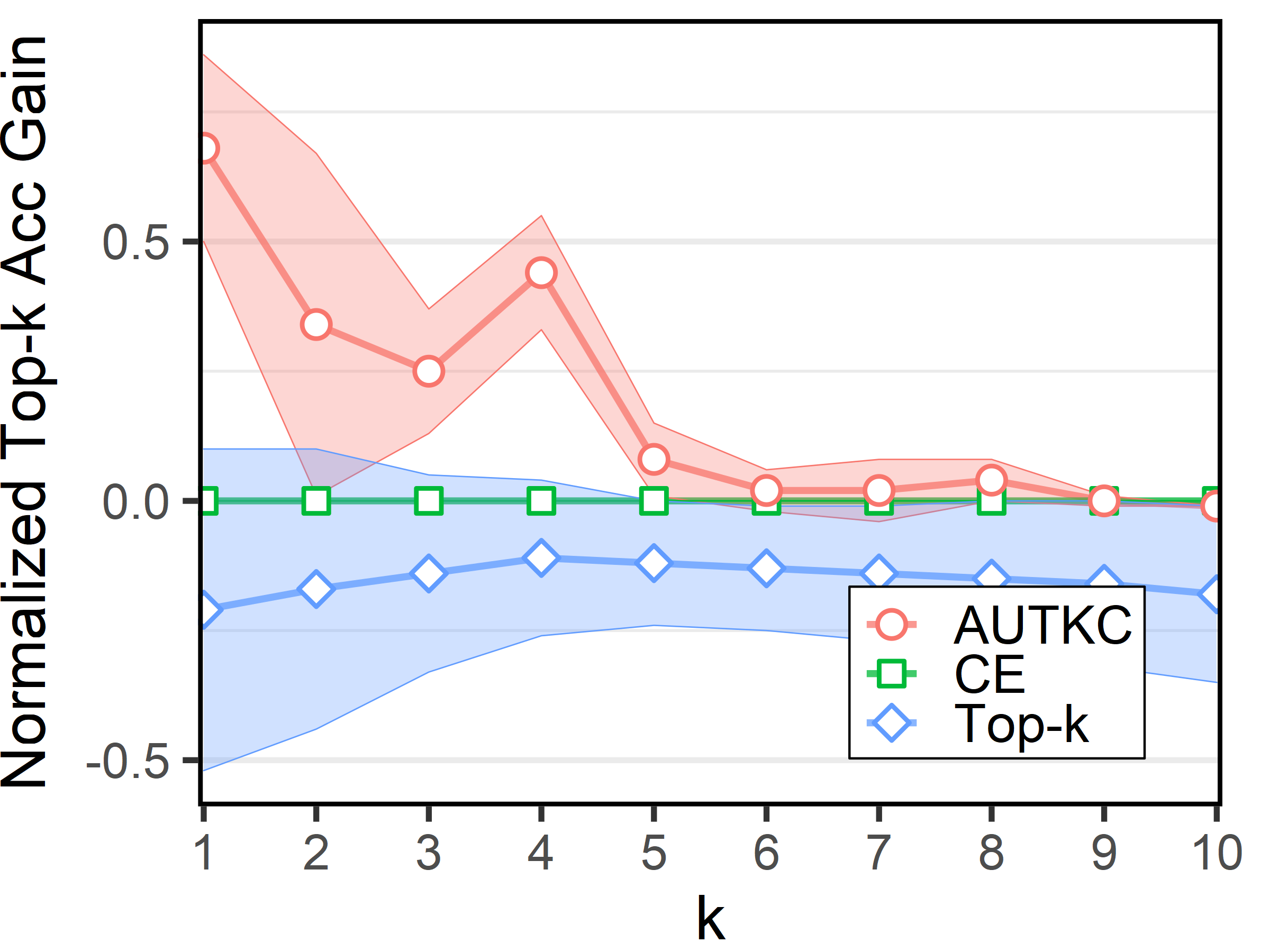}
        \end{minipage}%
    }
\centering
\caption{The normalized top-$k$ accuracy gain with respect to CE on the four datasets. The scattered points represent the average performance of each method type, and the shadow represents the standard deviation.}
\label{fig:topk_curve}
\end{figure}

\begin{figure*}[t]
    \centering
    \includegraphics[width=0.85\linewidth]{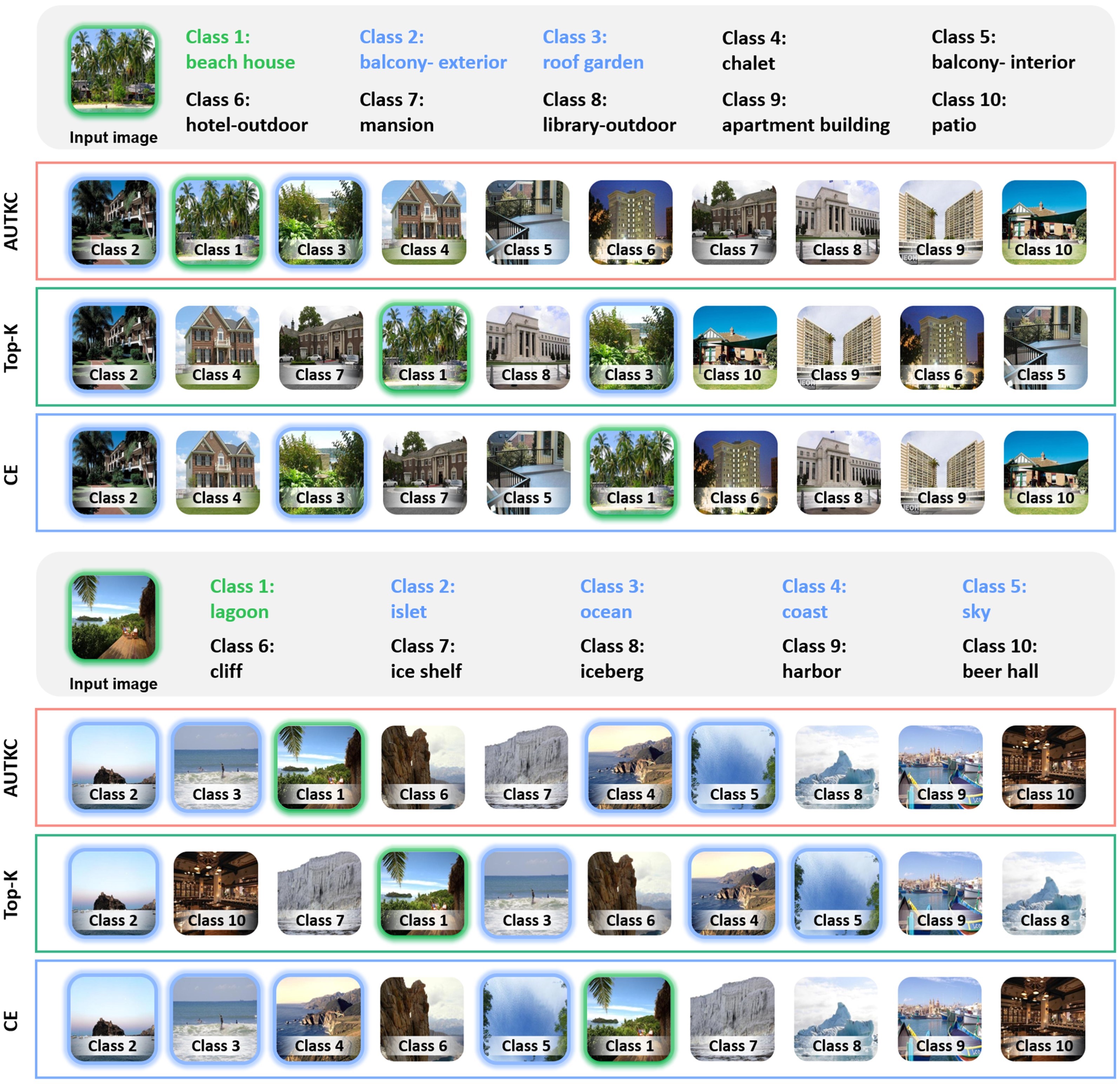}
    \caption{Case study on Places-365. For each input, we select {\color{gt} \textbf{the annotated label}}, {\color{am} \textbf{ambiguous labels}} and \textbf{some irrelevant labels} to visualize the ranking result produced by different types of method. In each ranking list, the annotated label and the ambiguous labels are highlighted with {\color{gt} \textbf{green}} and {\color{am} \textbf{blue}} box, respectively.}
    \label{fig:case}
\end{figure*}

\subsection{Results and Analysis}
\subsubsection{Overall Performance}
\noindent \textbf{The Comparison on AUTKC.} First, we record the best performance of each method on each dataset in Tab.\ref{tab:overall_results_autkc}. From the results, we have the following observations:
\begin{itemize}
    \item The AUTKC methods outperform all the competitors consistently on all the metrics on Cifar-100, Tiny-imagenet-200 and Places-365. This validates the effectiveness of the proposed optimization framework on the datasets where label ambiguity exists.
    \item A slight improvement is also attainable with AUTKC optimization methods on Cifar-10, where the number of classes is small. This result evidences our theoretical results in Sec.\ref{sec:autkc_topk} and Sec.\ref{sec:bayes_optimality}, that is, $\atopk$ is more discriminating than $\topk$.
    \item Among the four surrogate losses for AUTKC optimization, the hinge loss performs inferior to the others, which is consistent with our theoretical analysis in Sec.\ref{sec:surrogate}.
    \item Top-$k$ optimization methods achieve a slight improvement \textit{w.r.t.} CE on Cifar-100, Tiny-imagenet-200 and Places-365. This empirical result is consistent with the observations in \cite{DBLP:journals/pami/LapinHS18,DBLP:conf/icml/YangK20}.
    \item On Cifar-10, the performances of top-$k$ optimization methods fail to surpass CE, which also validates the limitation of $\topk$ shown in Sec.\ref{sec:motivation}.
\end{itemize}

\noindent \textbf{The Comparison on the Top-$k$ Curve.} Then, we plot the top-$k$ curve in Fig.\ref{fig:topk_curve} for more fine-grained comparison. Note that we summarize the results of each type of method for concise illustration, where the scattered points represent the average performance of each type of method, and the shadow represents the standard deviation. Meanwhile, the top-$k$ accuracy ranges from 50 to 90 at different $k$ points, while the performance gap between different methods is significantly smaller. This scale difference makes the performance gap almost invisible. To fix this issue, we normalize the performance gap on each dataset and each $k$ point by the following strategy:
$$\begin{aligned}
    \text{Norm} & \text{alized Top-} k \text{ Acc Gain of method } m \\ 
    & = \left\{  
        \begin{array}{ll}  
            \text{Gain}(m, k) / G_+, & \text{Gain}(m, k) > 0;\\  
            \text{Gain}(m, k) / G_-, & \text{otherwise}.\\  
        \end{array}  
    \right.  
\end{aligned}$$
where $\text{Gain}(m, k)$ is the top-$k$ accuracy gain of method $m$ with respect to CE, and
$$\begin{aligned}
    G_+ & := | \max_{m, k}\left\{ \text{Gain}(m, k) \right\} |, \\
    G_- & := | \min_{m, k}\left\{ \text{Gain}(m, k) \right\} | 
\end{aligned}
$$
are the maximum and minimum of all the gains on the data-set, respectively. According to Fig.\ref{fig:topk_curve}, we have the following observations:
\begin{itemize}
    \item In most cases, the average gains of the proposed framework are positive, especially when $k$ is small. This again suggests that optimizing AUTKC is beneficial to the top-$k$ accuracy with $k < K$.
    \item The gains of top-$k$ optimization methods tend to be negative. Moreover, we can see that the gains are increasing at the first several $k$s. All these observations again validate our analysis of the limitation of $\topk$. Review Sec.\ref{sec:motivation} for the details.
\end{itemize}

\noindent \textbf{Case Study.} After the quantitative comparison, we further visualize the ranking results of different types of methods in Fig.\ref{fig:case} to further investigate the performance. As illuminated by the demo images, the class \texttt{beach house} is semantically related to the class \texttt{balcony-exterior} and \texttt{roof garden}, and the class \texttt{lagoon} naturally overlaps with the class \texttt{islet}, \texttt{ocean}, \texttt{coast} and \texttt{sky}. As a result, all these methods fail to rank the annotated label first due to label ambiguity. For these instances, CE ranks the ambiguous labels higher than the annotated label, thus degenerating the top-$k$ performance. Meanwhile, the top-$k$ method succeeds in improving the order of the annotated label, while the irrelevant labels are ranked higher, such as \texttt{chalet} and \texttt{mansion} in the first case and \texttt{ice shelf} and \texttt{beer hall} in the second one. In contrast, the annotated label and the ambiguous labels are both ranked higher by the proposed AUTKC method. These cases again validate the superiority of AUTKC and the proposed framework.

\begin{figure}[t]
    \centering
    \subfigure[AUTKC-Hinge]{
        \begin{minipage}[t]{0.45\linewidth}
        \centering
        \includegraphics[width=\linewidth, height=0.9\linewidth]{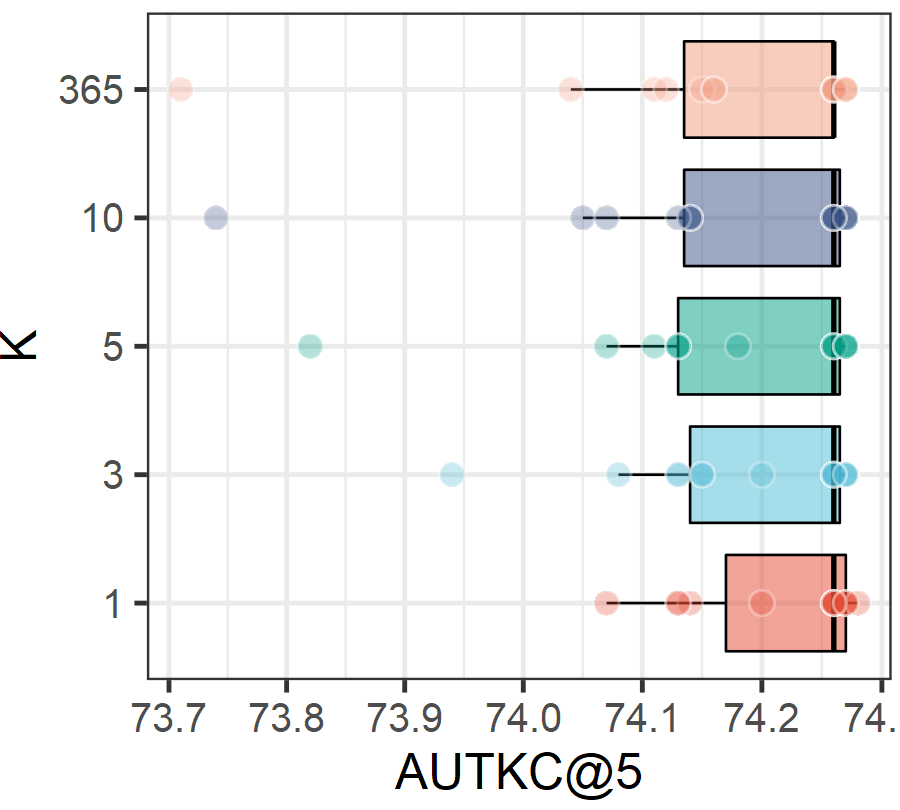}
        \end{minipage}%
    }
    \subfigure[AUTKC-Sq]{
        \begin{minipage}[t]{0.45\linewidth}
        \centering
        \includegraphics[width=\linewidth, height=0.9\linewidth]{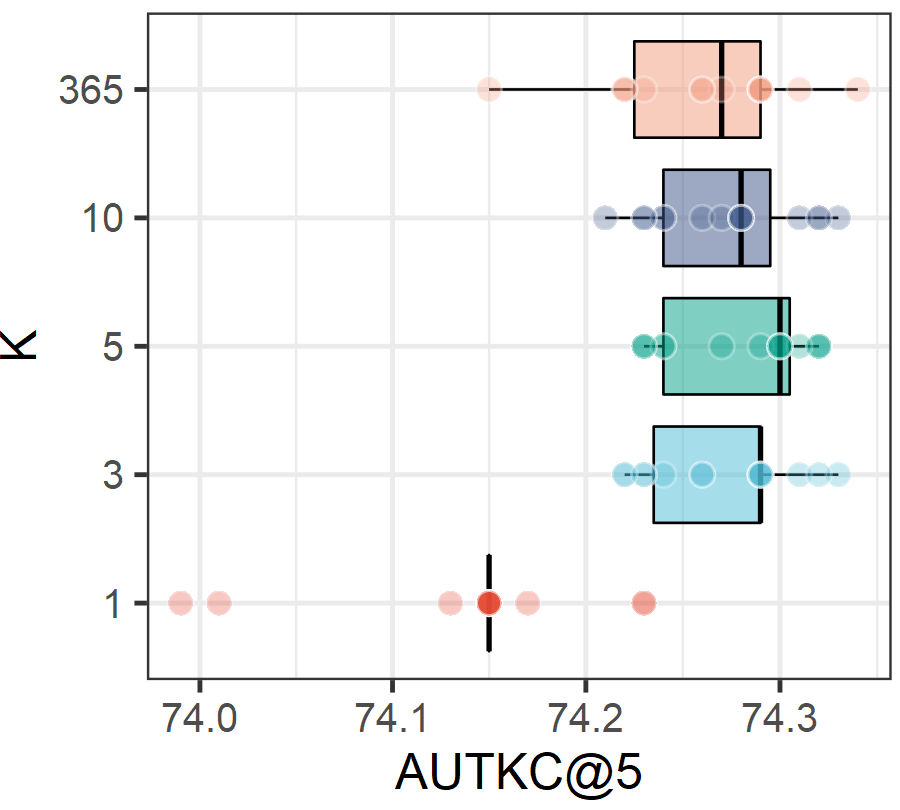}
        \end{minipage}%
    }

    \subfigure[AUTKC-Exp]{
        \begin{minipage}[t]{0.45\linewidth}
        \centering
        \includegraphics[width=\linewidth, height=0.9\linewidth]{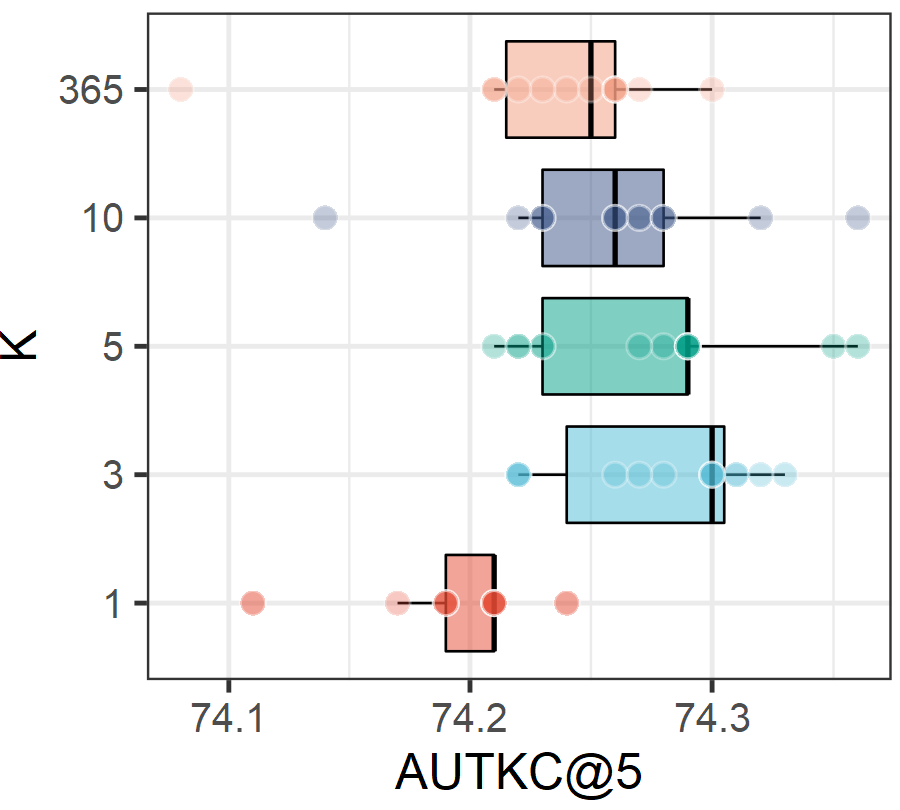}
        \end{minipage}
    }%
    \subfigure[AUTKC-Logit]{
        \begin{minipage}[t]{0.45\linewidth}
        \centering
        \includegraphics[width=\linewidth, height=0.9\linewidth]{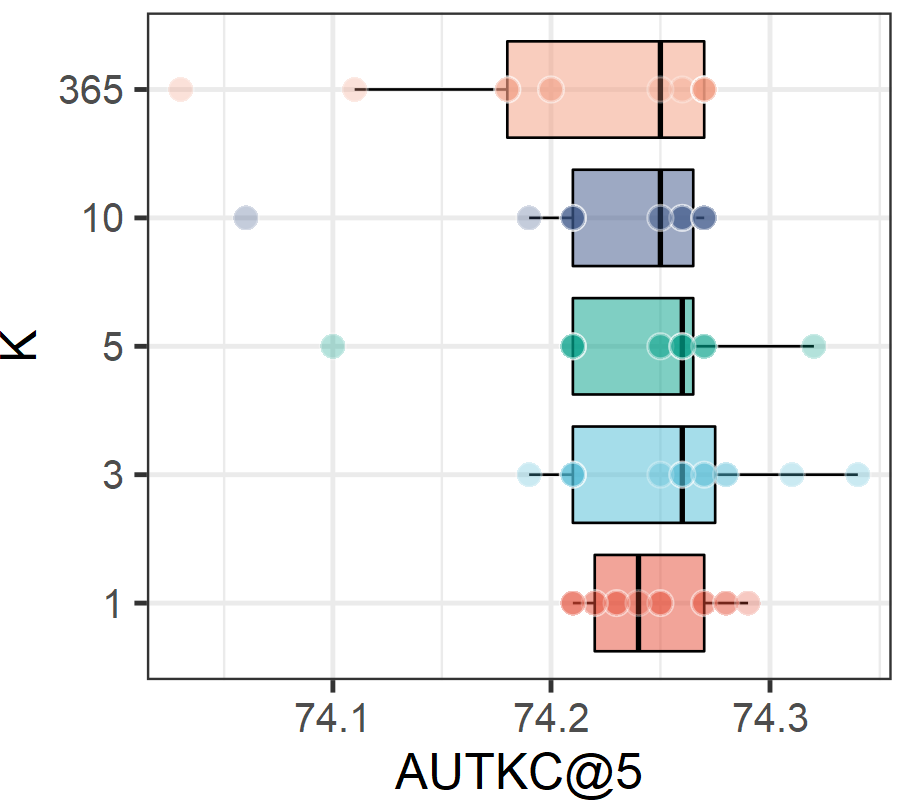}
        \end{minipage}%
    }
\centering
\caption{Sensitivity analysis of AUTKC optimization methods on $K$, where the experiments are conducted on Places-365. For each box, $K$ is fixed as the y-axis value, and the scattered points represent the performance with different hyperparameters.}
\label{fig:K}
\end{figure}

\begin{figure}[t]
    \centering
    \subfigure[AUTKC-Hinge]{
        \begin{minipage}[t]{0.45\linewidth}
        \centering
        \includegraphics[width=\linewidth, height=0.9\linewidth]{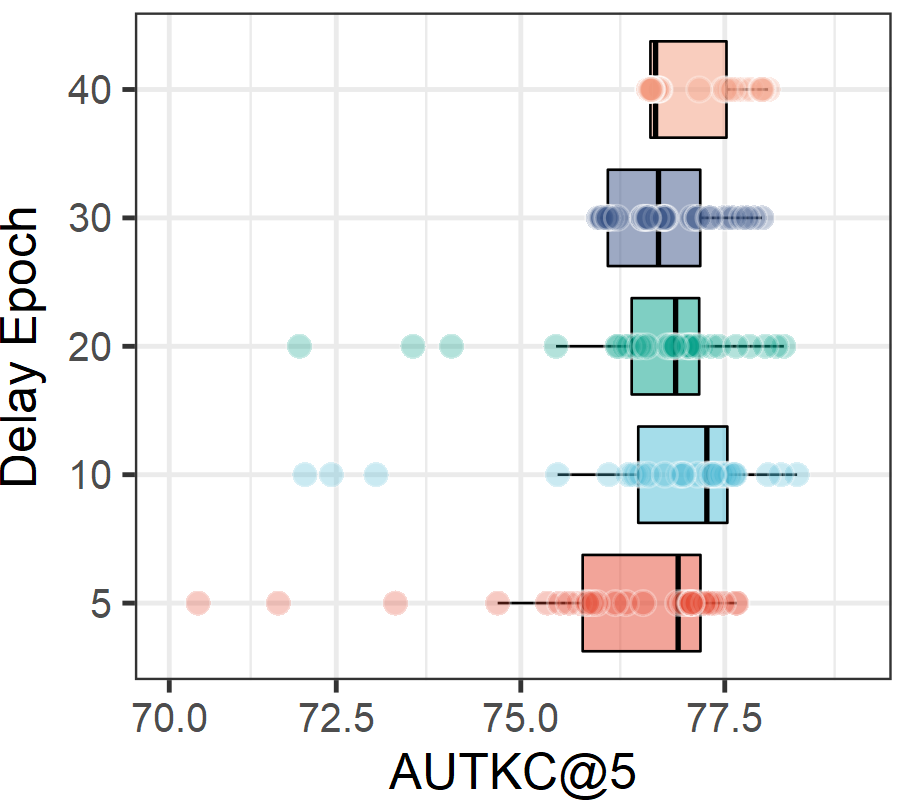}
        \end{minipage}%
    }
    \subfigure[AUTKC-Sq]{
        \begin{minipage}[t]{0.45\linewidth}
        \centering
        \includegraphics[width=\linewidth, height=0.9\linewidth]{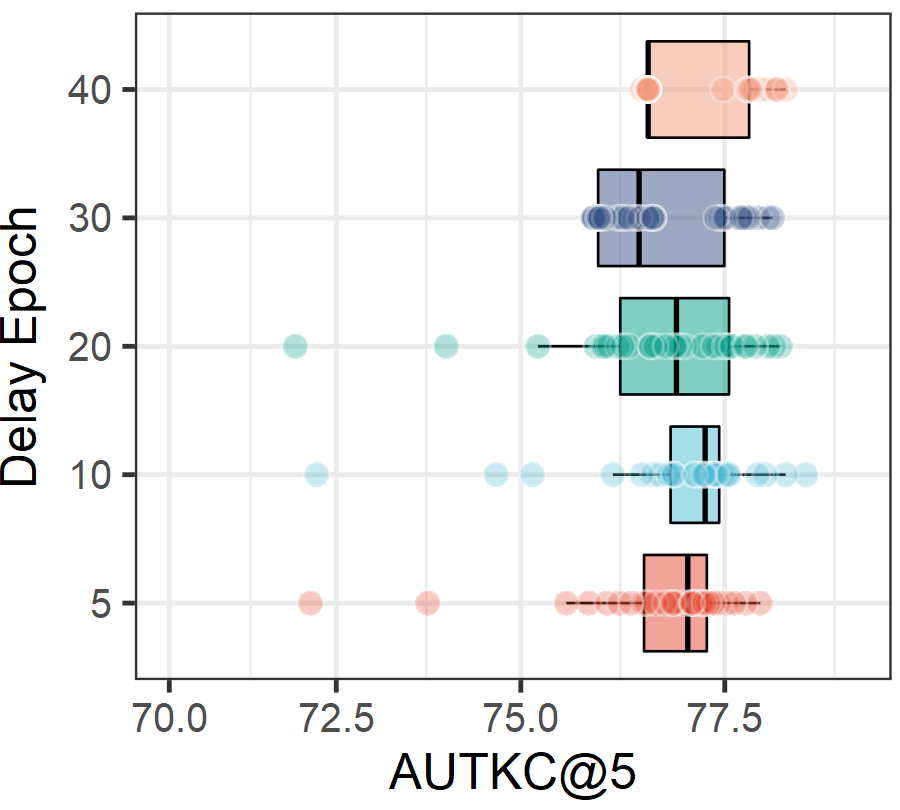}
        \end{minipage}%
    }

    \subfigure[AUTKC-Exp]{
        \begin{minipage}[t]{0.45\linewidth}
        \centering
        \includegraphics[width=\linewidth, height=0.9\linewidth]{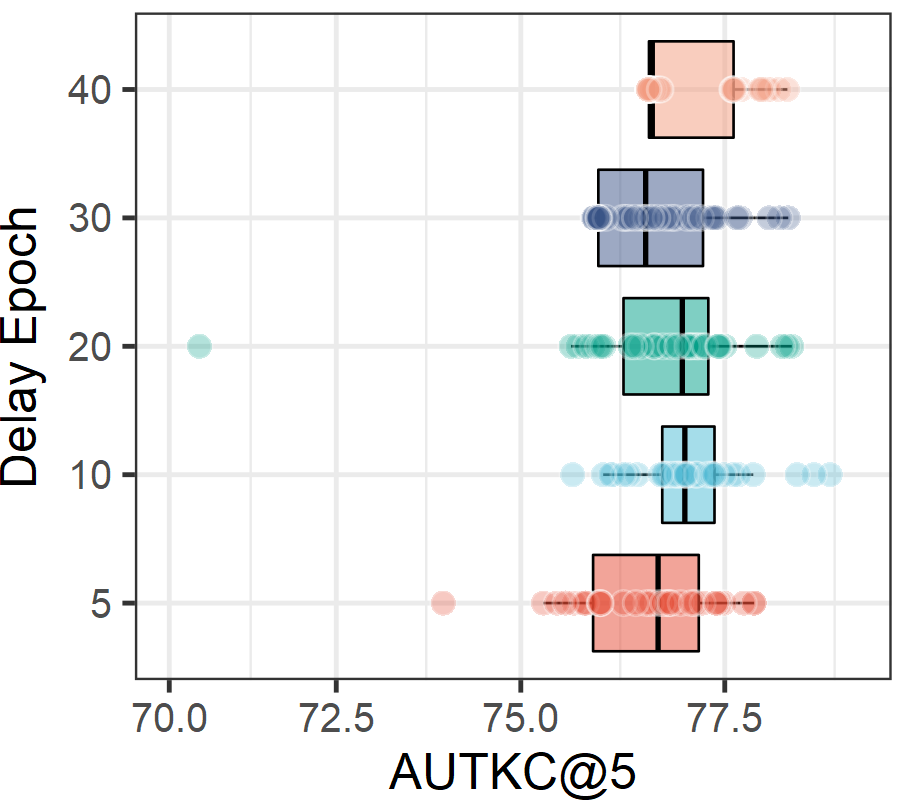}
        \end{minipage}
    }%
    \subfigure[AUTKC-Logit]{
        \begin{minipage}[t]{0.45\linewidth}
        \centering
        \includegraphics[width=\linewidth, height=0.9\linewidth]{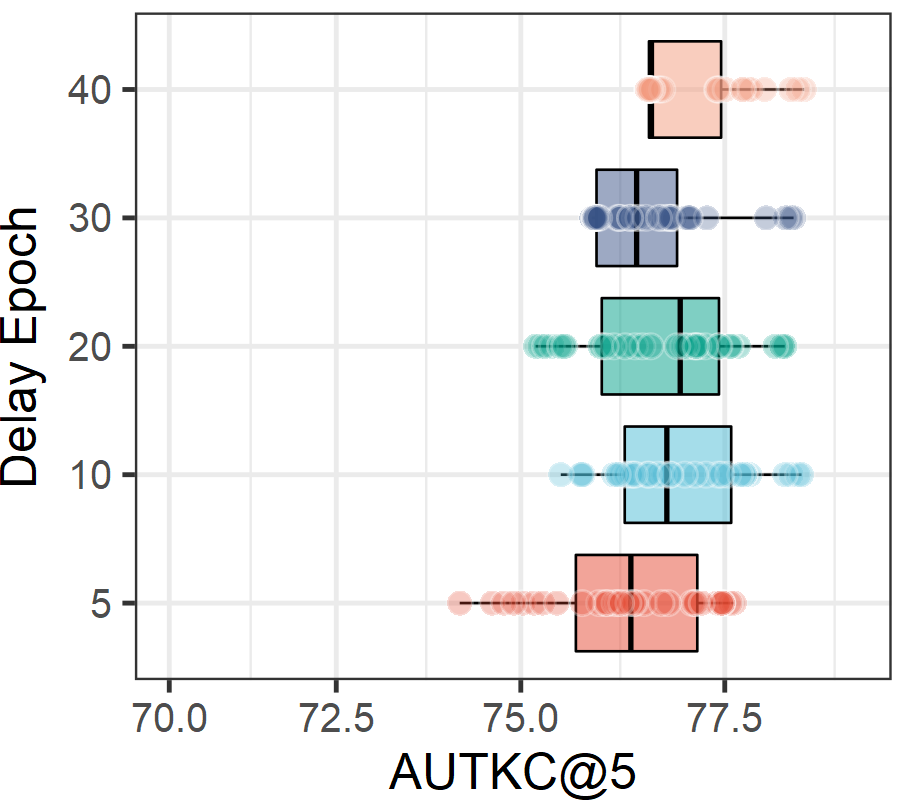}
        \end{minipage}%
    }
\centering
\caption{Sensitivity analysis of AUTKC optimization methods on $E_w$, where the experiments are conducted on Cifar-100. For each box, $E_w$ is fixed as the y-axis value, and the scattered points represent the performance with different hyperparameters.}
\label{fig:warm-up}
\end{figure}

\subsubsection{Sensitivity analysis}
\label{sec:sensitivity}
\noindent \textbf{The Effect of $K$.} In Fig.\ref{fig:K}, we show the sensitivity in terms of hyperparameter $K$ on Places-365 for the AUTKC methods with $E_w = 40$. Note that the space of $K$ includes the total number of classes. We have the following observations from the results:
\begin{itemize}
    \item AUTKC-Hinge has a different trend from the other methods. To be specific, its median performance is insensitive to the value of $K$, and the worst performance becomes worse as the value of $K$ increases. This counter-intuitive phenomenon again validates Thm.\ref{thm:hinge}, that is, AUTKC-Hinge is not AUTKC consistent.
    \item The consistent methods, \textit{e.g.}, AUTKC-Sq, AUTKC-Exp and AUTKC-Logit, achieve the best median performance on AUTKC@5 when $K$ is set as 3 or 5. This observation is consistent with our goal to design the AUTKC objective.
    \item When $K$ equals 1, the consistent methods generally have the worst median performance. This is not surprising since this case does not properly handle label ambiguity.
    \item The consistent methods with $K=365$ also perform competitively but are inferior to the ones with $K = 3$ or $K = 5$. This confirms the necessity to select a moderately large $K$.
\end{itemize}

\noindent \textbf{The Effect of $E_w$.} Furthermore, we show the sensitivity in terms of hyperparameter $E_w$ on Cifar-100 for the AUTKC methods in Fig.\ref{fig:warm-up}. Note that we record the results only when the method succeeds to converge. From the results, one can observe that:
\begin{itemize}
    \item On one hand, the increasing $E_w$ from 5 to 10 brings an increasing trend of the median performance. On the other hand, a larger $K$ generally leads to a decreasing trend of the performance deviation. This shows that the warm-up strategy is beneficial to the performance of the AUTKC methods. 
    \item The AUTKC methods achieve the best median performance when $E_w$ is equal to 10 or 20. This suggests that a moderate warm-up strategy might be a better strategy for AUTKC optimization. 
\end{itemize}

\section{Conclusion}
This paper proposes a new metric named AUTKC to overcome the limitations of the top-$k$ objective in the context of label ambiguity. An empirical surrogate risk minimization framework is further established for efficient AUTKC optimization. Theoretically, we show that the surrogate loss is Fisher consistent with the 0-1-loss-based AUTKC risk if it is differentiable, bounded, and strictly decreasing. Most common surrogate losses satisfy this condition when the model outputs are bounded, such as the square loss, the exponential loss, and the logit loss. However, the hinge loss, the standard surrogate loss for top-$k$ optimization, is inconsistent. Moreover, we construct the generalization bound for empirical AUTKC optimization, which is insensitive to the number of classes. Finally, the empirical results on four benchmark datasets justify the proposed framework and the theoretical results.

\ifCLASSOPTIONcompsoc
\section*{Acknowledgments}
\else
\section*{Acknowledgment}
\fi

This work was supported in part by the National Key R\&D Program of China under Grant 2018AAA0102000, in part by National Natural Science Foundation of China: U21B2038, 61931008, 62025604, 62132006, 6212200758 and 61976202, in part by the Fundamental Research Funds for the Central Universities, in part by Youth Innovation Promotion Association CAS, in part by the Strategic Priority Research Program of Chinese Academy of Sciences, Grant No. XDB28000000, and in part by the National Postdoctoral Program for Innovative
Talents under Grant BX2021298.

\ifCLASSOPTIONcaptionsoff
\newpage
\fi

  \bibliographystyle{IEEEtran}
  \bibliography{IEEEabrv,citations}


\begin{IEEEbiography}
    [{\includegraphics[width=1in,height=1.25in,clip,keepaspectratio]{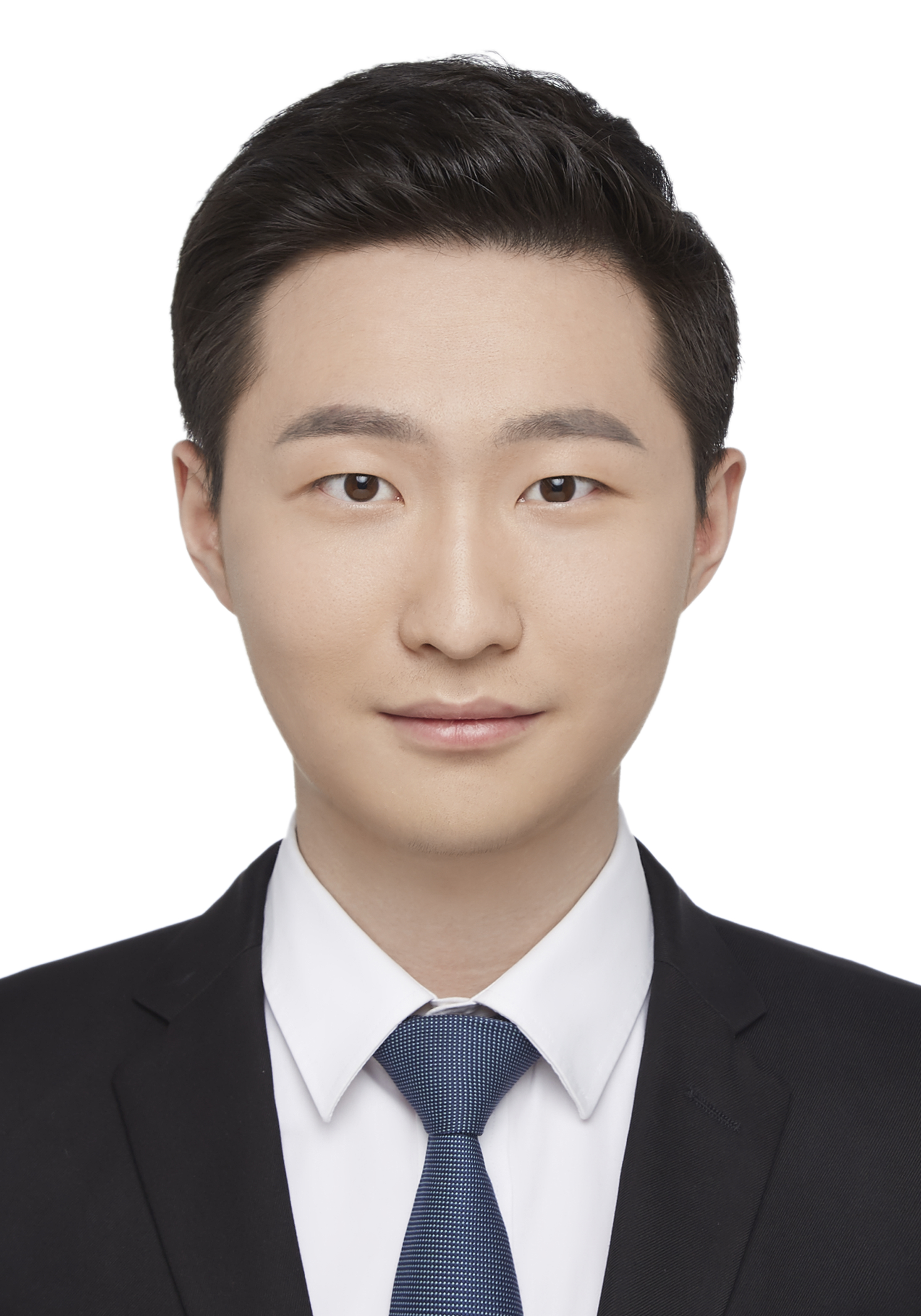}}]{Zitai Wang} received the B.S. degree in computer science and technology from Beijing Jiaotong University in 2019. He is currently pursuing a Ph.D. degree from University of Chinese Academy of Sciences. His research interests include machine learning and data mining, with special focus on top-k optimization and preference learning.
\end{IEEEbiography}

\begin{IEEEbiography}
	[{\includegraphics[width=1in,height=1.25in,clip,keepaspectratio]{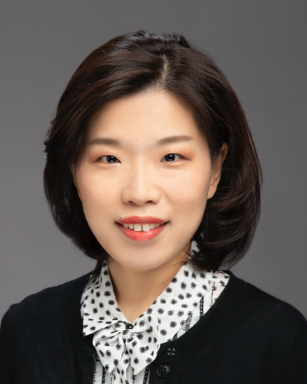}}]{Qianqian Xu} received the B.S. degree in computer science from China University of Mining and Technology in 2007 and the Ph.D. degree in computer science from University of Chinese Academy of Sciences in 2013. She is currently an Associate Professor with the Institute of Computing Technology, Chinese Academy of Sciences, Beijing, China. Her research interests include statistical machine learning, with applications in multimedia and computer vision. She has authored or coauthored 50+ academic papers in prestigious international journals and conferences (including T-PAMI, IJCV, T-IP, NeurIPS, ICML, CVPR, AAAI, etc). Moreover, she serves as an associate editor of IEEE Transactions on Circuits and Systems for Video Technology, ACM Transactions on Multimedia Computing, Communications, and Applications, and Multimedia Systems.
\end{IEEEbiography}

\begin{IEEEbiography}
	[{\includegraphics[width=1in,height=1.25in,clip,keepaspectratio]{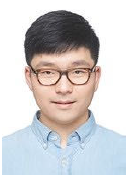}}]{Zhiyong Yang} received the M.Sc. degree in computer science and technology from University of Science and Technology Beijing (USTB) in 2017, and the Ph.D. degree from University of Chinese Academy of Sciences (UCAS) in 2021. He is currently a postdoctoral research fellow with the University of Chinese Academy of Sciences. His research interests lie in machine learning and learning theory, with special focus on AUC optimization, meta-learning/multi-task learning, and learning theory for recommender systems. He has authored or coauthored about 32 academic papers in top-tier international conferences and journals including T-PAMI/ICML/NeurIPS/CVPR. He served as a Senior PC member for IJCAI 2021 and a reviewer for several top-tier journals and conferences such as T-PAMI, TMLR, ICML, NeurIPS and ICLR.
\end{IEEEbiography}

\begin{IEEEbiography}
	[{\includegraphics[width=1in,height=1.25in,clip,keepaspectratio]{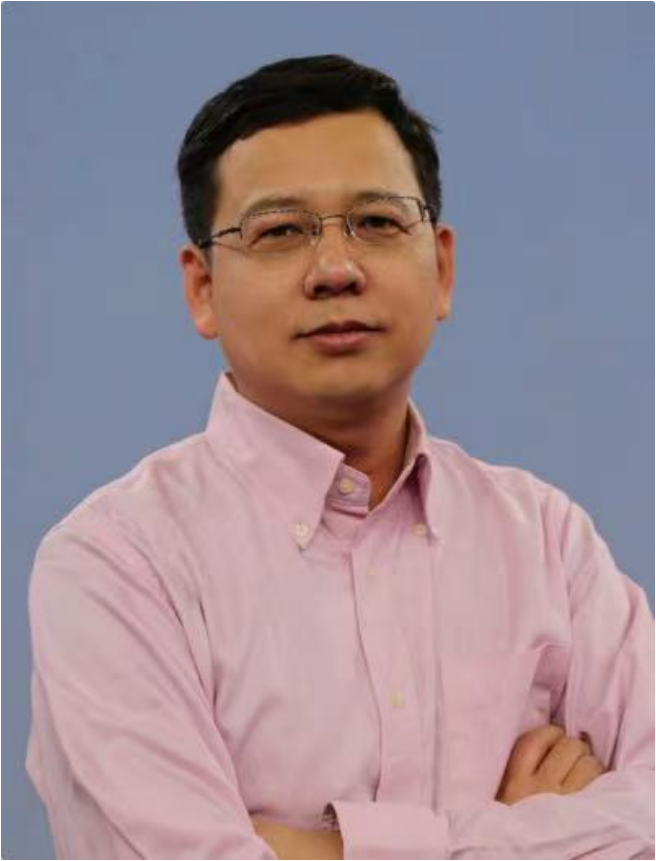}}]{Yuan He} received his B.S. degree and Ph.D. degree from Tsinghua University, P.R. China. He is a Senior Staff Engineer in the Security Department of Alibaba Group, and working on artificial intelligence-based content moderation and intellectual property protection systems. Before joining Alibaba, he was a research manager at Fujitsu working on document analysis system. He has published more than 30 papers in computer vision and machine learning related conferences and journals including CVPR, ICCV,	ICML, NeurIPS, AAAI and ACM MM. His research interests include computer vision, machine learning, and AI security.
\end{IEEEbiography}

\begin{IEEEbiography}
	[{\includegraphics[width=1in,height=1.25in,clip,keepaspectratio]{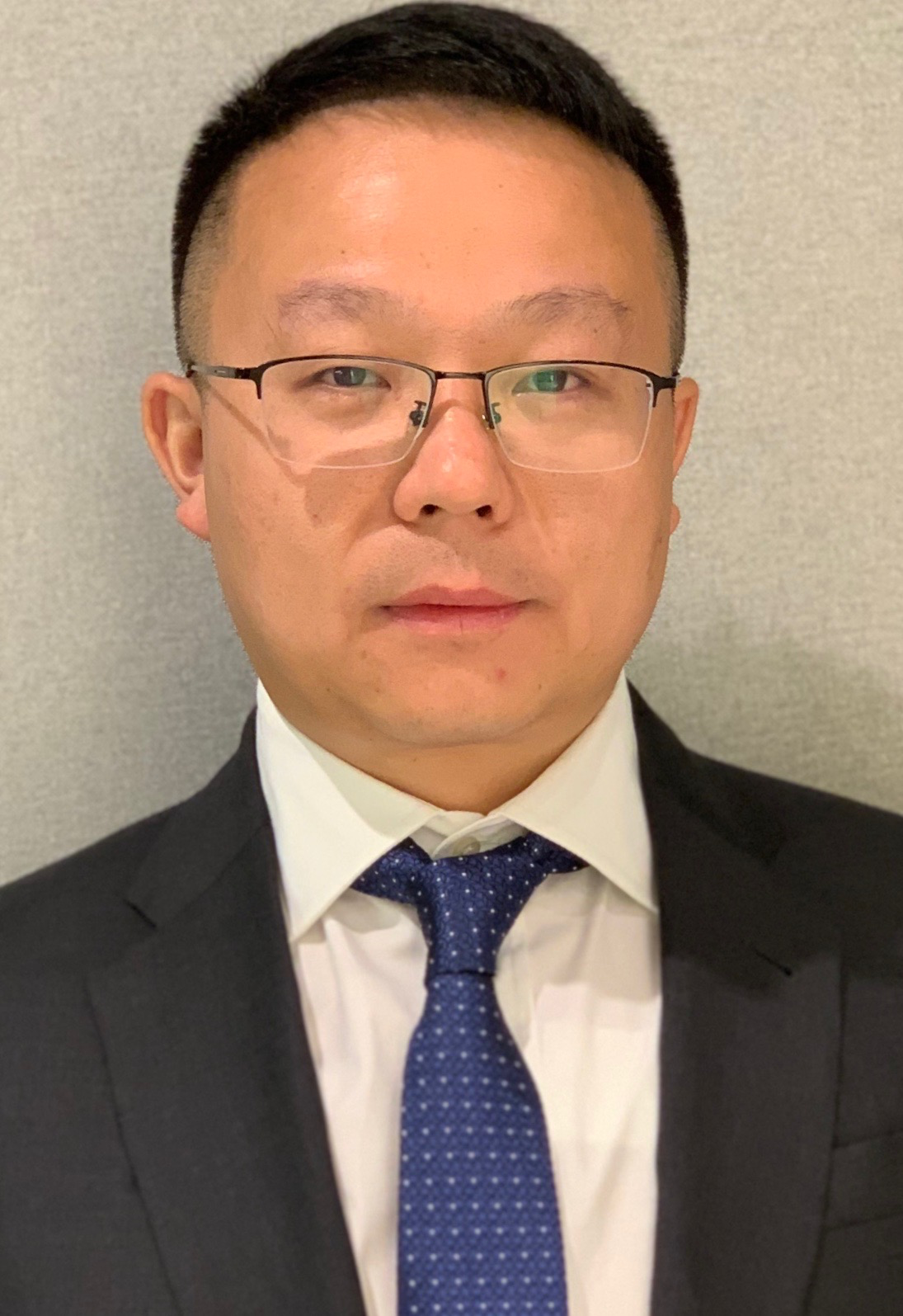}}]{Xiaochun Cao} is a Professor of School of Cyber Science and Technology, Shenzhen Campus of Sun Yat-sen University. He received the B.E. and M.E. degrees both in computer science from Beihang University (BUAA), China, and the Ph.D. degree in computer science from the University of Central Florida, USA, with his dissertation nominated for the university level Outstanding Dissertation Award. After graduation, he spent about three years at ObjectVideo Inc. as a Research Scientist. From 2008 to 2012, he was a professor at Tianjin University. Before joining SYSU, he was a professor at Institute of Information Engineering, Chinese Academy of Sciences. He has authored and coauthored over 200 journal and conference papers. In 2004 and 2010, he was the recipients of the Piero Zamperoni best student paper award at the International Conference on Pattern Recognition. He is on the editorial boards of IEEE Transactions on Image Processing and IEEE Transactions on Multimedia, and was on the editorial board of IEEE Transactions on Circuits and Systems for Video Technology.
\end{IEEEbiography}

\begin{IEEEbiography}
	[{\includegraphics[width=1in,height=1.25in,clip,keepaspectratio]{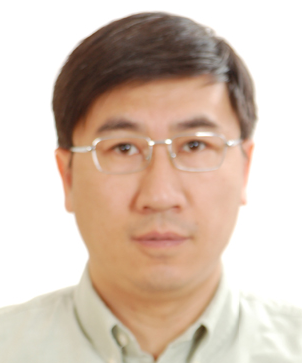}}]{Qingming Huang} is a chair professor in the University of Chinese Academy of Sciences and an adjunct research professor in the Institute of Computing Technology, Chinese Academy of Sciences. He graduated with a Bachelor degree in Computer Science in 1988 and Ph.D. degree in Computer Engineering in 1994, both from Harbin Institute of Technology, China. His research areas include multimedia computing, image processing, computer vision and pattern recognition. He has authored or coauthored more than 400 academic papers in prestigious international journals and top-level international conferences. He was the associate editor of IEEE Trans. on CSVT and Acta Automatica Sinica, and the reviewer of various international journals including IEEE Trans. on PAMI, IEEE Trans. on Image Processing, IEEE Trans. on Multimedia, etc. He is a Fellow of IEEE and has served as general chair, program chair, track chair and TPC member for various conferences, including ACM Multimedia, CVPR, ICCV, ICME, ICMR, PCM, BigMM, PSIVT, etc.
\end{IEEEbiography}


\clearpage
\onecolumn
\appendices
\section*{\textcolor{blue}{\Large{Contents}}}
\startcontents[sections]
\printcontents[sections]{l}{1}{\setcounter{tocdepth}{3}}
\newpage

\section{Metric Comparison between AUTKC and Top-k (Proof of Thm. \ref{thm:consistencydiscriminating})}
\label{sec_app:metric_comparison}
\consistencydiscriminating*
\begin{proof}
    Let $$f(\pi) := \frac{1}{K} \sum_{k=1}^{K} {\I{\pi(y) \le k}} $$ and $$g(\pi) := \I{\pi(y) \le k}.$$ 
    \textbf{Proof for consistency.} We first show $\atopk$ is consistent with $\topk$ for any $k < K$. Given two predictions $\boldsymbol{a}, \boldsymbol{b} \in \mathbb{R}^C$ and a label $y$, it is clear that $$f(\pi_{\boldsymbol{a}}) > f(\pi_{\boldsymbol{b}}) \iff \pi_{\boldsymbol{a}}(y) < \pi_{\boldsymbol{b}}(y) \text{ and } \pi_{\boldsymbol{a}}(y) \le K$$ and $$g(\pi_{\boldsymbol{a}}) > g(\pi_{\boldsymbol{b}}) \iff \pi_{\boldsymbol{a}}(y) \le k \text{ and } \pi_{\boldsymbol{b}}(y) > k.$$
    Thus, $$R = \left\{ (\pi_{\boldsymbol{a}}, \pi_{\boldsymbol{b}}) | \pi_{\boldsymbol{a}}(y) \le k, \pi_{\boldsymbol{b}}(y) > k \right\}$$ and $$\size{R} = k (C - k) > 0 $$ Meanwhile, since $$g(\pi_{\boldsymbol{a}}) < g(\pi_{\boldsymbol{b}}) \iff \pi_{\boldsymbol{a}}(y) > k \text{ and } \pi_{\boldsymbol{b}}(y) \le k,$$ we have 
    $$\begin{aligned} S & = \left\{ (\pi_{\boldsymbol{a}}, \pi_{\boldsymbol{b}}) | K \ge \pi_{\boldsymbol{a}}(y) > k \ge \pi_{\boldsymbol{b}}(y), \pi_{\boldsymbol{a}}(y) < \pi_{\boldsymbol{b}}(y) \right\} \\
        & = \varnothing .
    \end{aligned}$$ 
    Therefore, 
    \begin{equation}
        \label{equ:C}
        \mathbf{C} = \frac{\size{R}}{\size{R} + \size{S}} = 1.
    \end{equation}
    \textbf{Proof for discrimination.} Next we show that $\atopk$ is more discriminating than $\topk$. On one hand, we have $$g(\pi_{\boldsymbol{a}}) = g(\pi_{\boldsymbol{b}}) \iff \underbrace{\pi_{\boldsymbol{a}}(y), \pi_{\boldsymbol{b}}(y) \le k}_{(I)} \text{ or } \underbrace{\pi_{\boldsymbol{a}}(y), \pi_{\boldsymbol{b}}(y) > k}_{(II)}. $$ 
    For $(I)$, $$P_1 = \left\{ (\pi_{\boldsymbol{a}}, \pi_{\boldsymbol{b}}) | \pi_{\boldsymbol{a}}(y) < \pi_{\boldsymbol{b}}(y) \le k \right\},$$ thus $$\size{P_1} = (k - 1) + (k - 2) + \cdots + 1 = \frac{1}{2} k(k - 1).$$
    For $(II)$, $$P_2 = \left\{ (\pi_{\boldsymbol{a}}, \pi_{\boldsymbol{b}}) | k < \pi_{\boldsymbol{a}}(y) < \pi_{\boldsymbol{b}}(y), \pi_{\boldsymbol{a}}(y) \le K \right\},$$ thus 
    $$\begin{aligned}
        \size{P_2} & = (C - k - 1) + \cdots + (C - K) \\
        & = \frac{1}{2} (2C - k - K - 1)(K - k).
    \end{aligned} $$
    Combining $(I)$ and $(II)$, we have 
    \begin{equation}
        \label{eq:P}
        \size{P} = \size{P_1} + \size{P_2} > 0,
    \end{equation}
    since $P_1 = 0$ only if $k=1$, and $P_2 = 0$ only if $k=K$. 
    
    On the other hand, we have 
    $$\begin{aligned}
        f(\pi_{\boldsymbol{a}}) = f(\pi_{\boldsymbol{b}}) \iff & \underbrace{\pi_{\boldsymbol{a}}(y) = \pi_{\boldsymbol{b}}(y) \text{ and } \pi_{\boldsymbol{a}}(y), \pi_{\boldsymbol{b}}(y) \le K}_{(III)} \\
        & \text{ or } \underbrace{\pi_{\boldsymbol{a}}(y), \pi_{\boldsymbol{b}}(y) > K}_{(IV)}.
    \end{aligned}$$
    For $(III)$, $Q_1= \varnothing $ since $g(\pi_{\boldsymbol{a}}) > g(\pi_{\boldsymbol{b}})$ means $\pi_{\boldsymbol{a}}(y) \neq \pi_{\boldsymbol{b}}(y)$. For $(IV)$, $$Q_2 = \left\{ (\pi_{\boldsymbol{a}}, \pi_{\boldsymbol{b}}) | K < \pi_{\boldsymbol{a}}(y) \le k, \pi_{\boldsymbol{b}}(y) > K \right\} = \varnothing$$
    Then, Combining $(III)$ and $(IV)$, we have 
    \begin{equation}
        \label{eq:Q}
        \size{Q} = \size{Q_1} + \size{Q_2} = 0.
    \end{equation}
    With Eq.(\ref{eq:P}) and Eq.(\ref{eq:Q}), we have 
    \begin{equation}
        \label{equ:D}
        \mathbf{D} = \frac{\size{P}}{\size{Q}} = \infty.
    \end{equation}
    Finally, Combining Eq.(\ref{equ:C}) and Eq.(\ref{equ:D}), the proof ends.
\end{proof}

\section{Reformulation of the Original Optimization Objective (Proof of Thm. \ref{thm:reformulation_opzero})}
\label{sec_app:reformulation_opzero}
\reformulationofopzero*
\begin{proof}
    According to Assumption \ref{ass:wrongly_break_ties}, we have
    $$\begin{aligned}
        & \sum_{k \le K} { \I{ s_y \le (s_{\setminus y})_{[k]}} } \\
        & = \left\{\begin{array}{cl}
                        -1 + \sum_{k \leq K} { \I{ s_y \le s_{[k]}} } , & s_y > (s_{\setminus y})_{[K]}, \\ [5pt]
                        K, & \text{otherwise}
                    \end{array}\right. \\
        & = -1 + \I{s_y \le (s_{\setminus y})_{[K]}} + \sum_{k \leq K} { \I{ s_y \le s_{[k]}} } \\
        & = -1 + \I{s_y \le s_{[K + 1]}} + \sum_{k \leq K} { \I{ s_y \le s_{[k]}} } \\
        & = -1 + \sum_{k \leq K + 1} { \I{ s_y \le s_{[k]}} }
    \end{aligned}$$
    Then the proof ends.
\end{proof}

\section{Consistency Analysis}
\subsection{The Bayes Optimal Function for AUTKC (Proof of Thm. \ref{thm:bayes_optimal})}
\label{sec_app:bayes_optimal}
\bayesoptimal*
\begin{proof}
    Since the samples are independently drawn, we define the conditional risk:
    \begin{equation}
        \mathcal{R}_K(f, \eta(\boldsymbol{x})) = \E{y | \boldsymbol{x} \sim \eta(\boldsymbol{x})}{aerr_K(y, f(\boldsymbol{x}))}.
    \end{equation}
    Then, we only need to show that given any sample $\boldsymbol{x}$, if $\mathsf{RP}_K(f^*(\boldsymbol{x}), \eta(\boldsymbol{x}))$ and $\lnot \mathsf{RP}_K(f(\boldsymbol{x}), \eta(\boldsymbol{x}))$, then $\mathcal{R}_K(f^*, \eta(\boldsymbol{x})) < \mathcal{R}_K(f, \eta(\boldsymbol{x}))$. For the sake of conciseness, \ul{we denote $\eta (\boldsymbol{x})$ as $\eta$ in the following discussion}, if there exists no ambiguity.
    According to the definition of conditional risk, we have:
    \begin{equation}
        \begin{split}
            & \mathcal{R}_K(\boldsymbol{s}, \eta) = \E{y | \boldsymbol{x} \sim \eta}{ \frac{1}{K} \sum_{k=1}^{K} \I{\pi_{\boldsymbol{s}}(y) > k}} \\
            & = \frac{1}{K} \sum_{y} \eta_y \left[\sum_{k=1}^{K} \I{\pi_{\boldsymbol{s}}(y) > k} \right] \\
            & = \frac{1}{K} \sum_{y} \eta_y \cdot \min\left\{ \pi_{\boldsymbol{s}}(y) -1 , K \right\} \\
        \end{split}
    \end{equation}
    Without loss of generality, we assume that $\eta_1 > \eta_2 > \cdots > \eta_C$. Then, for any $\boldsymbol{s}^*$ such that $\mathsf{RP}_K(\boldsymbol{s}^*, \eta)$, we have 
    $$\mathcal{R}_K(\boldsymbol{s}^*, \eta) = \frac{1}{K} \eta^T P_{\boldsymbol{s}^*}$$
    where
    $$P_{\boldsymbol{s}^*} := [ 0, 1, \cdots, K-1, \underbrace{K, \cdots, K}_{(C - K)\text{ elements}} ] \in \mathbb{R}^{C}$$ 
    Then, for any score $\boldsymbol{s}$ with $\lnot \mathsf{RP}_K(\boldsymbol{s}, \eta)$, we have
    $$\mathcal{R}_K(\boldsymbol{s}, \eta) = \frac{1}{K} \eta^T P_{\boldsymbol{s}},$$
    where $P_{\boldsymbol{s}}$ is a permutation of $P_{\boldsymbol{s}^*}$ according to $\pi_{\boldsymbol{s}}(y)$. Notice that $P_{\boldsymbol{s}}$ can be represented as the combination of a series of swapping operated on $P_{\boldsymbol{s}^*}$, which is defined as follows:

    \begin{definition}[Top-$K$ ranking-swapped]
        For any $y_1, y_2 \in \mathcal{Y}$ such that $\eta_{y_1} > \max \{\eta_{y_2}, \eta_{[K]}\}$, given $\boldsymbol{s}_1$ such that $\pi_{\boldsymbol{s}_1}(y_1) < \pi_{\boldsymbol{s}_1}(y_2)$, if $\boldsymbol{s}_2$ satisfies:
        \begin{itemize}
            \item $\pi_{\boldsymbol{s}_2}(y_1) = \pi_{\boldsymbol{s}_1}(y_2)$ and $\pi_{\boldsymbol{s}_2}(y_2) = \pi_{\boldsymbol{s}_1}(y_1)$;
            \item $\pi_{\boldsymbol{s}_2}(y) = \pi_{\boldsymbol{s}_1}(y)$ for any $y \notin \{y_1, y_2\}$,
        \end{itemize}
        then we say $\boldsymbol{s}_2$ is \ul{top-$K$ ranking-swapped} with respect to $\boldsymbol{s}_1$, denoted as $\mathsf{RS}_K(\boldsymbol{s}_2, \boldsymbol{s}_1)$.
    \end{definition}

    Next, we only need to show $\mathcal{R}_K(\boldsymbol{s}_1, \eta) \le \mathcal{R}_K(\boldsymbol{s}_2, \eta)$ if $\mathsf{RS}_K(\boldsymbol{s}_2, \boldsymbol{s}_1)$, with equality if and only if $\mathsf{RP}_K(\boldsymbol{s}_2, \boldsymbol{s}_1)$. In other words, any swapping operation for such $y_1$ and $y_2$ will not lead to a smaller conditional risk than $\mathcal{R}_K(\boldsymbol{s}_1, \eta)$. Since $s$ is the combination of some $\mathsf{RS}$ with respect to $\boldsymbol{s}^*$, we could conclude that for any $\boldsymbol{s}$ satisfying $\lnot \mathsf{RP}_K(\boldsymbol{s}_2, \eta)$, $\mathcal{R}_K(\boldsymbol{s}^*, \eta) < \mathcal{R}_K(\boldsymbol{s}, \eta)$ holds.

    To show $\mathcal{R}_K(\boldsymbol{s}_1, \eta) \le \mathcal{R}_K(\boldsymbol{s}_2, \eta)$ if $\mathsf{RS}_K(\boldsymbol{s}_2, \boldsymbol{s}_1)$, we consider the following three situations:
    \begin{itemize}
        \item \textbf{S1: } If $\pi_{\boldsymbol{s}_1}(y_1) < \pi_{\boldsymbol{s}_1}(y_2) \le K$, we have
            \begin{equation}
                \begin{split}
                    & K \cdot \left[\mathcal{R}_K(\boldsymbol{s}_1, \eta) - \mathcal{R}_K(\boldsymbol{s}_2, \eta)\right] \\
                    & = \left[ \eta_{y_1}(\pi_{\boldsymbol{s}_1}(y_1) - 1) + \eta_{y_2}(\pi_{\boldsymbol{s}_1}(y_2) - 1) \right] \\
                    & \phantom{\eta_y(\pi)} - \left[\eta_{y_1} {\color{orange} \min\left\{ \pi_{\boldsymbol{s}_2}(y_1) - 1 , K \right\}} + \eta_{y_2} {\color{blue} \min\left\{ \pi_{\boldsymbol{s}_2}(y_1) - 1 , K \right\}} \right]   \\
                    & = \left[ \eta_{y_1}(\pi_{\boldsymbol{s}_1}(y_1) - 1) + \eta_{y_2}(\pi_{\boldsymbol{s}_1}(y_2) - 1) \right] \\
                    & \phantom{\eta_y(\pi)} - \left[ \eta_{y_2}( {\color{orange} \pi_{\boldsymbol{s}_1}(y_2) - 1} ) + \eta_{y_2}( {\color{blue} \pi_{\boldsymbol{s}_1}(y_1) - 1}) \right] \\
                    & = (\eta_{y_1} - \eta_{y_2})[\pi_{\boldsymbol{s}_1}(y_1) - \pi_{\boldsymbol{s}_1}(y_2)] \\
                    & < 0.
                \end{split}
            \end{equation}

        \item \textbf{S2: } If $\pi_{\boldsymbol{s}_1}(y_1) \le K < \pi_{\boldsymbol{s}_1}(y_2)$, we have 
            \begin{equation}
                \begin{split}
                    & K \cdot \left[\mathcal{R}_K(\boldsymbol{s}_1, \eta) - \mathcal{R}_K(\boldsymbol{s}_2, \eta)\right] \\
                    & = \left[ \eta_{y_1}(\pi_{\boldsymbol{s}_1}(y_1) - 1) + \eta_{y_2} K \right] - \left[ \eta_{y_2} {\color{orange} K} + \eta_{y_2}( {\color{blue} \pi_{\boldsymbol{s}_1}(y_1) - 1}) \right] \\
                    & = (\eta_{y_1} - \eta_{y_2})[\pi_{\boldsymbol{s}_1}(y_1) - K] \\
                    & < 0.
                \end{split}
            \end{equation}

        \item \textbf{S3: } If $K < \pi_{\boldsymbol{s}_1}(y_1) < \pi_{\boldsymbol{s}_1}(y_2)$, we have 
            \begin{equation}
                \begin{split}
                    & K \cdot \left[\mathcal{R}_K(\boldsymbol{s}_1, \eta) - \mathcal{R}_K(\boldsymbol{s}_2, \eta)\right] \\
                    & = \left[ \eta_{y_1} K + \eta_{y_2} K \right] - \left[ \eta_{y_2} {\color{orange} K} + \eta_{y_2} {\color{blue} K} \right] \\
                    & = 0. \\
                \end{split}
            \end{equation}
    \end{itemize}
    Note that in \textbf{S3}, $\mathsf{RP}_K(\boldsymbol{s}_2, \boldsymbol{s}_1)$ holds, then the proof ends with the transitivity of $\mathsf{RP}_K$.
\end{proof}

\subsection{Sufficient Condition for AUTKC Calibration (Proof of Thm. \ref{thm:condition_for_consistency})}
\label{sec_app:condition_calibration}
\conditionforconsistency*
\begin{proof}
    We first define the conditional risk and the optimal conditional risk of the surrogate loss:
    \begin{equation}
        \mathcal{R}_K^{\ell}(\boldsymbol{s}, \eta) = \E{y | \boldsymbol{x} \sim \eta}{ \frac{1}{K} \sum_{k=1}^{K+1} \ell \left(s_y - s_{[k]}\right) }.
    \end{equation}
    \begin{equation}
        \mathcal{R}_K^{\ell, *}(\eta) := \inf_{\boldsymbol{s} \in \mathbb{R}^C} \mathcal{R}_K^{\ell}(\boldsymbol{s}, \eta)
    \end{equation}
    Then, we prove the theorem by the following steps.

    \noindent \rule[2pt]{\linewidth}{0.1em}\\
    \textbf{Claim 1.} If $\boldsymbol{s}^* \in \arg \inf_{\boldsymbol{s}} \mathcal{R}_K^\ell(\boldsymbol{s}, \eta)$, then $\mathsf{RP}(\boldsymbol{s}^*, \eta)$.\\
    \noindent \rule[2pt]{\linewidth}{0.1em}

    We show this claim by showing that if $\lnot \mathsf{RP}(\boldsymbol{s}, \eta)$, then $\mathcal{R}_K^{\ell}(\boldsymbol{s}, \eta) > \mathcal{R}_K^{\ell, *}(\eta)$. Note that two cases will lead to $\lnot \mathsf{RP}(\boldsymbol{s}, \eta)$:

    \textbf{Case 1:} If ties exist in $\{s_{[1]}, \cdots, s_{[K]}, s_{[K + 1]}\}$, the chaim is obtained by acontradiction. Given $\boldsymbol{s}$ such that $\pi_{\boldsymbol{s}}(y_1), \pi_{\boldsymbol{s}}(y_2) \le K + 1$ and $s_{y_1} = s_{y_2}$, we assume that $\mathcal{R}_K^{\ell}(\boldsymbol{s}, \eta) = \mathcal{R}_K^{\ell, *}(\eta)$. According to the first-order condition, we have:
    $$
        \frac{\partial}{\partial s_y} \mathcal{R}_K^{\ell}(\boldsymbol{s}, \eta) = 0, y = y_1, y_2.
    $$
    That is, 
    $$\begin{aligned}
        \underbrace{\eta_{y_1} \sum_{k=1, [k] \neq y_1}^{K+1} \ell'(s_{y_1} - s_{[k]})}_{(I)} & = \underbrace{\sum_{y \neq y_1} \eta_{y} \ell'(s_y - s_{y_1})}_{(II)}; \\
        \underbrace{\eta_{y_2} \sum_{k=1, [k] \neq y_2}^{K+1} \ell'(s_{y_2} - s_{[k]})}_{(III)} & = \underbrace{\sum_{y \neq y_2} \eta_{y} \ell'(s_y - s_{y_2})}_{(IV)},
    \end{aligned}$$
    where $[k] \neq y_1$ means that the calculation of the derivative will be skipped when $\pi_{\boldsymbol{s}}(y_1) = k$. Since $s_{y_1} = s_{y_2}$, we have
    $$\begin{aligned}
        (I) & = \eta_{y_1} \ell'(0) + \eta_{y_1} \sum_{k=1, [k] \notin \{y_1, y_2\}}^{K+1} \ell'(s_{y_1} - s_{[k]}); \\
        (II) & = \eta_{y_2}\ell'(0) + \sum_{y \notin \{y_1, y_2\}} \eta_{y} \ell'(s_y - s_{y_1}); \\
        (III) & = \eta_{y_2} \ell'(0) + \eta_{y_2} \sum_{k=1, [k] \notin \{y_1, y_2\}}^{K+1} \ell'(s_{y_1} - s_{[k]}); \\
        (IV) & = \eta_{y_1}\ell'(0) + \sum_{y \notin \{y_1, y_2\}} \eta_{y} \ell'(s_y - s_{y_1}); \\
    \end{aligned}$$
    Then, since $(I) - (III) = (II) - (IV)$, we have 
    $$
        (\eta_{y_1} - \eta_{y_2}) \left[ \ell'(0) + \sum_{y \notin \{y_1, y_2\}} \eta_{y} \ell'(s_y - s_{y_1}) \right]= (\eta_{y_2} - \eta_{y_1})\ell'(0), 
    $$
    which leads to a contradiction since $\eta_{y_2} \neq \eta_{y_1}$ and $\ell'(t) < 0$.

    \textbf{Case 2:} If no ties in $\{s_{[1]}, \cdots, s_{[K]}, s_{[K + 1]}\}$, for any $\boldsymbol{s}$ with $\lnot \mathsf{RP}_{K}(\boldsymbol{s}, \eta)$, we can construct an $\boldsymbol{s}^*$ with $\mathsf{RP}_{K}(\boldsymbol{s}^*, \eta)$ by a series of top-$K$ ranking resuming operation, that is, the inverse operation of the top-$K$ ranking-swapping operation. Since $\mathcal{R}_K^{\ell} (\boldsymbol{s}^*, \eta) \ge \mathcal{R}_K^{\ell, *}(\eta)$, we only need to show any top-$K$ ranking resuming operation will lead to a smaller conditional risk. Specifically, for any $\boldsymbol{s}_1, \boldsymbol{s}_2$ satisfying
    \begin{itemize}
        \item $\eta_{y_1} > \eta_{y_2}$ and $s_{1, y_1} > s_{1, y_2}$;
        \item $s_{2, y_1} = s_{1, y_2}$ and $s_{2, y_2} = s_{1, y_1}$;
        \item $s_{2, y} = s_{1, y}$ for any $y \notin \{y_1, y_2\}$.
    \end{itemize}
    we need to show $\mathcal{R}_K^{\ell}(\boldsymbol{s}_1, \eta) < \mathcal{R}_K^{\ell}(\boldsymbol{s}_2, \eta)$. On top of these properties, we have  $s_{1, [k]} = s_{2, [k]}$ for any $k \in [C]$. Then,
    $$\begin{aligned}
        & K \cdot \left[\mathcal{R}_K^{\ell}(\boldsymbol{s}_1, \eta) - \mathcal{R}_K^{\ell}(\boldsymbol{s}_2, \eta)\right] \\
        & = \left[ \eta_{y_1} \sum_{k=1}^{K+1} \ell(s_{1, y_1} - s_{1, [k]}) + \eta_{y_2} \sum_{k=1}^{K+1} \ell(s_{1, y_2} - s_{1, [k]}) \right] - \left[ \eta_{y_1} {\color{orange} \sum_{k=1}^{K+1} \ell(s_{2, y_1} - s_{2, [k]})} + \eta_{y_2} {\color{blue} \sum_{k=1}^{K+1} \ell(s_{2, y_2} - s_{2, [k]}) }  \right] \\
        & = \left[ \eta_{y_1} \sum_{k=1}^{K+1} \ell(s_{1, y_1} - s_{1, [k]}) + \eta_{y_2} \sum_{k=1}^{K+1} \ell(s_{1, y_2} - s_{1, [k]}) \right] - \left[ \eta_{y_1} {\color{orange} \sum_{k=1}^{K+1} \ell(s_{1, y_2} - s_{1, [k]})} + \eta_{y_2} {\color{blue} \sum_{k=1}^{K+1} \ell(s_{1, y_1} - s_{1, [k]})} \right] \\
        & = \left( \eta_{y_1} - \eta_{y_2} \right) \sum_{k=1}^{K+1} \left[ \ell(s_{1, y_1} - s_{1, [k]}) - \ell(s_{1, y_2} - s_{1, [k]}) \right] \\
        & < 0
    \end{aligned}$$
    The inequality holds since $s_{1, y_1} > s_{1, y_2}$ and the surrogate loss $\ell$ is strictly decreasing.

    Then, the \textbf{Claim 1} is completed by \textbf{Case 1} and \textbf{Case 2}.

    \noindent \rule[2pt]{\linewidth}{0.1em}\\
    \textbf{Claim 2.} 
    $$
        \inf_{\boldsymbol{s}: \lnot \mathsf{RP}(\boldsymbol{s}, \eta)} \mathcal{R}_K^\ell(\boldsymbol{s}, \eta) > \inf_{\boldsymbol{s}: \mathsf{RP}(\boldsymbol{s}, \eta)} \mathcal{R}_K^\ell(\boldsymbol{s}, \eta)
    $$
    \noindent \rule[2pt]{\linewidth}{0.1em}

    It is clear that \textbf{Claim 2} follows \textbf{Claim 1}.

    \noindent \rule[2pt]{\linewidth}{0.1em}\\
    \textbf{Claim 3.} For any sequence $\{\boldsymbol{s}_t\}_{t \in \mathbb{N}_+}$, $\boldsymbol{s}_t \in \mathbb{R}^C$,
    $$
        \mathcal{R}_{K}^{\ell}(\boldsymbol{s}_t, \eta) \to \inf_{\boldsymbol{s} \in \mathbb{R}^C} \mathcal{R}_{K}^{\ell}(\boldsymbol{s}, \eta) \Rightarrow \mathcal{R}_{K}(\boldsymbol{s}_t, \eta) \to \inf_{\boldsymbol{s} \in \mathbb{R}^C} \mathcal{R}_{K}(\boldsymbol{s}, \eta).
    $$
    \noindent \rule[2pt]{\linewidth}{0.1em}

    On top of \textbf{Claim 1}, we only need to prove $\mathsf{RP}(\boldsymbol{s}_t, \eta)$ when $t \to \infty$. Define 
    $$
        \delta := \inf_{\boldsymbol{s}: \lnot \mathsf{RP}(\boldsymbol{s}, \eta)} \mathcal{R}_K^\ell(\boldsymbol{s}, \eta) - \inf_{\boldsymbol{s} \in \mathbb{R}^C} \mathcal{R}_K^\ell(\boldsymbol{s}, \eta).
    $$
    According to \textbf{Claim 2}, we have $\infty > \delta > 0$. Suppose that when $t \to \infty$, $\lnot \mathsf{RP}(\boldsymbol{s}_t, \eta)$. Then, there exists a large enough $T$ such that
    $$
        \mathcal{R}_K^\ell(\boldsymbol{s}_t, \eta) - \inf_{\boldsymbol{s} \in \mathbb{R}^C} \mathcal{R}_K^\ell(\boldsymbol{s}, \eta) > \delta,
    $$
    which is contradicts with $\mathcal{R}_{K}^{\ell}(\boldsymbol{s}_t, \eta) \to \inf_{\boldsymbol{s} \in \mathbb{R}^C} \mathcal{R}_{K}^{\ell}(\boldsymbol{s}, \eta)$. This shows that \textbf{Claim 3} holds.

    \noindent \rule[2pt]{\linewidth}{0.1em}\\
    \textbf{Claim 4.} For any sequence $\{f_t\}_{t \in \mathbb{N}_+}$, $f_t \in \mathcal{F}$,
    $$
        \mathcal{R}_{K}^{\ell}(f_t) \to \inf_{f \in \mathcal{F}} \mathcal{R}_{K}^{\ell}(f) \Rightarrow \mathcal{R}_{K}(f_t) \to \inf_{f \in \mathcal{F}} \mathcal{R}_{K}(f).
    $$
    \noindent \rule[2pt]{\linewidth}{0.1em}

    It is clear that \textbf{Claim 4} holds with \textbf{Claim 3} and 
    $$
        \mathcal{R}_{K}^{\ell}(f_t) = \E{\boldsymbol{x}}{ \mathcal{R}_{K}^{\ell}(f_t(\boldsymbol{x}), \eta(\boldsymbol{x})) }
    $$

    Then, the proof of Thm.\ref{thm:condition_for_consistency} ends.
\end{proof}

\subsection{Inconsistency of Hinge Loss (Proof of Thm. \ref{thm:hinge})}
We first present the following simple but useful lemma: 
\label{sec_app:hinge_inconsistent}
\begin{lemma}
    \label{lem:hinge_plus}
    Given constants $a, b > 0$, the function $F(t) = b [1 + t]_+ + a [1 - t]_+$ is strictly increasing when $t > 1$. 
\end{lemma}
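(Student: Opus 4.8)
The plan is to resolve the two positive-part terms explicitly on the region $t > 1$ and observe that what remains is an affine function with a strictly positive slope. First I would note that for every $t > 1$ we have $1 + t > 2 > 0$, hence $[1+t]_+ = 1 + t$; and $1 - t < 0$, hence $[1-t]_+ = 0$. Substituting these two identities into the definition of $F$ gives $F(t) = b(1+t)$ for all $t \in (1, \infty)$.

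With this closed form in hand, the conclusion is immediate: since $b > 0$, the map $t \mapsto b(1+t)$ is strictly increasing, so for any $1 < t_1 < t_2$ we get $F(t_1) = b(1+t_1) < b(1+t_2) = F(t_2)$, which is exactly the assertion of the lemma. (Equivalently, $F$ is differentiable on $(1,\infty)$ with $F'(t) = b > 0$.)

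There is essentially no obstacle here: the only subtlety is the kink of the hinge function $[\cdot]_+$ at the points where its argument vanishes, i.e.\ at $t = 1$ (for the second term) and at $t = -1$ (for the first term). But the lemma restricts attention to the open interval $t > 1$, on which both arguments keep a fixed sign, so neither kink is active and the piecewise definition collapses to the single linear branch $b(1+t)$. I would therefore not need any case analysis beyond the one sign check above. This lemma is the building block used later in the construction in Appendix~\ref{sec_app:hinge_inconsistent}, where $F$ captures the contribution to the conditional surrogate risk of swapping a pair of scores straddling the truncation boundary.
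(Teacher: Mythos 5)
Your proof is correct and follows the same route as the paper: on $t>1$ the hinge terms resolve to $[1+t]_+ = 1+t$ and $[1-t]_+ = 0$, so $F(t)=b(1+t)$, which is strictly increasing since $b>0$. Your write-up merely spells out the sign checks that the paper leaves implicit.
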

\begin{proof}
    It is clear that $F(t) = b (1 + t)$ when $t > 1$, then the proof ends.
\end{proof}

Now, we are ready to complete the proof. The main idea is to construct a non-optimal score $s$ for any Bayes optimal score $\boldsymbol{s}^*$ and further show that the surrogate risk gap $\mathcal{R}_K^{\ell}(\boldsymbol{s}^*, \eta) - \mathcal{R}_K^{\ell}(\boldsymbol{s}, \eta) > 0$ for any $\boldsymbol{s}^*$. The rest proof consists of two steps: (1) remove the term $[\cdot]_+$ in the hinge loss by showing that a necessary condition for the infimum of the gap is $s^*_{y} - s^*_{k} \le 1$ for any $y, k \le K + 1$; (2) show that the gap is not smaller than zero, and the equality holds only when $\boldsymbol{s}^* = \boldsymbol{s}$.
\hingenotconsistent*
\begin{proof}
    Without loss of generality, we assume $\eta_1 > \cdots > \eta_C$, where we omit $\boldsymbol{x}$ for the sake of conciseness. For any $\boldsymbol{s}^*$ such that $\mathsf{RP}(\boldsymbol{s}^*, \eta)$, we can construct $s$ such that $s_1 = \cdots = s_{K+1} = s^*_{[K+1]}$ and $s_i = s^*_i$ for any $i \in \{K + 2, \cdots, C\}$. Then, the proof ends if 
    $$
        \mathcal{R}_K^{\ell}(\boldsymbol{s}^*, \eta) - \mathcal{R}_K^{\ell}(\boldsymbol{s}, \eta)  > 0.
    $$
    To show this, we first remove the term $[\cdot]_+$. Note that 
    $$\begin{aligned}
        K \cdot \mathcal{R}_K^{\ell}(\boldsymbol{s}, \eta) & = (K + 1)\sum_{y=1}^{K+1}\eta_y + \sum_{y = K + 2}^{C} \eta_y \sum_{k=1}^{K+1} [1 + s_{[k]} - s_y]_+ \\
        & = (K + 1)\sum_{y=1}^{K+1}\eta_y + (K + 1) \sum_{y = K + 2}^{C} \eta_y [1 + s^*_{[K+1]} - s^*_y]_+ \\
        & = \underbrace{(K + 1)\sum_{y=1}^{K+1}\eta_y}_{(I)} + \underbrace{(K + 1) \sum_{y = K + 2}^{C} \eta_y \left(1 + s^*_{K+1} - s^*_y\right)}_{(II)} \\
        K \cdot \mathcal{R}_K^{\ell}(\boldsymbol{s}^*, \eta) & = \underbrace{\sum_{y=1}^{K+1}\eta_y \sum_{k=1}^{K+1} [1 + s^*_{k} - s^*_y]_+}_{(III)} + \underbrace{\sum_{y = K + 2}^{C} \eta_y \sum_{k=1}^{K+1} \left( 1 + s^*_k - s^*_y \right)}_{(IV)} \\
    \end{aligned}$$
    On one hand,
    \begin{equation}
        \label{eq:IV_II}
        (IV) - (II)  = \sum_{y = K + 2}^{C} \eta_y \sum_{k=1}^{K+1} \left( s^*_{k} - s^*_{K+1} \right)
    \end{equation}
    is increasing as $\Delta_{y, k} := s^*_{y} - s^*_{k}$ increases when $y < k \le K + 1$. On the other hand, according to Lem.\ref{lem:hinge_plus},
    \begin{equation}
        \label{eq:III}
        (III)  =  \sum_{y=1}^{K+1} \eta_y + \sum_{y=1}^{K+1} \sum_{k = y + 1}^{K+1} \left( \eta_y  [1 + s^*_k - s^*_y]_+ + \eta_k  [1 + s^*_y - s^*_k]_+  \right).
    \end{equation}
    is also increasing \textit{w.r.t.} $\Delta_{y, k}$ when $\Delta_{y, k} > 1$. Combining Eq.(\ref{eq:IV_II}) and Eq.(\ref{eq:III}), we conclude that if exists $y < k \le K + 1$ such that $\Delta_{y, k} > 1$, then 
    $$
        \mathcal{R}_K^{\ell}(\boldsymbol{s}^*, \eta) - \mathcal{R}_K^{\ell}(\boldsymbol{s}, \eta) > \inf_{\boldsymbol{s}^*: \mathsf{RP}(\boldsymbol{s}^*, \eta)} \left[ \mathcal{R}_K^{\ell}(\boldsymbol{s}^*, \eta) - \mathcal{R}_K^{\ell}(\boldsymbol{s}, \eta) \right].
    $$  
    Thus, we next assume $\Delta_{y, k} \le 1$ for any $y, k \le K + 1$. In this case, for any $y \le K$ we have 
    $$\begin{aligned}
        \frac{ \partial \left[ \mathcal{R}_K^{\ell}(\boldsymbol{s}^*, \eta) - \mathcal{R}_K^{\ell}(\boldsymbol{s}, \eta) \right] }{\partial s^*_y} & = - \eta_y + \frac{1}{K} \sum_{k=1, k \neq y}^{C} \eta_k \\
        & = - \eta_y + \frac{1}{K} (1 - \eta_y) \\
        & > - \frac{1}{k+1} + \frac{1}{K} (1 - \frac{1}{K+1}) = 0
    \end{aligned}$$
    Note that $s^*_1 > \cdots > s^*_K > s^*_{K+1}$, which means:
    $$
        \mathcal{R}_K^{\ell}(\boldsymbol{s}^*, \eta) - \mathcal{R}_K^{\ell}(\boldsymbol{s}, \eta) \ge \lim_{s^*_y \to s^*_{[K+1]}, y \le K} \left[ \mathcal{R}_K^{\ell}(\boldsymbol{s}^*, \eta) - \mathcal{R}_K^{\ell}(\boldsymbol{s}, \eta) \right] = 0.
    $$
    The equality holds only when $s^*_y = s^*_{[K+1]}, y \le K$, which contradicts with the fact $\mathsf{RP}(\boldsymbol{s}^*, \eta)$. Thus, we have $\mathcal{R}_K^{\ell}(\boldsymbol{s}^*, \eta) > \mathcal{R}_K^{\ell}(\boldsymbol{s}, \eta)$, which completes the proof.
\end{proof}

\section{Generalization Analysis}
\subsection{Lipschitz Property of the Loss Functions (Proof of Thm. \ref{thm:for_lip} and Corollary \ref{coll:lipschitz_surrogate})}
\label{sec_app:lip_surrogate}

\conditionoflipschitz*
\begin{proof}
    According to definition of $L_K$, we have
    $$\begin{aligned}
        & \left| L_K(\boldsymbol{s}, y) - L_K(\boldsymbol{s}', y) \right|\\
        & = \frac{1}{K} \left| \sum_{k=1}^{K+1} \ell\left( s_{y} - s_{[k]} \right) - \sum_{k=1}^{K+1} \ell\left( s'_{y} - s'_{[k]} \right) \right| \\
        & = \frac{1}{K} \left| \sum_{k=1}^{K+1} \ell\left( s_{y} - \boldsymbol{s} \right)_{[k]} - \sum_{k=1}^{K+1} \ell\left( s'_{y} - \boldsymbol{s}'\right)_{[k]} \right| \\
        & = \frac{1}{K} \left| \max_{k \le K+1} \sum_{k=1}^{K+1} \ell\left( s_{y} - s_{k} \right) - \max_{k \le K+1} \sum_{k=1}^{K+1} \ell\left( s'_{y} - s'_{k} \right) \right| \\
        & \le \frac{1}{K} \max_{k \le K+1} \left| \sum_{k=1}^{K+1} \left[ \ell\left( s_{y} - s_{k} \right) - \ell\left( s'_{y} - s'_{k} \right) \right] \right| \\
        & \le \frac{1}{K} \max_{k \le K+1} \sum_{k=1}^{K+1} \left| \ell\left( s_{y} - s_{k} \right) - \ell\left( s'_{y} - s'_{k} \right) \right| \\
    \end{aligned}$$ 
    The second equality holds since $\ell$ is strictly decreasing; the last equality holds since
    $$\sum_{k=1}^{K}t_{[k]} = \max_{k \le K} \sum_{k=1}^{K}t_{k}, \forall \boldsymbol{t} \in \mathbb{R}^{C};$$
    The first inequality holds since
    \begin{equation}
        \label{eq:triangle_inequality}
        \left|\max \left\{a_{1}, \ldots, a_{K}\right\}-\max \left\{b_{1}, \ldots, b_{K}\right\}\right| \le \max \left\{\left|a_{1}-b_{1}\right|, \ldots,\left|a_{K}-b_{K}\right|\right\}, \quad \forall \boldsymbol{a}, \boldsymbol{b} \in \mathbb{R}^{K}
    \end{equation}
    Since $\ell$ is $L_\ell$-Lipschitz continuous, the last term is bounded by 
    $$\begin{aligned}
        & \frac{ L_\ell }{K} \max_{k \le K+1} \sum_{k=1}^{K+1} \left| \left( s_{k} - s'_{k} \right) - \left( s_{y} - s'_{y} \right) \right| \\
        & \le \frac{ L_\ell }{K} \max_{k \le K+1} \sum_{k=1}^{K+1} \left| \left( s_{k} - s'_{k} \right) \right| + \frac{ L_\ell (K+1) }{K} \left| s_{y} - s'_{y} \right|  \\
        & \le \frac{ L_\ell \sqrt{K+1} }{K} \max_{k \le K+1} \left[ \sum_{k=1}^{K+1} \left( s_{k} - s'_{k} \right)^2 \right]^{\frac{1}{2}} + \frac{ L_\ell (K+1) }{K} \left| s_{y} - s'_{y} \right|  \\
        & \le \frac{ L_\ell \sqrt{K+1} }{K} \left[ \sum_{k=1}^{C} \left( s_{k} - s'_{k} \right)^2 \right]^{\frac{1}{2}} + \frac{ L_\ell (K+1) }{K} \left| s_{y} - s'_{y} \right|  \\
    \end{aligned}$$ 
    Then, the proof ends according to Def.\ref{ass:lipschitz}.
\end{proof}

To obtain Corollary \ref{coll:lipschitz_surrogate}, the following lemma is necessary.
\begin{lemma}[Lipschitz Constant for Softmax Function \cite{DBLP:journals/corr/abs-2107-13171}]
    \label{lem:softmax}
    Given $\boldsymbol{t} \in \mathbb{R}^{C}$, the softmax function $$ \mathsf{soft}_i(\boldsymbol{t}) = \frac{t_i}{\sum_{j=1}^{C}\exp(t_j)} $$ is $\frac{\sqrt{2}}{2}$-Lipschitz continuous \textit{w.r.t.} vector $\ell_2$ norm. 
\end{lemma}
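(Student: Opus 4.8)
The plan is to reduce the claim to a uniform bound on the operator norm of the Jacobian of the softmax map $\sigma := \mathsf{soft}$, and then to recognise that Jacobian as the covariance matrix of a categorical distribution. Since $\sigma:\mathbb{R}^C\to\mathbb{R}^C$ is continuously differentiable and $\mathbb{R}^C$ is convex, the mean value inequality applied along the segment $[\boldsymbol t,\boldsymbol t']$ gives $\|\sigma(\boldsymbol t)-\sigma(\boldsymbol t')\|_2\le\big(\sup_{\boldsymbol u\in\mathbb{R}^C}\|J_\sigma(\boldsymbol u)\|_{2\to2}\big)\,\|\boldsymbol t-\boldsymbol t'\|_2$, so it suffices to show $\|J_\sigma(\boldsymbol u)\|_{2\to2}\le\tfrac{\sqrt2}{2}$ for every $\boldsymbol u\in\mathbb{R}^C$.

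First I would compute $\partial\sigma_i/\partial t_j=\sigma_i(\delta_{ij}-\sigma_j)$, i.e.\ $J_\sigma(\boldsymbol u)=\operatorname{diag}(\boldsymbol p)-\boldsymbol p\boldsymbol p^\top$ with $\boldsymbol p=\sigma(\boldsymbol u)$ a probability vector. This matrix is symmetric, and for any $\boldsymbol v\in\mathbb{R}^C$ one has $\boldsymbol v^\top J_\sigma(\boldsymbol u)\boldsymbol v=\sum_i p_i v_i^2-\big(\sum_i p_i v_i\big)^2=\operatorname{Var}_{I\sim\boldsymbol p}(v_I)\ge0$; hence $J_\sigma(\boldsymbol u)$ is positive semidefinite and $\|J_\sigma(\boldsymbol u)\|_{2\to2}=\lambda_{\max}(J_\sigma(\boldsymbol u))=\max_{\|\boldsymbol v\|_2=1}\operatorname{Var}_{I\sim\boldsymbol p}(v_I)$.

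The crux is then a variance estimate for a bounded random variable. By Popoviciu's inequality $\operatorname{Var}_{I\sim\boldsymbol p}(v_I)\le\tfrac14(\max_i v_i-\min_i v_i)^2$; writing $M=\max_i v_i$, $m=\min_i v_i$ for a unit vector $\boldsymbol v$ (if $M$ and $m$ occur at the same coordinate the variance vanishes and there is nothing to do), we get $M^2+m^2\le\|\boldsymbol v\|_2^2=1$, so $(M-m)^2\le2(M^2+m^2)\le2$ and therefore $\operatorname{Var}_{I\sim\boldsymbol p}(v_I)\le\tfrac12\le\tfrac{\sqrt2}{2}$. Taking suprema over $\boldsymbol u$ and over unit $\boldsymbol v$ finishes the proof, in agreement with \cite{DBLP:journals/corr/abs-2107-13171}.

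I expect the only delicate point to be the last inequality: a crude spread bound $\max_i v_i-\min_i v_i\le2\|\boldsymbol v\|_\infty\le2\|\boldsymbol v\|_2$ would only yield Lipschitz constant $1$, and it is precisely the sharper estimate $(M-m)^2\le2(M^2+m^2)$ that introduces the factor $\sqrt2$ and pushes the constant below $1$. The remaining steps — the Jacobian computation and the identification of $\operatorname{diag}(\boldsymbol p)-\boldsymbol p\boldsymbol p^\top$ with a categorical covariance matrix — are routine.
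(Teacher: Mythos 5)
Your proposal is correct, and in fact it cannot be compared step-by-step with a proof in the paper because the paper offers none: Lemma~\ref{lem:softmax} is imported purely by citation to \cite{DBLP:journals/corr/abs-2107-13171}, so your argument is a self-contained replacement rather than a variant of an internal proof. The route you take is sound throughout: the mean value inequality applies since softmax is $C^1$ on the convex domain $\mathbb{R}^C$; the Jacobian is indeed $\operatorname{diag}(\boldsymbol{p})-\boldsymbol{p}\boldsymbol{p}^\top$ with $\boldsymbol{p}$ a probability vector, so its operator norm is $\max_{\|\boldsymbol{v}\|_2=1}\operatorname{Var}_{I\sim\boldsymbol{p}}(v_I)$; and Popoviciu combined with $(M-m)^2\le 2(M^2+m^2)\le 2\|\boldsymbol{v}\|_2^2$ (valid because $M$ and $m$ sit at distinct coordinates whenever the variance is nonzero) gives $\operatorname{Var}\le \tfrac12$. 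Note that this actually establishes the sharper Lipschitz constant $\tfrac12$, which is essentially tight (e.g. $\boldsymbol{p}=(\tfrac12,\tfrac12)$, $\boldsymbol{v}=(\tfrac{1}{\sqrt2},-\tfrac{1}{\sqrt2})$ gives variance exactly $\tfrac12$); since $\tfrac12\le\tfrac{\sqrt{2}}{2}$, the lemma as stated follows a fortiori, and the downstream constants in Corollary~\ref{coll:lipschitz_surrogate} would only improve. One cosmetic remark: the lemma as printed defines $\mathsf{soft}_i(\boldsymbol{t})=t_i/\sum_j \exp(t_j)$, which is a typo for $\exp(t_i)/\sum_j \exp(t_j)$; your Jacobian computation presumes the correct softmax, which is clearly what is intended.
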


\lipschitzofsurrogate*

\subsubsection{Lipschitz Property of the Square Loss}
\begin{proof}
    It is clear that $\ell_{sq}(t)$ is strictly decreasing in the range $[-1, 1]$, and $\left| \ell'_{sq}(t) \right| \le 4$. Then, the proof ends by Lem.\ref{thm:for_lip} and Lem. \ref{lem:softmax}.
\end{proof}

\subsubsection{Lipschitz Property of the Exp Loss}
\begin{proof}
    It is clear that $\ell_{exp}(t)$ is strictly decreasing in the range $[-1, 1]$, and $\left| \ell'_{exp}(t) \right| \le e$. Then, the proof ends by Thm.\ref{thm:for_lip} and Lem. \ref{lem:softmax}.
\end{proof}

\subsubsection{Lipschitz Property of the Logit Loss}
\begin{proof}
    It is clear that $\ell_{logit}(t)$ is strictly decreasing in the range $[-1, 1]$, and $\left| \ell'_{logit}(t) \right| \le \frac{1}{e \ln 2}$. Then, the proof ends by Lem.\ref{thm:for_lip} and Lem. \ref{lem:softmax}.
\end{proof}

\subsection{Sub-Gaussian Property of Gaussian Complexity (Proof of Proposition \ref{prop:sub_gaussian})}
\label{sec_app:subgaussian}
\subgaussian*
\begin{proof}
    It is clearly $Z$ is a Gaussian distribution with $\mathbb{E}[Z] = 0$. Meanwhile, we have
    $$\begin{aligned}
        Var[Z] & = Var[\frac{1}{\sqrt{nC}} \sum_{i=1}^{n} \sum_{j=1}^{C} (s_{i, j} - s'_{i, j}) g_{i, j} ] \\
        & = \frac{1}{nC} \sum_{i=1}^{n} \sum_{j=1}^{C} (s_{i, j} - s'_{i, j})^2 Var[g_{i, j}]\\
        & = \frac{1}{nC} \sum_{i=1}^{n} \sum_{j=1}^{C} (s_{i, j} - s'_{i, j})^2\\
        & \le \max_{i, j} \left( s_{i, j} - s'_{i, j} \right) ^ 2 \\
        & = d_{\infty, \mathcal{S}}^2 (\boldsymbol{s}, \boldsymbol{s}'),
    \end{aligned}$$
    where the second equality holds since if ${g_{i, j}}, i \in [n], j \in [C] $ are independent from each other, we have
    $$
        Var[\sum_i g_i t_i] = \sum_i t_i^2 Var[g_i];
    $$
    the third equality is due to $Var[g_{i, j}] = 1$. Further, the moment-generating function of $Z$ is
    $$
        \mathbb{E}[e ^ {\lambda Z}] = \mathbb{E}[e ^ {\lambda (Z - \mathbb{E}[Z])}] \le e^{\frac{\lambda^2}{2} Var[Z]} \le e^{\frac{\lambda^2}{2} d_{\infty, \mathcal{S}}^2 (\boldsymbol{s}, \boldsymbol{s}')}
    $$ 
    According to Def.\ref{def:subgaussian}, we could conclude that $Z$ is a sub-Gaussian process with metric being $d_{\infty, \mathcal{S}} (\boldsymbol{s}, \boldsymbol{s}')$.
\end{proof}

\subsection{Generalization Bound of Convolutional Neural Networks (Proof of Thm. \ref{thm:final_bound})}
\label{sec_app:bound_cnn}
According to Proposition \ref{prop:generalization_gaussian} and Proposition \ref{prop:chaining_gaussian}, to obtain the final generalization bound, our task is to bound the covering number of $\mathcal{F}_{\beta, \nu}$. To this end, we first present the Lipschitz property of $\widetilde{\mathcal{F}}_{\beta, \nu}$ in Corollary \ref{coll:tilde_F_beta_nu}. Then, the practical generalization bound for convolutional neural networks is obtained by Lem.\ref{lem:metric_entropy} and Proposition \ref{prop:chaining_bound}.
\subsubsection{The Lipschitz Property of $\widetilde{\mathcal{F}}_{\beta, \nu}$}
\begin{definition}
    We say a function set $\mathcal{F}$ is $(B, d)$-Lipschitz parameterized if there is a norm $||\cdot||$ in $\mathbb{R}^d$ and a mapping $\phi$ from the unit ball \textit{w.r.t.} $||\cdot||$ in $\mathbb{R}^d$ to $\mathcal{F}$ such that: for any $\boldsymbol{x} \in \mathcal{X}$, and $\theta$ and $\theta'$ with $||\theta|| \le 1$ and $||\theta'|| \le 1$, the following inequality holds:
    $$
        | \phi(\theta)(\boldsymbol{x}) - \phi(\theta')(\boldsymbol{x}) | \le B ||\theta - \theta'||
    $$
\end{definition}

\begin{lemma}[\cite{DBLP:conf/iclr/LongS20}]
    \label{lem:F_beta_nu}
    $\mathcal{F}_{\beta, \nu}$ is $(B_{\beta, \nu, \chi}, W)$-Lipschitz parameterized, where $B_{\beta, \nu, \chi} := \chi \beta (1 + \nu + \beta / L_{a}) ^ {L_{a}}$, $L_{a} = L_{c} + L_{f}$, and $W$ is the number of parameters in $\mathcal{F}_{\beta, \nu}$.
\end{lemma}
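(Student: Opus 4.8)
The plan is to realize Lemma~\ref{lem:F_beta_nu} as an instance of the perturbation analysis of \cite{DBLP:conf/iclr/LongS20}: the substance is to exhibit a single chart $\phi$ from the unit ball of a suitable norm onto $\mathcal{F}_{\beta,\nu}$ and to control how the network output moves under a parameter perturbation. First I would unify the two types of layers. Because convolution with zero-padding is linear, each layer $l\in\{1,\dots,L_a\}$ (with $L_a=L_c+L_f$) acts on its input through an operator matrix $W^{(l)}$, namely $\mathsf{mt}(KN^{(l)})$ for a convolutional layer and $V^{(l)}$ for a fully-connected layer, followed by a $1$-Lipschitz map $\sigma_l$ that absorbs the activation and the optional pooling. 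Thus $f_\Theta(\boldsymbol{x})=\big(\sigma_{L_a}\!\circ W^{(L_a)}\!\circ\cdots\circ\sigma_1\!\circ W^{(1)}\big)(\mathsf{vec}(\boldsymbol{x}))$, and the parameter norm in the hypothesis is exactly the layer-wise spectral sum $\|\Theta-\Theta_0\|=\sum_{l}\|W^{(l)}-W_0^{(l)}\|_2$. I would take this sum to be the norm $\|\cdot\|$ on $\mathbb{R}^{W}$ required by the definition, set $\phi(\theta):=f_{\Theta_0+\beta\theta}$, and note that $\|\theta\|\le1$ corresponds precisely to $\|\Theta-\Theta_0\|\le\beta$, so that $\phi$ maps the unit ball onto $\mathcal{F}_{\beta,\nu}$. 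Since the norm is homogeneous, $\|\Theta-\Theta'\|=\beta\|\theta-\theta'\|$, and it remains to prove the raw bound $|f_\Theta(\boldsymbol{x})-f_{\Theta'}(\boldsymbol{x})|\le \chi(1+\nu+\beta/L_a)^{L_a}\,\|\Theta-\Theta'\|$; the factor $\beta$ then produces $B_{\beta,\nu,\chi}$.

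The core estimate I would obtain through two standard ingredients. The \emph{forward bound}: using $\|\mathsf{vec}(\boldsymbol{x})\|\le\chi$, the $1$-Lipschitzness of each $\sigma_l$ (which also fixes the origin), and $\|W^{(l)}\|_2\le\|W_0^{(l)}\|_2+\delta_l\le 1+\nu+\delta_l$ with $\delta_l:=\|W^{(l)}-W_0^{(l)}\|_2$, an induction on $l$ shows the input to layer $l$ has norm at most $\chi\prod_{j<l}(1+\nu+\delta_j)$. The \emph{head bound}: the map from the layer-$l$ input to the output is Lipschitz with constant $\prod_{j>l}\|W^{(j)}\|_2$ (a coordinate read-out $\boldsymbol{e}_y^\top$, being $1$-Lipschitz, folds harmlessly into the last $\sigma_{L_a}$). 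Combining these, the sensitivity of $f_\Theta(\boldsymbol{x})$ to a change in the single block $W^{(l)}$ is at most $\chi\prod_{j\ne l}\|W^{(j)}\|_2$; because this block-gradient is rank one, its nuclear norm --- the quantity dual to the spectral norm used in the $l$-th block of $\|\cdot\|$ --- equals this same product. Hence the Lipschitz constant of $\Theta\mapsto f_\Theta(\boldsymbol{x})$ with respect to $\|\cdot\|$, evaluated at any fixed $\Theta$ in the ball, is at most $\chi\max_l\prod_{j\ne l}\|W^{(j)}\|_2\le \chi\prod_{j=1}^{L_a}(1+\nu+\delta_j)$.

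It is precisely here that the AM--GM inequality delivers the advertised constant: since $\delta_j\ge0$ and $\sum_j\delta_j\le\beta$,
\[
\prod_{j=1}^{L_a}(1+\nu+\delta_j)\le\Big(\tfrac{1}{L_a}\sum_{j=1}^{L_a}(1+\nu+\delta_j)\Big)^{L_a}\le\big(1+\nu+\beta/L_a\big)^{L_a}.
\]
Because the ball $\{\Theta:\|\Theta-\Theta_0\|\le\beta\}$ is convex, the segment from $\Theta'$ to $\Theta$ stays inside it, so integrating the local Lipschitz bound along this segment --- which only ever evaluates a single configuration at a time, thereby sidestepping the mixed-configuration products of a naive layer-by-layer telescoping --- yields $|f_\Theta(\boldsymbol{x})-f_{\Theta'}(\boldsymbol{x})|\le\chi(1+\nu+\beta/L_a)^{L_a}\|\Theta-\Theta'\|$, which is the claim after rescaling by $\beta$.

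I expect the main obstacle to be the rigorous treatment of the single-block sensitivity step for non-smooth components (e.g.\ ReLU or max-pooling), where the segment-integration argument must be justified via Clarke subgradients or a smoothing approximation, together with the precise identification of the relevant block-gradient norm as nuclear (dual to spectral). The operator-norm bookkeeping for convolutions through $\mathsf{mt}(\cdot)$ and the AM--GM step are comparatively routine. Since the estimate is exactly the content borrowed from \cite{DBLP:conf/iclr/LongS20}, in the write-up I would verify that our hypotheses --- spectral norm $\le 1+\nu$ at initialization, distance-from-initialization budget $\beta$, and $1$-Lipschitz activations and pooling --- match theirs, and then invoke their bound with the constant specialized to $B_{\beta,\nu,\chi}$.
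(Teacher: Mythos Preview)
The paper does not prove this lemma at all: it is stated with a citation to \cite{DBLP:conf/iclr/LongS20} and used as a black box, with only the downstream Corollary~\ref{coll:tilde_F_beta_nu} receiving a short argument. Your proposal therefore goes well beyond what the paper does, and what you have written is a faithful and correct reconstruction of the Long--Sedghi perturbation argument: the chart $\phi(\theta)=f_{\Theta_0+\beta\theta}$ with the layerwise spectral-sum norm, the forward/head Lipschitz bounds, the identification of the block-gradient's nuclear norm (dual to spectral) via its rank-one structure, the AM--GM step, and integration along the segment inside the convex ball are exactly the ingredients of that reference. The caveats you flag --- Clarke subgradients for ReLU/max-pool and the dual-norm bookkeeping --- are the right places to be careful, but none of them is a gap; in a write-up it would be entirely appropriate, as the paper does, to simply cite \cite{DBLP:conf/iclr/LongS20} after checking the hypotheses match.
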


\begin{corollary}
    \label{coll:tilde_F_beta_nu}
    $\widetilde{\mathcal{F}}_{\beta, \nu}$ is also $(B_{\beta, \nu, \chi}, W)$-Lipschitz parameterized.
\end{corollary}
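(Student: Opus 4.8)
The plan is to recycle the parameterization that Lemma~\ref{lem:F_beta_nu} already supplies for $\mathcal{F}_{\beta,\nu}$ and simply postcompose it with coordinate extraction. Let $\phi$ be the map from the unit ball of $(\mathbb{R}^{W},\|\cdot\|)$ onto $\mathcal{F}_{\beta,\nu}$ whose existence is asserted by Lemma~\ref{lem:F_beta_nu}, so that for all admissible $\theta,\theta'$ and all inputs $\boldsymbol{x}$ one has $|\phi(\theta)(\boldsymbol{x})-\phi(\theta')(\boldsymbol{x})|\le B_{\beta,\nu,\chi}\|\theta-\theta'\|$. I would define $\tilde\phi$ on the \emph{same} unit ball by $\tilde\phi(\theta)(\boldsymbol{x},y):=\boldsymbol{e}_y^{T}\phi(\theta)(\boldsymbol{x})$. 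By the definition of $\widetilde{\mathcal{F}}_{\beta,\nu}$, the image of $\tilde\phi$ is exactly $\widetilde{\mathcal{F}}_{\beta,\nu}$; note that $y$ enters as an argument of the function rather than as a parameter, so the parameter dimension stays $W$ and the norm on $\mathbb{R}^{W}$ is unchanged. Hence the only thing left is to verify the Lipschitz estimate for $\tilde\phi$ with the \emph{same} constant $B_{\beta,\nu,\chi}$.

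For this, for any $\theta,\theta'$ in the unit ball and any $(\boldsymbol{x},y)\in\mathcal{X}\times\mathcal{Y}$ I would chain
$$\bigl|\tilde\phi(\theta)(\boldsymbol{x},y)-\tilde\phi(\theta')(\boldsymbol{x},y)\bigr|=\bigl|\boldsymbol{e}_y^{T}\bigl(\phi(\theta)(\boldsymbol{x})-\phi(\theta')(\boldsymbol{x})\bigr)\bigr|\le\bigl\|\phi(\theta)(\boldsymbol{x})-\phi(\theta')(\boldsymbol{x})\bigr\|\le B_{\beta,\nu,\chi}\|\theta-\theta'\|,$$
where the first inequality is just $|v_y|\le\|v\|$ (equivalently Cauchy--Schwarz, since $\|\boldsymbol{e}_y\|_2=1$) and the last is Lemma~\ref{lem:F_beta_nu}. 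This is precisely the assertion that $\widetilde{\mathcal{F}}_{\beta,\nu}$ is $(B_{\beta,\nu,\chi},W)$-Lipschitz parameterized, which closes the argument.

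The only point that needs care — and the one I would flag as the potential obstacle — is reconciling the norm appearing inside Lemma~\ref{lem:F_beta_nu} with the scalar absolute value in the definition of $(B,d)$-Lipschitz parameterization: if the network output there is measured coordinatewise, there is nothing to do beyond reindexing by $y$; if it is measured in the $\ell_2$ norm (as the operator-norm control on the convolution and fully-connected layers naturally produces), then the projection $\boldsymbol{e}_y^{T}$ is $1$-Lipschitz w.r.t.\ that norm and absorbs the coordinate selection at no cost. In either reading no constant changes, so the substantive work was already done in Lemma~\ref{lem:F_beta_nu}, and Corollary~\ref{coll:tilde_F_beta_nu} then feeds the covering-number input required, together with Proposition~\ref{prop:chaining_gaussian} and Proposition~\ref{prop:generalization_gaussian}, to derive Theorem~\ref{thm:final_bound}.
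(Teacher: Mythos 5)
Your proposal is correct and takes essentially the same route as the paper: the paper also defines $\widetilde{\phi} = \boldsymbol{e}_y \circ \phi$ from the parameterization in Lemma~\ref{lem:F_beta_nu} and bounds $| \boldsymbol{e}_y^T(\phi(\theta)(\boldsymbol{x}) - \phi(\theta')(\boldsymbol{x})) |$ by the $1$-Lipschitzness of the coordinate projection, keeping the constant $B_{\beta,\nu,\chi}$ and dimension $W$ unchanged. Your explicit remark about reconciling the scalar absolute value with the vector norm is a point the paper's proof glosses over (it writes $|\boldsymbol{e}_y|\cdot|\phi(\theta)(\boldsymbol{x})-\phi(\theta')(\boldsymbol{x})|$ without specifying the norm), but the substance is identical.
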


\begin{proof}
    According to Lem.\ref{lem:F_beta_nu}, there is a norm $||\cdot||$ in $\mathbb{R}^d$ and a mapping $\phi$ from the unit ball \textit{w.r.t.} $||\cdot||$ in $\mathbb{R}^d$ to $\mathcal{F}_{\beta, \nu}$ such that: for any $\boldsymbol{x} \in \mathcal{X}$, and $\theta$ and $\theta'$ with $||\theta|| \le 1$ and $||\theta'|| \le 1$, the following inequality holds:
    $$| \phi(\theta)(\boldsymbol{x}) - \phi(\theta')(\boldsymbol{x}) | \le B_{\beta, \nu, \chi} ||\theta - \theta'||.$$
    Let $\widetilde{\phi}  = \boldsymbol{e}_y \circ \phi$, and we have 
    $$\begin{aligned}
        | \widetilde{\phi}(\theta)(\boldsymbol{z}) - \widetilde{\phi}(\theta')(\boldsymbol{z}) | & = | \boldsymbol{e}_y \phi(\theta)(\boldsymbol{x}) - \boldsymbol{e}_y \phi(\theta')(\boldsymbol{x}) | \\
        & \le | \boldsymbol{e}_y | \cdot | \phi(\theta)(\boldsymbol{x}) - \phi(\theta')(\boldsymbol{x}) | \\
        & \le B_{\beta, \nu, \chi} ||\theta - \theta'||
    \end{aligned}$$
    Then the proof ends.
\end{proof}

\subsubsection{The Practical Generalization Bound of $\widetilde{\mathcal{F}}_{\beta, \nu}$}
The following lemma bridges $(B, d)$-Lipschitz parameterized and covering number, and Proposition \ref{prop:chaining_bound} further provides the upper bound of the Gaussian complexity in Eq.(\ref{eq:bound_lip}).
\begin{lemma}[Upper bound of Covering Number \cite{wainwright2019high}]
    \label{lem:metric_entropy}
    If the score function set $\mathcal{F}$ is $(B, d)$-Lipschitz parameterized, then the metric entropy defined on the metric space $(\mathcal{F}, d_{\infty, \mathcal{S}})$ is bounded:
    \begin{equation}
        \log \mathfrak{C}(\mathcal{F}, \epsilon, d_{\infty, \mathcal{S}}) \le d \log{\frac{3B}{\epsilon}}
    \end{equation}
\end{lemma}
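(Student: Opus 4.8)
The plan is to reduce covering $\mathcal{F}$ in the functional metric $d_{\infty,\mathcal{S}}$ to covering the parameter ball in $\mathbb{R}^d$, exploiting the defining property of a $(B,d)$-Lipschitz parameterization. First I would recall that, by hypothesis, there is a norm $\|\cdot\|$ on $\mathbb{R}^d$ and a map $\phi$ from the unit ball $\mathbb{B} := \{\theta : \|\theta\| \le 1\}$ onto $\mathcal{F}$ with $|\phi(\theta)(\boldsymbol{x}) - \phi(\theta')(\boldsymbol{x})| \le B\|\theta - \theta'\|$ holding \emph{uniformly} in $\boldsymbol{x}$. The key observation is that this uniform bound survives the max-over-the-sample operation defining the metric: for any two parameters $\theta, \theta'$,
$$
    d_{\infty,\mathcal{S}}(\phi(\theta), \phi(\theta')) = \max_{\boldsymbol{z} \in \mathcal{S}} \left| \phi(\theta)(\boldsymbol{z}) - \phi(\theta')(\boldsymbol{z}) \right| \le B\|\theta - \theta'\|,
$$
so a fine enough cover in parameter space transfers directly to an $\epsilon$-cover in $d_{\infty,\mathcal{S}}$.

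Second, I would set $\delta := \epsilon / B$ and invoke the standard volumetric bound on the covering number of a unit ball. For any norm on $\mathbb{R}^d$ and any $\delta \le 1$, the ball $\mathbb{B}$ admits a $\delta$-cover of cardinality at most $(1 + 2/\delta)^d \le (3/\delta)^d$. This follows from a volume-ratio argument: a maximal $\delta$-separated subset of $\mathbb{B}$ is automatically a $\delta$-cover, and the disjoint open balls of radius $\delta/2$ centered at its points are contained in the ball of radius $1 + \delta/2$, so comparing volumes caps the number of centers. Let $\{\theta_i\}_{i=1}^m$ denote such a cover, so $m \le (3/\delta)^d = (3B/\epsilon)^d$.

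Third, I would push this cover through $\phi$. For an arbitrary $f = \phi(\theta) \in \mathcal{F}$, choose $\theta_i$ with $\|\theta - \theta_i\| \le \delta = \epsilon/B$; the displayed Lipschitz estimate then gives $d_{\infty,\mathcal{S}}(f, \phi(\theta_i)) \le B\delta = \epsilon$. Hence $\{\phi(\theta_i)\}_{i=1}^m$ is an $\epsilon$-cover of $\mathcal{F}$ in $d_{\infty,\mathcal{S}}$, so by definition $\mathfrak{C}(\mathcal{F}, \epsilon, d_{\infty,\mathcal{S}}) \le m \le (3B/\epsilon)^d$, and taking logarithms yields $\log \mathfrak{C}(\mathcal{F}, \epsilon, d_{\infty,\mathcal{S}}) \le d \log \frac{3B}{\epsilon}$, as claimed.

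The main obstacle is the clean verification of the volumetric covering-number estimate and confirming it is used only in the relevant regime $\epsilon \le B$ (equivalently $\delta \le 1$); for $\epsilon > B$ the bound is vacuous since $\mathcal{F}$ is then covered by a single ball, and the stated inequality still holds because its right-hand side is positive. I would also emphasize that the argument needs nothing beyond the uniform-in-$\boldsymbol{x}$ scalar Lipschitz property, which is exactly what $d_{\infty,\mathcal{S}}$ measures, so no further structural assumption on $\mathcal{F}$ enters.
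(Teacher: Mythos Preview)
Your argument is correct and is the standard one: pull back the $d_{\infty,\mathcal{S}}$-cover to a cover of the unit ball in parameter space via the uniform Lipschitz bound, then apply the volumetric estimate $\mathfrak{C}(\mathbb{B},\delta,\|\cdot\|)\le(3/\delta)^d$. Note, however, that the paper does not actually prove this lemma; it is stated as a cited result from \cite{wainwright2019high} and used as a black box in the proof of Proposition~\ref{prop:chaining_bound}, so there is no paper proof to compare against. Your write-up supplies exactly the missing justification and matches the textbook derivation.
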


\begin{proposition}
    \label{prop:chaining_bound}
    Given the score function $f \in \widetilde{\mathcal{F}}$, if $\widetilde{\mathcal{F}}$ is $(B, d)$-Lipschitz parameterized, then we have:
    \begin{equation}
        \mathfrak{G}_{\tilde{\mathcal{S}}}(\widetilde{\mathcal{F}}) \precsim \mathcal{O}\left(\frac{d \log{BnC}}{\sqrt{nC}}\right),
    \end{equation}
    and 
    \begin{equation}
        \mathfrak{G}_{\tilde{\mathcal{S}}'}(\widetilde{\mathcal{F}}) \precsim \mathcal{O}\left(\frac{d \log{Bn}}{\sqrt{n}}\right),
    \end{equation}
    where $\mathcal{O}$ is the complexity.
\end{proposition}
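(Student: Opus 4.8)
The plan is to feed two ingredients already established in the excerpt into one another: the sub-Gaussian control of the Gaussian-complexity process (Proposition~\ref{prop:sub_gaussian}) together with the Dudley-type chaining inequality it yields (Proposition~\ref{prop:chaining_gaussian}), and the metric-entropy estimate for Lipschitz-parameterized function classes (Lemma~\ref{lem:metric_entropy}). Once these are combined, the whole proof reduces to plugging the covering-number bound into the chaining integral and estimating that integral; there is no genuinely new probabilistic step.

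First I would bound the metric entropy. Since $\widetilde{\mathcal{F}}$ is $(B,d)$-Lipschitz parameterized by hypothesis, Lemma~\ref{lem:metric_entropy} gives, for every $\epsilon>0$,
$$
    \log \mathfrak{C}(\widetilde{\mathcal{F}}, \epsilon, d_{\infty, \mathcal{S}}) \le d \log \frac{3B}{\epsilon}.
$$
I would also record that, because the scores fed to the loss are softmax-normalized, $d_{\infty, \mathcal{S}}(\boldsymbol{s},\boldsymbol{s}') \le 1$, so the diameter of $\widetilde{\mathcal{F}}$ under $d_{\infty,\mathcal{S}}$ is at most $1$; this is exactly what justifies using $1$ as the upper limit of integration in Proposition~\ref{prop:chaining_gaussian}.

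Next I would substitute this estimate into Proposition~\ref{prop:chaining_gaussian}. Writing $\sqrt{\log((3B/\epsilon)^{d})}=\sqrt{d}\,\sqrt{\log(3B/\epsilon)}$, the chaining bound becomes
$$
    \mathfrak{G}_{\tilde{\mathcal{S}}}(\widetilde{\mathcal{F}}) \le \inf_{\alpha \ge 0}\left( C_1 \alpha + \frac{C_2 \sqrt{d}}{\sqrt{nC}}\int_{\alpha}^{1}\sqrt{\log \frac{3B}{\epsilon}}\, d\epsilon\right).
$$
Since $\epsilon\mapsto \log(3B/\epsilon)$ is decreasing, $\int_{\alpha}^{1}\sqrt{\log(3B/\epsilon)}\, d\epsilon \le \sqrt{\log(3B/\alpha)}$, and taking $\alpha$ of order $1/(nC)$ makes the $C_1\alpha$ term lower-order and collapses the bound to $\mathcal{O}\!\left(\sqrt{d\log(BnC)}/\sqrt{nC}\right)$. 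Finally, since $d=W\ge 1$ and $B_{\beta,\nu,\chi}\,nC\ge e$ in the regime of interest, $\sqrt{d\log(BnC)}\le d\log(BnC)$, which gives the first claimed bound $\mathfrak{G}_{\tilde{\mathcal{S}}}(\widetilde{\mathcal{F}})\precsim \mathcal{O}(d\log(BnC)/\sqrt{nC})$. The second bound, for $\mathfrak{G}_{\tilde{\mathcal{S}}'}(\widetilde{\mathcal{F}})$, follows from the identical computation applied to the Gaussian-complexity process indexed by the $n$ original samples rather than by the $nC$ enlarged samples of $\widetilde{\mathcal{S}}$: the variance computation behind Proposition~\ref{prop:sub_gaussian} holds verbatim with $\sqrt{n}$ replacing $\sqrt{nC}$ and the same metric $d_{\infty,\mathcal{S}}$, so the same covering-number bound together with Dudley's integral with $\alpha$ of order $1/n$ yields $\mathcal{O}(d\log(Bn)/\sqrt{n})$.

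I expect the only obstacle to be bookkeeping rather than a real difficulty: one must keep straight which process the chaining inequality is applied to (the $\sqrt{nC}$-normalized one for $\mathfrak{G}_{\tilde{\mathcal{S}}}$ versus the $\sqrt{n}$-normalized one for $\mathfrak{G}_{\tilde{\mathcal{S}}'}$), check that the truncation step keeps the class inside a unit ball so the upper limit $1$ in Dudley's integral is legitimate, and confirm that the deliberately loose step $\sqrt{x}\le x$ costs nothing here. With Proposition~\ref{prop:chaining_bound} in hand, Theorem~\ref{thm:final_bound} then follows at once: Corollary~\ref{coll:tilde_F_beta_nu} supplies $(B,d)=(B_{\beta,\nu,\chi},W)$ for $\widetilde{\mathcal{F}}_{\beta,\nu}$, and inserting the two estimates above into Proposition~\ref{prop:generalization_gaussian}, where the first is multiplied by $C$, produces precisely the $L_1\sqrt{C}$ and $L_2$ terms of the stated bound.
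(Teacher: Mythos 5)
Your proposal is correct and follows essentially the same route as the paper: plug the covering-number bound of Lemma~\ref{lem:metric_entropy} into the Dudley chaining bound of Proposition~\ref{prop:chaining_gaussian}, bound the entropy integral using monotonicity of the integrand, choose a small $\alpha$, and repeat with $\sqrt{n}$ in place of $\sqrt{nC}$ for the second complexity. The only (harmless) differences are bookkeeping: you keep $\sqrt{\log}$ inside the integral and apply the loosening $\sqrt{x}\le x$ at the end with explicit justification, whereas the paper drops the square root immediately, and you take $\alpha\sim 1/(nC)$ versus the paper's $\alpha=1/\sqrt{nC}$, both yielding the same $\mathcal{O}\bigl(d\log(BnC)/\sqrt{nC}\bigr)$ order.
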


\begin{proof}
    Applying Lem. \ref{lem:metric_entropy} to Eq. (\ref{equ:chaining_gaussian}), we have
    $$\begin{aligned}
        \mathfrak{G}_{\tilde{\mathcal{S}}}(\widetilde{\mathcal{F}}) & \le \inf_{\alpha \ge 0} \left( C_1\alpha + \frac{C_2}{\sqrt{nC}} \int_\alpha^{1} d \log{\frac{3B}{\epsilon}} d\epsilon \right) \\
        & \le \inf_{\alpha \ge 0} \left( C_1\alpha + \frac{C_2}{\sqrt{nC}} (1 - \alpha) d \log{\frac{3B}{\alpha}} \right) \\
    \end{aligned}$$
    The last equation holds since $d \log{\frac{3B}{\epsilon}}$ is  non-increasing \textit{w.r.t.} $\epsilon$. By setting $\alpha = \frac{1}{\sqrt{nC}}$, we have 
    \begin{equation}
        \mathfrak{G}_{\tilde{\mathcal{S}}}(\widetilde{\mathcal{F}}) \le \frac{1}{\sqrt{nC}} \left[C_1 + C_{2}d(1 - \frac{1}{\sqrt{nC}}) \log{3B\sqrt{nC}}\right].
    \end{equation}
    Similarly, we have
    \begin{equation}
        \mathfrak{G}_{\mathcal{S}}(\widetilde{\mathcal{F}}) \le \frac{1}{\sqrt{n}} \left[C_1 + C_{2}d(1 - \frac{1}{\sqrt{n}}) \log{3B\sqrt{n}}\right].
    \end{equation}
    Then the proof ends.
\end{proof}

\noindent Finally, the practical generalization bound of the convolutional neural networks we discuss is clear:
\boundcnn*
\begin{proof}
    The proof ends by Proposition \ref{prop:generalization_gaussian}, Proposition \ref{prop:chaining_bound} and Corollary \ref{coll:tilde_F_beta_nu}.
\end{proof}

\end{document}